\DeclareMathAlphabet{\mathpzc}{OT1}{pzc}{m}{it}
\newcommand{\Vtil}{\ensuremath{\tilde{V}}}
\newcommand{\Qvaluesbar}{\widebar{Q}}
\newcommand{\deltaTol}{\ensuremath{\pardelta_{0}}}
\newcommand{\valCov}{\SigMat^\star_{\texttt{val}}(\valuestar)}
\newcommand{\Bmat}{\ensuremath{\mathbf{B}}}
\newcommand{\specfast}{\ensuremath{\varphi_f}}
\newcommand{\specslow}{\ensuremath{\varphi_s}}
\newcommand{\ErrEst}{\ensuremath{\mathscr{E}_\numobs}}
\newcommand{\Vhat}{\valuehat}
\newenvironment{carlist}
 {\begin{list}{$\bullet$}
 {\setlength{\topsep}{0in} \setlength{\partopsep}{0in}
  \setlength{\parsep}{0in} \setlength{\itemsep}{\parskip}
  \setlength{\leftmargin}{0.07in} \setlength{\rightmargin}{0.08in}
  \setlength{\listparindent}{0in} \setlength{\labelwidth}{0.08in}
  \setlength{\labelsep}{0.1in} \setlength{\itemindent}{0in}}}
 {\end{list}}
\newcommand{\bcar}{\begin{carlist}}
\newcommand{\ecar}{\end{carlist}}
\newcommand{\PEbatch}{\ensuremath{N}}
\newcommand{\vfast}{\ensuremath{\hat{\varepsilon}_f}}
\newcommand{\vslow}{\ensuremath{\hat{\varepsilon}_s}}
\newcommand{\doublehold}[1]{\ensuremath{h_{#1}}}
\newcommand{\ErrEstQ}{\ensuremath{\mathscr{E}_\numobs}}
\newcommand{\vhat}{\ensuremath{\widehat{V}}}
\newcommand{\order}{\ensuremath{\mathcal{O}}}
\newcommand{\plaincon}{\ensuremath{c}}
\begin{document}


\begin{center}

  {\bf{\LARGE{Instance-Dependent Confidence and Early Stopping for
        Reinforcement Learning}}}
\vspace*{.2in}

{\large{
\begin{tabular}{ccc}
Koulik Khamaru$^{\dagger, \ddagger}$  Eric Xia$^{\dagger, \ddagger}$ \\ Martin
 J. Wainwright$^{\dagger, \star}$ \quad Michael I. Jordan$^{\dagger,
   \star}$
\end{tabular}
}}

\vspace*{.2in}

\begin{tabular}{c}
Department of Statistics$^\dagger$, and \\ Department of Electrical
Engineering and Computer Sciences$^\star$ \\ UC Berkeley, Berkeley, CA
94720 \\
\end{tabular}

\vspace*{.2in}

\let\thefootnote\relax\footnotetext{$\ddagger$ Eric Xia and Koulik Khamaru contributed equally to this work}

\today

\vspace*{.2in}

\begin{abstract}
  Various algorithms for reinforcement learning (RL) exhibit dramatic
  variation in their convergence rates as a function of problem
  structure. Such problem-dependent behavior is not captured by
  worst-case analyses and has accordingly inspired a growing effort in
  obtaining instance-dependent guarantees and deriving
  instance-optimal algorithms for RL problems. This research has been
  carried out, however, primarily within the confines of theory,
  providing guarantees that explain \textit{ex post} the performance
  differences observed. A natural next step is to convert these
  theoretical guarantees into guidelines that are useful in
  practice. We address the problem of obtaining sharp
  instance-dependent confidence regions for the policy evaluation
  problem and the optimal value estimation problem of an MDP, given
  access to an instance-optimal algorithm.  As a consequence, we
  propose a data-dependent stopping rule for instance-optimal
  algorithms.  The proposed stopping rule adapts to the
  instance-specific difficulty of the problem and allows for early
  termination for problems with favorable structure.
\end{abstract}

\end{center}


\section{Introduction}

Reinforcement learning (RL) refers to a broad class of methods that
are focused on learning how to make (near)-optimal decisions in
dynamic environments.  Although RL-based methods are now being
deployed in various application domains (e.g.,~\citep{tobin2017domain,
  JMLR:v17:15-522, silver2016alphago}), such deployments often lack a
secure theoretical foundation.  Given that RL involves making
real-world decisions, often autonomously, the impact on humans can be
significant, and there is an urgent need to shore up the foundations,
providing practical and actionable guidelines for RL.  A major part of
the challenge is that popular RL algorithms exhibit a variety of
behavior across domains and problem instances and existing methods and
associated guarantees, generally tailored to the worst-case setting,
fail to capture this variety.  One way to move beyond worst-case
bounds is to develop guarantees that adapt to the problem difficulty,
helping to reveal what aspects of an RL problem make it an ``easy'' or
``hard'' problem. Indeed, in recent years, such a research agenda has
begun to emerge and we have gained a refined understanding of the
instance-dependent nature of various reinforcement learning
problems~\citep[e.g.,][]{simchowitz2019non,zanette2019tighter,zanette2019almost,maillard2014hard,khamaru2020PE,pananjady2021instance}.

The broader challenge---and the focus of this paper---is to recognize
that RL involves decision-making under uncertainty, and to develop an
inferential theory for RL problems.  Such theory must not only be
\emph{instance-dependent}, but also \emph{data-dependent}, meaning
that quantities such as confidence intervals should be computable
using available data.  The latter property is not shared by most past
instance-dependent guarantees in RL; with limited exceptions (e.g.,
Theorem 1(a) in the paper~\cite{pananjady2021instance}), most results
from past work depend on population-level objects--such as probability
transition matrices, Bellman variances, or reward function
bounds---that are not known to the user.  Due to this lack of
data-dependence, such results, while theoretically useful, cannot
actually be used by the practitioner in guiding the process of data
collection and inference.

Our work aims to close this gap between theory and practice by
providing theoretical guarantees that are both instance and
data-dependent.  The resulting bounds are both sharp and computable
based on data, and we use our theory to design early stopping
procedures that output estimates with confidence intervals of
prescribed widths and coverage.  These procedures lead to substantial
reductions in the amount of data required for a target accuracy.  In
more detail, we make these contributions in the context of Markov
decision processes (MDPs) with a finite number of states and actions,
and problem-dependent confidence regions both for policy evaluation
and optimal value function estimation.  Contrary to prior work on
instance-dependent analysis, our work allows a user to adapt their
data requirements for the specific MDP at hand by exploiting the local
difficulty of the MDP.  As we show, doing so can lead to significant
reductions in the sample sizes required for effective learning.


\subsection{Related work}

The problem of estimating the value function for a given policy in a
Markov decision process (MDP) is a key subroutine in many modern-day
RL algorithms. Examples include policy iteration~\citep{howarddp},
policy gradient, and actor-critic methods~\citep{reinforce1992,
  konda2001actorcritic, silver2014policygrad, pmlr-v48-mniha16}. Such
use cases have provided the impetus for the recent interest in
analyzing policy evaluation. Much of the focus in the past has been on
understanding TD-type algorithms with instance-dependent analyses:
function approximation under the $\ell_2$
error~\citep{pmlr-v75-bhandari18a,dalal2018finite, xu2020reanalysis},
tabular setting under the
$\ell_\infty$-error~\citep{khamaru2021instance,pananjady2021instance},
or under kernel function approximation~\citep{duan2021optimal}. Many
of these results established instance-specific guarantees that improve
upon global worst-case bounds~\citep{Azar2013Minimax}. In particular,
the paper~\cite{khamaru2020PE} establishes a local minimax lower-bound
in the tabular setting and proposes a procedure that achieves it.

Policy optimization involves solving for an optimal policy within a
given MDP. There exists a variety of different techniques for solving
policy optimization; the one of interest here is $Q$-learning,
introduced in the paper~\citep{Watkins1992Qlearning}. There has been
much prior work on the theory of $Q$-learning, such as convergence of
the standard updates~\citep{wainwright2019stochastic,
  pmlr-v139-li21b}, global minimax lower bounds for estimation of
optimal $Q$-functions~\cite{Azar2013Minimax}, variance-reduced
versions of $Q$-learning and their worst-case
optimality~\cite{sidford2018variance, sidford2018near,
  wainwright2019variancereduced}, and the asynchronous
setting~\citep{Li2020SampleCO}. The
paper~\citep{khamaru2021instance} establishes the local
non-asymptotic minimax lower bound of estimating the $Q$-function and
proves that variance-reduced $Q$-learning achieves it.


\subsection{Contributions}

In brief, the main contributions of this paper are to provide
guarantees for policy evaluation and optimization that are both
instance-optimal and data-dependent.  Our first main result, stated as
Theorem~\ref{thm:main-thm-poleval}, applies to a meta-procedure, which
takes as input any base procedure for policy evaluation that is
instance-optimal up to constant factors.  It leverages this base
procedure using a portion of the dataset, and uses the resulting
output along with the remaining data to compute bounds on the
estimated value function.  We prove that these bounds---which are
data-dependent bounds by construction---are also instance-optimal up
to constant factors.  Thus, they can inverted so as to produce a
confidence region for the value function, and up to constant factors,
the width of this confidence region is as small as possible for a
pre-specified coverage.  Next, based on the guarantees from
Theorem~\ref{thm:main-thm-poleval}, we introduce an early stopping
protocol for policy evaluation, known as the~\ref{EmpIRE} procedure,
that is guaranteed to output instance-optimal confidence regions up to
constant factors upon stopping. As we show both theoretically and in
simulation, use of this early stopping procedure can lead to
significant reductions in sample sizes relative the use of worst-case
criteria.

\begin{figure}[h]
  \begin{center}
    \begin{tabular}{cc}
  \widgraph{0.45\textwidth}{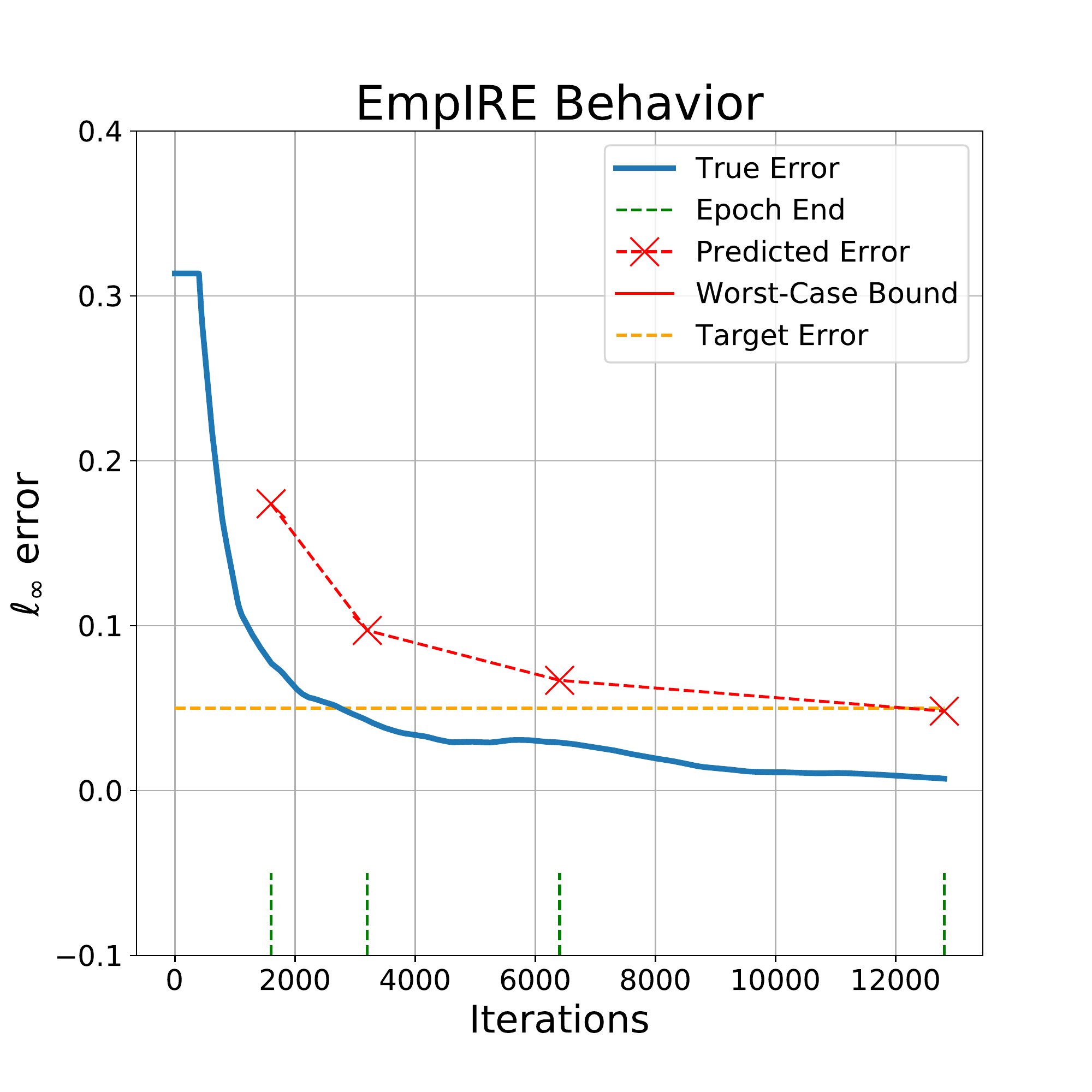} &
  \widgraph{0.45\textwidth}{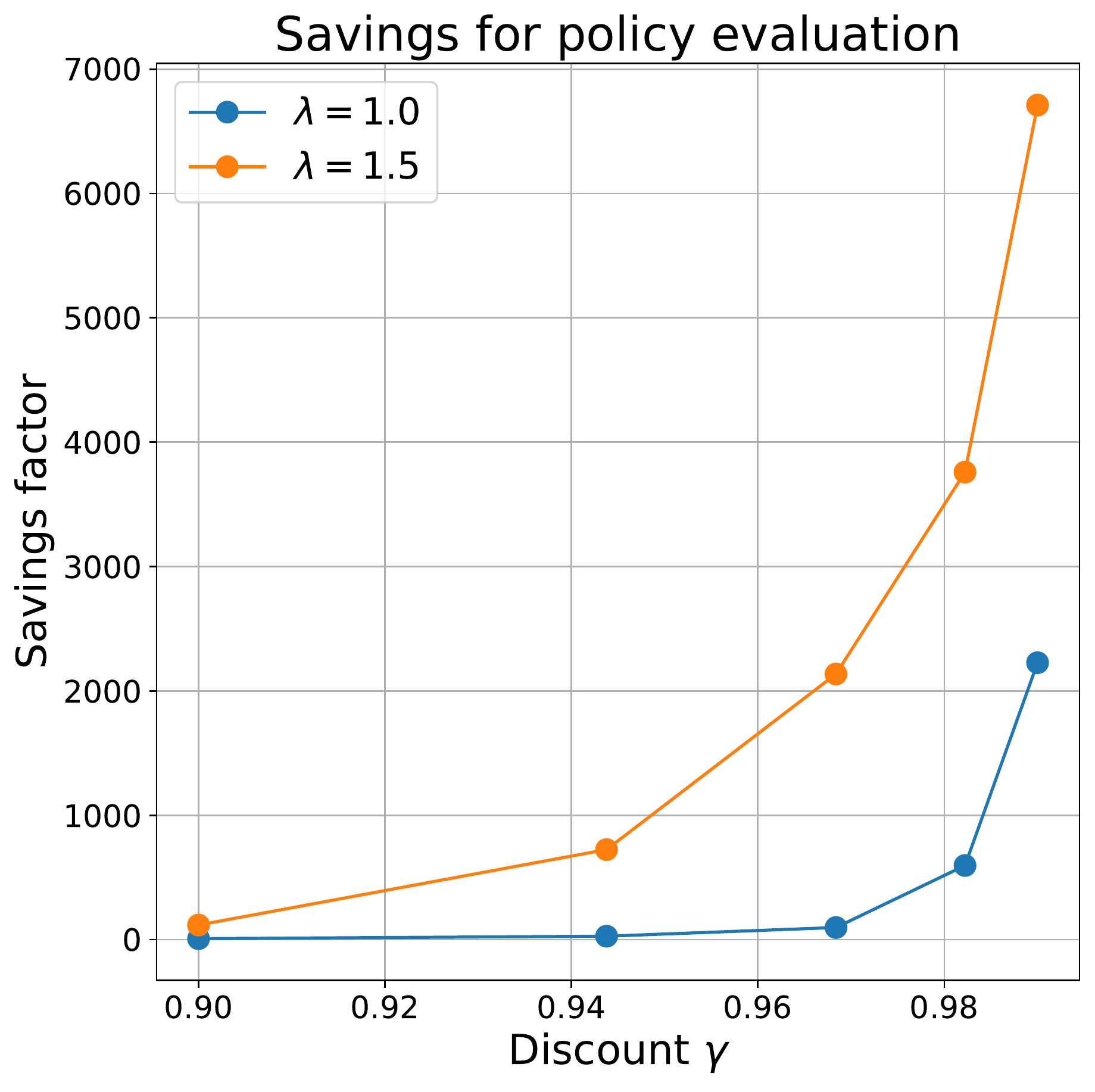} \\
  (a) & (b)
    \end{tabular}
    \caption{(a) Illustration of the behavior of the~\ref{EmpIRE}
      early stopping protocol when combined with an instance-optimal
      procedure for estimating value functions. The true error (blue
      line) of the value function estimate is plotted as a function of
      the number of samples used (or equivalently, iterations of the
      algorithm).  The~\ref{EmpIRE} protocol checks the error at the
      end of a dyadically increasing sequence of epochs, as marked
      with vertical green lines; the associated error estimates are
      marked with red X's.  For this particular run, the~\ref{EmpIRE}
      protocol terminates at the end of the $4^{th}$ epoch when the
      predicted error is smaller than the target error (orange
      line). (b) Illustration of the savings in sample size afforded
      by the~\ref{EmpIRE} protocol for different choices of
      $\contractPar$ and $\diff$. Plots show the ratio of worst-case
      sample size over the actual sample size (vertical axis) versus
      the log discount complexity (horizontal axis).}
    \label{FigFront}
  \end{center}
\end{figure}

Figure~\ref{FigFront} gives a preview of the results to come,
including the behavior of this early stopping procedure (panel (a)),
along with the attendant benefits of substantially reduced sample
sizes (panel (b)).  The \ref{EmpIRE} method is an epoch-based
protocol: within each epoch, it uses all currently available samples
to estimate the value function along with its associated
$\ell_\infty$-error, and it terminates when the error estimate drops
below a pre-specified target.  Our guarantees ensure that with high
probability over all epochs, the error estimate is an upper bound on
the true error, so that the final output of the procedure has
guaranteed accuracy with the same probability.
Figure~\ref{FigFront}(a) illustrates the behavior of the~\ref{EmpIRE}
procedure over a simulated run.  The predicted error incorporates the
instance-dependent structure of the MDP at hand and allows us to
terminate the procedure earlier for ``easy''
problems. Figure~\ref{FigFront}(b) highlights how difficulty can vary
dramatically across different instances.  We do so by constructing a
class of MDPs for which the difficulty can be controlled by a
parameter $\lambda$, with larger $\lambda$ indicating an easier
problem. For a given accuracy $\epsilon$, let $\numobs(\epsilon)$ be
the number of samples required to achieve an estimate with this
accuracy. We can bound this sample complexity using either global
minimax theory (based on worst-case assumptions), and compare it to
the instance-dependent results of the~\ref{EmpIRE} procedure.
Figure~\ref{FigFront}(b) plots the ratio of the worst-case prediction
(from global minimax) to the number of samples used by~\ref{EmpIRE} as
a function of the discount parameter $\discount$, for two different
choices of the hardness parameter $\lambda$.  We see that~\ref{EmpIRE}
can yield dramatic reductions in sample complexity compared to a
worst-case guarantee---on the order of $10^3$ for larger discounts.

We also derive similar guarantees in the more challenging setting of
policy optimization, where we again analyze a meta-procedure that
takes as input any algorithm that returns an instance-optimal estimate
of the optimal $Q$-value function.
Theorem~\ref{thm:general-CI-Policy-opt} gives the resulting
data-dependent and instance-dependent guarantees enjoyed by this
procedure, and inverting these bounds again leads confidence regions
for the optimal value function.  As before, these guarantees can be
combined with the~\ref{EmpIRE} protocol so as to perform early
stopping while still retaining theoretical guarantees for policy
optimization.

The remainder of this paper is organized as follows.
Section~\ref{SecPolEval} is devoted to results on the policy
evaluation problem, and Section~\ref{SecPolOpt} discusses results
related to policy optimization via optimal value function estimation.
In Section~\ref{SecProofs}, we provide the proofs of our main
results, with some more technical results deferred to the appendices.
We conclude with a discussion in Section~\ref{SecDiscussion}.


\subsection{Notation}  

For a positive integer $n$, let $[n] \defn \{1, 2, \ldots, n\}$. For a
finite set $S$, we use $|S|$ to denote its cardinality. We let
$\mathbf{1}$ denote the all-ones vector in $\RR^{\dims}$. Let $e_j$
denote the $j^{th}$ standard basis vector in $\RR^{\dims}$. For a
vector $u \in \RR^{\dims}$, we use $| u| $ to denote the entrywise
absolute value of a vector $u \in \RR^{\dims}$; square-roots of
vectors are, analogously, taken entrywise. Given two vectors $u, v$ of
matching dimensions, we use $u \succeq v$ to indicate that $u - v$ is
entrywise non-negative and define $u \preceq v$ analogously. For a
given matrix $A \in \RR^{\dims \times \dims}$ we define the diagonal
``norm'' $\diagnorm{A} = \max_{i=1, \ldots, \dims} \big| e_i^T A e_i
\big|$, i.e., the maximum diagonal entry in absolute terms. We use $\lesssim$
and $\bigOh(\cdot)$ to denote relations that hold up to constant and logarithmic factors.


\section{Some background}
\label{SecBackground}

In this section, we provide some background on Markov decision
processes (MDPs), policy evaluation, and optimal value estimation
problems.


\subsection{Markov decision processes}

We start with a brief introduction to Markov decision processes (MDPs)
with finite state $\stateset$ and action $\actionset$ spaces; see the
books~\cite{puterman2014markov, Bertsekas2009, Sutton1998} for a more
in-depth background. In a Markov decision process, the state $\state$
evolves dynamically in time under the influence of the actions.
Concretely, there is a collection of probability transition kernels,
$\{\TranMatQ_\action(\cdot\mid\state) \mid (\state, \action) \in
\stateset \times \actionset\}$, where $\TranMatQ_\action(\state' \mid
\state)$ denotes the probability of a transition to the state
$\state'$ when the action $\action$ is taken at the current state
$\state$.  In addition, an MDP is equipped with a reward function
$\rewardQ$ that maps every state-action pair $(\state, \action)$ to a
real number $\rewardQ(\state, \action)$. The reward $\rewardQ(\state,
\action)$ is the reward received upon performing the action $\action$
in the state $\state$.  Overall, a given MDP is characterized by the
pair $(\TranMatQ, \rewardQ)$, along with a discount factor
$\contractPar \in (0,1)$.

A deterministic policy $\policy$ is a mapping $\stateset \to
\actionset$: the quantity $\policy(\state) \in \actionset$ indicates
the action to be taken in the state $\state$.  The value of a policy
is defined by the expected sum of discounted rewards in an infinite
sample path. More precisely, for a given policy $\policy$ and discount
factor $\contractPar \in (0, 1)$, the value function for policy
$\policy$ is given by
\begin{align}
\values^{\policy}(\state) \defn \EE \Big[ \sum_{k = 0}^\infty
  \contractPar^k \cdot \rewardQ(\state_k, \action_k) \mid \state_0 =
  \state \Big], \qquad \mbox{where $u_k = \policy(x_k)$ for all $k
  \geq 0$}.
\end{align}
A closely related object is the action-value or $Q$-function
associated with the policy, which is given by
\begin{align}
\Qvalues^{\policy}(\state, \action) \defn \EE \Big[ \sum_{k =
    0}^\infty \contractPar^k \cdot \rewardQ(\state_k, \action_k) \mid
  (\state_0, \action_0) = (\state, \action) \Big], \qquad \text{where
  $u_k = \policy(x_k)$ for all $k \geq 1$}.
\end{align}

Two core problems in reinforcement learning---and those that we
analyze in this paper---are policy evaluation and policy optimization.
In the problem of \emph{policy evaluation}, we are given a fixed
policy $\policy$, and our goal is to estimate its value function on
the basis of samples.  In the \emph{policy optimization}, our goal is
to estimate the optimal policy, along with the associated optimal
$Q$-value function
\begin{align}
\label{eqn:Optimal-Q-defn}
\Qvaluestar(\state, \action) & \defn \max_{\pi \in \PolicySet} \;
\Qvalues^{\policy}(\state, \action) \quad \mbox{for all $(\state,
  \action) \in \stateset \times \actionset$,}
\end{align} 
again on the basis of samples.  As we now describe, both of these
problems have equivalent formulations as computing the fixed points of
certain types of Bellman operators.

\subsection{Policy evaluation and Markov reward processes}
\label{sec:polEval_to_MRP}

We begin by formalizing the problem of policy evaluation.  For a given
MDP, if we fix some deterministic policy $\policy$, then the MDP
reduces to a Markov reward process (MRP) over the state space
$\stateset$. More precisely, the state evolution over time is
determined by the set of transition functions
$\{\TranMatQ_{\policy(\state)}(\cdot \mid \state) , \state \in
\stateset \}$, whereas the reward received when at state $\state$ is
given by $\rewardQ(\state, \policy(\state))$.

When the number of states is finite with $|\stateset|$, the transition
functions $\{\TranMatQ_{\policy(\state)}(\cdot \mid \state) , \state
\in \stateset \}$ and the rewards \mbox{$\{\rewardQ(\state,
  \policy(\state)) \mid \state \in \stateset\}$} can be conveniently
represented as a $\numstates \times \numstates$ matrix and a
$\numstates$ dimensional vector, respectively.  For ease of notation,
we use $\TranMat$ to denote this $\numstates \times \numstates$
matrix, and $\reward$ to denote this $\numstates$ dimensional
vector. Concretely, for any state $\state \in \stateset$, we define
\begin{align}
\label{eqn:MDP-to-MRP}
  \reward(\state) \defn \rewardQ(\state, \policy(\state)) \qquad
  \text{and} \qquad \TranMat(\state', \state) \defn
  \TranMatQ_{\policy(\state)}(\state' \mid \state),
\end{align} 
where $\TranMat(\state', \state)$ denotes the row corresponding to
$\state$ and the column corresponding to $\state'$. We will often use
$\TranMat(\state' \mid \state)$ to denote $\TranMat(\state',
\state)$. With this formulation at hand, it is clear that evaluating
the value of the policy $\policy$ for the MDP~$\MDP$ is the same as
finding the value of the MRP $\MRP = (\reward, \TranMat,
\contractPar)$ with reward $\reward$ and transition $\TranMat$ defined
in equation~\eqref{eqn:MDP-to-MRP}.

\paragraph{Bellman evaluation operator:}

Given an MRP $\MRP = (\reward, \TranMat, \contractPar)$, its value
$\valuestar$ can be obtained as the unique fixed-point of the Bellman
evaluation operator~$\Op$.  It acts on the set of value functions in
the following way:
\begin{align*}
\Op(\values)(\state) = \reward(\state) + \contractPar \sum_{\state'
  \in \stateset} \TranMat(\state' \mid \state) \values(\state') \quad
\text{for all} \;\; \state \in \stateset.
 \end{align*} 
For finite-dimensional MDPs, the value and reward functions can be
viewed as $\numstates$-dimensional vectors, and the transition
function as a $\numstates \times \numstates$ dimensional matrix. As a
result, the action of the Bellman operator $\Op$ can be written in the
following compact form
\begin{align}
\label{eqn:Bellman_Eval}
\Op(\values) = \reward + \contractPar \TranMat \values,
\end{align}
and the associated fixed point relation is given by $\valuestar =
\Op(\valuestar)$.  See the standard
books~\cite{puterman2014markov,Sutton1998,Bertsekas2009} for more
details.

\paragraph{Generative observation model for MRPs:}

In the learning setting, the pair $(\TranMat, \reward)$ is unknown and
we assume that we have access to \mbox{\emph{i.i.d.  samples}}
$\{(\NoisyReward_k, \obsmat_k)\}_{k = 1}^\numobs$ from the reward
vector $\reward$ and from the transition matrix
$\TranMat$. Concretely, given a sample index $k \in \{1, 2, \ldots,
\numobs\}$, we have for all state $\state \in \stateset$
\begin{align}
\label{generative-sample}
\obsmat_k(\cdot, \state) \sim \TranMat(\cdot \mid \state) \qquad
\text{and} \quad \Exs[\NoisyReward_k(\state)] = \reward(\state) .
\end{align}
We also assume that the deviation of the reward sample
$\NoisyReward_k(\state)$ from the true reward $\reward(\state)$ is
bounded:
\begin{align}
  \label{eqn:reward-bound}
  |\NoisyReward(\state) - \reward(\state)| \leq \rewardbound \qquad
  \text{for all} \;\; \state \in \stateset.
\end{align}
In other words, for every sample $k$, we observe for every state
$\state$ a reward $\NoisyReward_k(\state)$ and the next state $\state'
\sim \TranMat(\cdot \mid \state)$. Then we have $\obsmat_k(\state',
\state) = 1$, and the remaining entries in the row corresponding to
$\state$ are $0$. Sometimes we will denote $\obsmat_k(\state',
\state)$ as $\obsmat_k(\state' \mid \state)$.


\subsection{Policy optimization via optimal $Q$-function estimation}

Recall that the goal of policy optimization is to find an optimal
policy along with the optimal $Q$-value function (cf.
equation~\eqref{eqn:Optimal-Q-defn}).  For MDPs with finite state
space $\stateset$ and action space $\actionset$, any $Q$-value
function can be represented as an element of $\RR^{\card{\stateset}
  \times \card{\actionset}}$.

Moreover, the optimal $Q$-function $\Qvaluestar$ is the \emph{unique
fixed point} of the Bellman (optimality) operator $\BellOptOp$, an
operator on $\RR^{\card{\stateset} \times \card{\actionset}}$ given
via
\begin{align}
\label{EqnPopulationBellmanOpt}
\BellOptOp(\Qvalues)(\state, \action) \defn \reward(\state, \action) +
\contractPar \sum_{\state' \in \state} \TranMat_\action(\state' \mid
\action) \max_{\action' \in \actionset} \Qvalues(\state', \action').
\end{align} 
Given $\Qvaluestar$, an optimal policy $\policystar$ is given by
$\policystar(\state) \in \arg \max_{\action \in \actionset}
\Qvaluestar(\state, \action)$.  Again, we refer the reader to the
standard references~\citep{puterman2014markov, Sutton1998,
  Bertsekas2009} for more details.

\paragraph{Generative observation model for MRPs:}

We operate in the \textit{generative} observation model: we are given
$\Qnumobs$ i.i.d. samples of the form $\{ (\obsmatQ_k,
\NoisyRewardQ_k)\}_{k=1}^{\Qnumobs}$, where $\NoisyRewardQ_k$ is a
matrix in $\RR^{\card{\stateset} \times \card{\actionset}}$ and
$\obsmatQ_k$ is a collection of $\card{\actionset}$ matrices in
$\RR^{\card{\stateset} \times \card{\stateset}}$ indexed by
$\actionset$. We denote by $\obsmatQ_k(\state, \action)$ the
row-vector corresponding to state $\state$ and action $\action$. The
row-vector is computed via sampling from the transition kernel
$\TranMatQ_\action(\cdot \mid \state)$, independently of all other
$(\state, \action)$, and making the entry corresponding to the next
state $1$, and the remaining entries $0$. Concretely, for every state
action pair $(\state, \action) \in \stateset \times \actionset$, we
write
\begin{align}
\label{EqnGenerativeMDP}  
  \state' \sim \TranMatQ_{\action}(\cdot \mid \state) \qquad
  \text{and} \qquad \obsmatQ_{u}(\cdot \mid \state) = \mathbf{1}_{x =
    \state'}.
\end{align}

 We write $\obsmatQ_1 + \obsmatQ_2$ to indicate the collection of
 $\card{\actionset}$ matrices that are the sum of the matrices from
 $\obsmatQ_1$ and $\obsmatQ_2$ that match in the action. We also
 assume that $\NoisyRewardQ_k(\state, \action)$ is a bounded random
 variable with mean $\rewardQ(\state, \action)$ and that
 $|\NoisyRewardQ_k(\state, \action) - \rewardQ(\state, \action) | \leq
 \rewardboundQ$. Additionally, $\NoisyRewardQ_k(\state, \action)$ is
 taken to be independent of all other state-action pairs and of the
 observations $\obsmatQ_k$.


\section{Confidence intervals for policy evaluation}
\label{SecPolEval}

In this section, we propose instance-dependent confidence regions for
the policy evaluation problem.  We start with a discussion on the
problem dependent functional that determines the difficulty for the
evaluation problem. As already detailed in
Section~\ref{sec:polEval_to_MRP}, any MDP policy pair $(\MDP,
\policy)$ naturally gives rise to an MRP, and the policy evaluation
problem is equivalent to finding the value of that MRP. Accordingly,
in the rest of the section, we discuss the problem of finding the
value of a general MRP $\MRP$.


\subsection{Optimal instance dependence for policy evaluation}

Given access to a generative sample $(\NoisyReward, \obsmat)$ from an
MRP $\MRP = (\reward, \TranMat, \contractPar)$, we can define the
single-sample empirical Bellman operator as
\begin{align}
  \label{EqnEmpiricalBellman}
  \NoisyOp(\values)(\state) & \defn \NoisyReward(\state) +
  \contractPar \sum_{\state' \in \stateset} \obsmat(\state' \mid
  \state) \values(\state') \qquad \mbox{for all $\state \in
    \stateset$.}
\end{align}
By construction, for any fixed value function $\values$, the quantity
$\NoisyOp(\values)$ is an unbiased estimate of $\Op(\values)$. In the
finite-dimensional setting, $\NoisyOp(\values)$ can be considered a
random vector with entries given by $\NoisyOp(\values)(\state)$, and
we can talk about its covariance matrix. In a recent
paper~\cite{khamaru2020PE}, the authors show that the quantity that
determines the difficulty of estimating the value $\valuestar$, given
access to i.i.d. generative samples following the sampling
mechanism~\eqref{generative-sample}, is the following $\numstates
\times \numstates$ covariance matrix
\begin{align}
  \label{eqn:Opt-cov-matrix}
  \optCovMat{\valuestar} \defn (\Id - \contractPar \TranMat)^{-1}
  \valueCovOpt (\Id - \contractPar \TranMat)^{-\top}.
\end{align}
Intuitively, the covariance matrix $\valueCovOpt =
\covar(\NoisyOp(\valuestar))$~captures the noise of the empirical
Bellman operator, evaluated at the true value $\valuestar$. This term
is then compounded by powers of the discounted transition matrix
$\contractPar \TranMat$, which captures how the perturbation in the
Bellman operator propagates over time, and thus gives rise to the
matrix $(\Id-\contractPar \TranMat)^{-1}$ via the von Neumann
expansion.

In the paper~\cite{khamaru2021instance}, it is shown the functional
$\diagnorm{\optCovMat{\valuestar}}^{\frac{1}{2}}$ arises in lower
bounds---in both asymptotic and non-asymptotic settings---on the error
of any procedure for estimating the value function.  In particular,
any estimate $\Vtil_\numobs$ of the value function must necessarily
satisfy a lower bound of the form $\|\Vtil_\numobs - \Vstar\|_\infty
\succsim \frac{1}{\sqrt{\PEnumobs}}
\diagnorm{\optCovMat{\valuestar}}^{\frac{1}{2}}$.  Moreover, the
authors~\cite{khamaru2020PE} provide a practical scheme that achieves
this lower bound, modulo logarithmic factors.  In particular, in
Theorem~1 and Proposition~1 of the paper~\cite{khamaru2020PE} show
that a variance-reduced version of policy evaluation, using a total of
$\PEnumobs$ samples, returns an estimate $\valuehat_\numobs$ such that
\begin{align}
\label{eqn:VRPE-pop-bound}
\inftynorm{\valuehat_\numobs - \valuestar} & \precsim \frac{\sqrt{\log
    (|\stateset|/\pardelta)}}{\sqrt{\PEnumobs}}
\diagnorm{\optCovMat{\valuestar}}^{1/2} + \order(\frac{1}{\numobs})
\end{align}
Notably, this upper bound can be much smaller than a worst case upper
bound (see Section 3.2 in the paper~\cite{khamaru2020PE}).  In
particular, the variance-reduced PE scheme provides far more accurate
estimates for problems that are ``easier'', as measured by the size of
the functional $\diagnorm{\optCovMat{\valuestar}}^{1/2}$.


\subsection{A procedure for computing data-dependent bounds}
\label{sec:data-dependent-bound-PE}

Guarantees of the type~\eqref{eqn:VRPE-pop-bound}---showing that a
certain procedure for policy evaluation is instance-optimal---are
theoretically interesting, but may fail to be practically useful.  In
particular, the bound~\eqref{eqn:VRPE-pop-bound} depends on the
unknown population-level quantity
$\diagnorm{\optCovMat{\valuestar}}^{1/2}$, so that for any given
instance, we cannot actually evaluate the error bound achieved for a
given sample size.  Conversely, when data is being collected in a
sequential fashion, as is often the case in practice, we cannot use
this theory to decide when to stop.

In this section, we rectify this gap between theory and practice by
introducing a data-dependent upper bound on the error in policy
evaluation (PE).  In addition to the variance-reduced PE scheme
described in the paper~\cite{khamaru2021instance}, there are other
procedures that could also be used to achieve an instance-optimal
bound, such as the ROOT-OP procedure applied to the Bellman
operator~\cite{LiMouWaiJor20, MKWBJVariance22}.  Rather than focus on
the details of a specific PE method, our theory applies to a generic
class of PE procedures, as we now define.

\subsubsection{$(\specfast, \specslow)$-instance-optimal algorithms}

Suppose we have access to some procedure $\AlgoEval$ that estimates
the value function $\valuestar$ of an MRP~$\MRP$ .  More precisely,
let $\valuehat_\numobs$ denote the estimate obtained from the
algorithm~$\AlgoEval$ using $\numobs$ generative samples
$\{\NoisyReward_k, \obsmat_k\}_{k = 1}^\numobs$.  Our analysis applies
to procedures that are instance-optimal in a precise sense that we
specify here.

Given a failure probability $\pardelta \in (0,1)$, let $\specfast$ and
$\specslow$ be non-negative functions of $\pardelta$; depending on the
procedure under consideration, these functions may also involve other
known parameters (e.g. number of states, the discount factor
$\contractPar$, initialization of the algorithm $\AlgoEval$ etc.), but
we omit such dependence to keep our notation streamlined.  For any
such pair, we say that the procedure $\AlgoEval$ is \emph{$(\specfast,
\specslow)$-instance optimal} if the estimate $\valuehat_\numobs$
satisfies the $\ell_\infty$-bound
\begin{align}
\label{eqn:PolEval-Algo-Cond}
\inftynorm{\valuehat_\numobs - \valuestar} \leq
\frac{\specfast(\pardelta)}{\sqrt{\PEnumobs}} \cdot
\diagnorm{\optCovMat{\valuestar}}^{1/2} +
\frac{\specslow(\pardelta)}{\PEnumobs}.
\end{align}
with probability at least $1 - \pardelta$.  Thus, an $(\specfast,
\specslow)$-instance-optimal procedure returns an estimate that,
modulo any inflation of the error due to $\specfast(\pardelta)$ being
larger than one, achieves the optimal instance-dependent bound.  Note
that the function $\specslow$ is associated with the higher order term
(decaying as $\numobs^{-1}$), so becomes negligible for larger sample
sizes.  As a concrete example, as shown by the
bound~\eqref{eqn:VRPE-pop-bound}, there is a variance-reduced version
of policy evaluation that is instance-optimal in this sense with
$\specfast(\pardelta) \asymp \sqrt{\log(|\stateset|/\pardelta)}$. Our
results are based on the natural assumption that the functions
$\specfast$ and $\specslow$ take values uniformly lower bounded by
$1$, and are decreasing in $\pardelta$. \\

In practice, even if an estimator enjoys the attractive
guarantee~\eqref{eqn:PolEval-Algo-Cond}, it cannot be exploited in
practice since the right-hand side depends on the \emph{unknown}
problem parameters $(\reward, \TranMat, \valuestar)$.  The main result
of the following section is to introduce a data-dependent quantity
that (a) also provides a high probability upper bound on
$\|\vhat_\numobs - \valuestar\|_\infty$; and (b) matches the
guarantee~\eqref{eqn:PolEval-Algo-Cond} apart from changes to the
leading pre-factor and the higher order terms.


\subsubsection{Constructing a data-dependent bound}

In order to obtain a data-dependent bound on
$\inftynorm{\valuehat_\numobs - \valuestar}$, it suffices to estimate
the quantity $\diagnorm{\optCovMat{\valuestar}}^{1/2}$.  Our main
result guarantees that there is a data-dependent estimate
$\VarEst(\valuehat_\numobs, \MyData{})$ that leads to an upper bound
that holds with high probability, and is within constant factors of
the best possible bound.  In terms of the shorthand $\bfun(\values)
\defn \rewardbound + \contractPar \inftynorm{\values}$, our
\emph{empirical error estimate} takes the form
\begin{align}
\ErrEst(\valuehat_\numobs, \Vdata, \pardelta) & \defn \frac{2\sqrt{6}
  \cdot \specfast(\pardelta) }{\sqrt{\PEnumobs}} \cdot
\diagnorm{\VarEst(\valuehat_\numobs, \MyData{})}^{1/2} +
\frac{2\specslow(\pardelta)}{\PEnumobs} + \frac{6
  \bfun(\valuehat_\numobs)}{1 - \contractPar} \cdot \frac{\sqrt{\log(8
    \numstates / \pardelta)}}{\PEnumobs - 1}.
\end{align}
This error estimate is a function of a dataset $\MyData{} =
(\MyData{\numobs}, \MyData{\nhold})$ where: \\

\bcar
\item the base estimator acts on $\MyData{\numobs}$ with cardinality
  $\numobs$ to compute the value function estimate
  \mbox{$\valuehat_\numobs \defn \AlgoEval(\MyData{\numobs})$,}
\item the smaller set $\MyData{2 \nhold}$ corresponds to samples that
  are held-out and play a role only in the covariance estimate
  $\VarEst(\valuehat_\numobs, \MyData{})$, as detailed in
  Section~\ref{SecEmpCov} following our theorem statement. This data
  set consists of $2 \holdnumobs$ samples with $\nhold \defn 2 \lceil
  \frac{32 \cdot \log(8\numstates^2 / \pardelta)}{(1 -
    \contractPar)^2} \rceil$.
  \ecar

\vspace*{0.15in}  

\noindent With this set-up, we have the following guarantee:

\begin{theorem}
\label{thm:main-thm-poleval}
Let $\AlgoEval$ be any $(\specfast, \specslow$)-instance-optimal
algorithm (cf. relation~\eqref{eqn:PolEval-Algo-Cond}).  Then for any
$\pardelta \in (0,1)$, given a dataset $\MyData{} = (\MyData{\numobs},
\MyData{2 \nhold})$ with $\numobs \geq \specfast^2(\pardelta) \frac{24
  \cdot 8 \cdot \log(8\numstates^2 / \pardelta)}{(1 -
  \contractPar)^2}$ samples, the empirical error estimate $\ErrEst$
has the following guarantees with probability at least $1 -
3\pardelta$:
\begin{enumerate}
\item[(a)] The $\ell_\infty$-error is upper bounded by the empirical
  error estimate:
\begin{subequations}
\begin{align}
\label{eqn:general-CI-eqn1}
\inftynorm{\valuehat_\numobs - \valuestar} & \leq
\ErrEst(\valuehat_\numobs, \MyData{}, \pardelta).
\end{align}
\item[(b)] Moreover, this guarantee is order-optimal in the sense that
\begin{align}
\label{eqn:general-CI-eqn2}
\ErrEst(\valuehat_\numobs, \MyData{}, \pardelta) \leq \frac{13 \cdot
  \specfast(\pardelta)}{\sqrt{\PEnumobs}} \cdot
\diagnorm{\optCovMat{\valuestar}}^{1/2} +
\frac{2\specslow(\pardelta)}{\PEnumobs} + \frac{32\bfun(\valuestar)}{1
  - \contractPar} \cdot \frac{\sqrt{\log(\tfrac{8
      \numstates}{\pardelta})}}{\PEnumobs - 1}.
\end{align}
\end{subequations}
\end{enumerate}
\end{theorem}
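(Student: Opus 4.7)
The plan is to reduce both parts of the theorem to a careful comparison between the data-dependent estimator $\VarEst(\valuehat_\numobs, \MyData{})$ and the population covariance $\optCovMat{\valuestar}$, in both directions. By hypothesis the base procedure satisfies the instance-optimal bound~\eqref{eqn:PolEval-Algo-Cond}, so proving (a) reduces to showing a \emph{lower-tail} inequality of the form
\begin{align*}
\diagnorm{\optCovMat{\valuestar}}^{1/2} \;\leq\; 2\sqrt{6}\cdot \diagnorm{\VarEst(\valuehat_\numobs, \MyData{})}^{1/2} + (\text{lower-order term}),
\end{align*}
and proving (b) reduces to the matching \emph{upper-tail} inequality with comparable constants. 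The two directions are proved by the same machinery but with the roles of the population and empirical quantities swapped.

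The first step is to reduce the comparison from $\optCovMat{\valuestar} = (\Id - \contractPar \TranMat)^{-1} \valueCovOpt (\Id - \contractPar \TranMat)^{-\top}$ to its empirical surrogate constructed from the held-out sample $\MyData{2\nhold}$. I would do this in three substeps: (i) replace $\valuestar$ inside the variance operator by $\valuehat_\numobs$, bounding the perturbation using $\|\valuehat_\numobs - \valuestar\|_\infty$ from the algorithmic guarantee~\eqref{eqn:PolEval-Algo-Cond}; (ii) replace the population covariance $\valueCovOpt$ of $\NoisyOp(\cdot)$ by an empirical plug-in estimate computed from $\MyData{2\nhold}$ using standard Bernstein-type concentration applied entrywise to the diagonal (the sample-size $\nhold$ has been chosen precisely so that such an entrywise concentration kicks in uniformly over states); (iii) replace the deterministic matrix $(\Id - \contractPar \TranMat)^{-1}$ by $(\Id - \contractPar \obsmat_{\nhold})^{-1}$ where $\obsmat_{\nhold}$ is the empirical transition kernel. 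This third step is the delicate one, since one is perturbing an operator that already has norm $\tfrac{1}{1-\contractPar}$; I would handle it via a Neumann expansion combined with a high-probability row-wise bound on $\|\obsmat_{\nhold} - \TranMat\|$ of order $\sqrt{\log(|\stateset|/\pardelta)/\nhold}$, which again holds by the sample-size choice of $\nhold \asymp \log(|\stateset|/\pardelta)/(1-\contractPar)^2$.

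Once these three perturbation steps are quantified, combining them using the triangle inequality on $\diagnorm{\cdot}^{1/2}$ (treating $\diagnorm{\cdot}^{1/2}$ as a semi-norm-like functional, which works because the involved matrices are positive semi-definite and their diagonal entries are nonneg) yields the two-sided inequality
\begin{align*}
\diagnorm{\optCovMat{\valuestar}}^{1/2} \;\asymp\; \diagnorm{\VarEst(\valuehat_\numobs, \MyData{})}^{1/2}
\end{align*}
up to additive slack of order $\tfrac{\bfun(\cdot)}{1-\contractPar}\cdot \tfrac{\sqrt{\log(\numstates/\pardelta)}}{\PEnumobs - 1}$; this is exactly the residual term appearing in the definition of $\ErrEst$ and in the bound~\eqref{eqn:general-CI-eqn2}. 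For part (a), plug the lower-tail version into the algorithmic guarantee~\eqref{eqn:PolEval-Algo-Cond}; for part (b), combine the upper-tail version with the fact that $\bfun(\valuehat_\numobs) \leq \bfun(\valuestar) + \contractPar \|\valuehat_\numobs - \valuestar\|_\infty$, which is again absorbed into the higher-order term using~\eqref{eqn:PolEval-Algo-Cond}. A union bound over three failure events (the algorithm's guarantee, the empirical variance concentration, and the empirical kernel concentration) absorbs the $1 - 3\pardelta$ in the statement.

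The main obstacle is step (iii), the operator perturbation from $(\Id - \contractPar \TranMat)^{-1}$ to its empirical counterpart. Naively, a Neumann-series argument gives a factor $\tfrac{1}{(1-\contractPar)^2}$ which would ruin the $\tfrac{1}{1-\contractPar}$ scaling in the lower-order term of the theorem. Avoiding this blow-up requires either (a) controlling the diagonal of the sandwiched matrix via a more delicate entrywise / variance-type argument rather than an operator-norm bound, or (b) using that the bad event only contributes to the higher-order $\tfrac{1}{\PEnumobs-1}$ term, so an extra $\tfrac{1}{1-\contractPar}$ factor is still affordable. I expect that once $\nhold$ is large enough (as prescribed), a careful bookkeeping shows only one power of $\tfrac{1}{1-\contractPar}$ accumulates in the slack term, yielding the precise constants $2\sqrt{6}$, $6$, $13$, $32$ that appear in the theorem statement.
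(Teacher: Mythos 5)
Your high-level architecture---a two-sided comparison between $\diagnorm{\VarEst(\valuehat_\numobs,\MyData{})}^{1/2}$ and $\diagnorm{\optCovMat{\valuestar}}^{1/2}$, built from three perturbation steps (swap $\valuestar\to\valuehat$, swap population covariance for an empirical one, swap $(\Id-\contractPar\TranMat)^{-1}$ for its empirical counterpart) plus a union bound---is the same skeleton the paper uses. But two of your steps contain genuine gaps. First, you have the data allocation reversed. In the paper's construction the middle covariance $\valueCovOpt$ is estimated by the $V$-statistic $\covarhat(\cdot;\MyData{\numobs})$ built on the \emph{large} dataset of size $\numobs$, while the two small hold-out sets of size $\nhold$ are used only for the outer factors $(\Id-\contractPar\holdobsmat)^{-1}$ and $(\Id-\contractPar\secholdobsmat)^{-\top}$. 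You propose the opposite: estimating $\valueCovOpt$ from $\MyData{2\nhold}$. Since $\nhold\asymp\log(\numstates^2/\pardelta)/(1-\contractPar)^2$ does not grow with $\numobs$, the empirical-Bernstein residual for that estimate would be of order $\tfrac{\bfun(\valuestar)}{1-\contractPar}\sqrt{\log(\numstates/\pardelta)/\nhold}$, a constant that never decays; the residual $\tfrac{\sqrt{\log(8\numstates/\pardelta)}}{\PEnumobs-1}$ in the theorem (and in your own summary) is only attainable because the covariance is estimated with all $\numobs$ samples. Your step (ii) is therefore internally inconsistent with the bound you claim it produces.

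Second, your step (iii)---the substitution of the empirical transition kernel into the outer factors---is the crux, and neither of your proposed escape routes closes it as stated. Route (b) fails for the same reason as above: the error in $(\Id-\contractPar\holdobsmat)^{-1}$ is controlled by $\nhold$, not by $\numobs$, so if treated additively it contributes a term comparable to $\diagnorm{\optCovMat{\valuestar}}$ itself rather than anything of order $1/\PEnumobs$; it cannot be hidden in the higher-order term. The paper instead makes this substitution \emph{multiplicative}: Lemma~\ref{lems:key-holdout-bound} shows that for any PSD matrix $\Mmat$, $\diagnorm{(\Id-\contractPar\holdobsmat)^{-1}\Mmat(\Id-\contractPar\secholdobsmat)^{-\top}}\le 3\,\diagnorm{(\Id-\contractPar\TranMat)^{-1}\Mmat(\Id-\contractPar\TranMat)^{-\top}}$ with high probability, and this factor of $3$ is exactly where the constants $2\sqrt{6}$ and $13$ come from. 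The mechanism, which your route (a) gestures at but does not supply, combines three facts: the resolvent identity $(\Id-\contractPar\TranMat)^{-1}-(\Id-\contractPar\holdobsmat)^{-1}=(\Id-\contractPar\holdobsmat)^{-1}\bigl(\contractPar\holdobsmat-\contractPar\TranMat\bigr)(\Id-\contractPar\TranMat)^{-1}$; the row-wise Hoeffding bound $\inftynorm{(\holdobsmat-\TranMat)\values}\le\tfrac{1-\contractPar}{\sqrt{2}}\inftynorm{\values}$ (Lemma~\ref{lems:holdout-infty-bound}), whose $(1-\contractPar)$ gain exactly cancels the $\tfrac{1}{1-\contractPar}$ operator norm of $(\Id-\contractPar\holdobsmat)^{-1}$, leaving a pure contraction by $\tfrac{1}{\sqrt 2}$; and the observation that the maximum absolute entry of a covariance matrix is attained on its diagonal, so $\max_{i,j}|e_j^\top(\Id-\contractPar\TranMat)^{-1}\Mmat(\Id-\contractPar\TranMat)^{-\top}e_i|\le\diagnorm{(\Id-\contractPar\TranMat)^{-1}\Mmat(\Id-\contractPar\TranMat)^{-\top}}$, which converts the row-wise bound back into the diagonal functional. (The independence of the two hold-out sets is what lets the cross term be handled; this is why the construction uses two of them.) Without this multiplicative lemma, your additive bookkeeping cannot reach either part (a) or part (b). The remaining pieces of your plan---the Lipschitz swap $\valuestar\to\valuehat$ inside the $V$-statistic costing $\sqrt{2}\cdot(\cdot)+\tfrac{\sqrt 8}{1-\contractPar}\inftynorm{\valuehat-\valuestar}$, the empirical Bernstein step, and absorbing $\inftynorm{\valuehat-\valuestar}$ via the sample-size condition---do match the paper's proof.
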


\noindent We prove this theorem in
Section~\ref{sec:proof-of-thm-poleval}. \\

A few comments regarding Theorem~\ref{thm:main-thm-poleval} are in
order.  Since the empirical error estimate $\ErrEst$ can be computed
based on the data, we obtain a data-dependent confidence interval on
$\inftynorm{\valuehat_\numobs - \valuestar}$.  More precisely, we are
guaranteed the inclusion
\begin{align}
\label{EqnConfInt}
\Big[\Vhat_\numobs(\state) - \ErrEst, \Vhat_\numobs(\state) + \ErrEst
  \Big] \ni \Vstar(\state) \qquad \mbox{uniformly for all $\state
  \in \StateSpace$}
\end{align}
with probability at least $1 - 3 \pardelta$.

Note that the dominant term in the error estimate $\ErrEst$ is
proportional to $\numobs^{-1/2} \diagnorm{\VarEst(\valuehat_\numobs,
  \Vdata_{\PEnumobs, \holdnumobs})}^{1/2}$, and it corresponds to an
estimate of the dominant term $\numobs^{-1/2}
\diagnorm{\optCovMat{\valuestar}}^{1/2}$ from the
bound~\eqref{eqn:PolEval-Algo-Cond}.  The second
bound~\eqref{eqn:general-CI-eqn2} in
Theorem~\ref{thm:main-thm-poleval} ensures that $\ErrEst$ provides an
optimal approximation, up to constant pre-factors, of the leading
order term on the right-hand side of the
bound~\eqref{eqn:PolEval-Algo-Cond}.

\subsubsection{Constructing the empirical covariance estimate}
\label{SecEmpCov}

We now provide details on the construction of the empirical covariance
estimate $\covarhat(\Vhat_\numobs; \MyData{})$.  Recalling the
definition~\eqref{eqn:Opt-cov-matrix}, our procedure is guided by the
decomposition
\begin{align*}
  \optCovMat{\valuestar} = \underbrace{(\Id - \contractPar
    \TranMat)^{-1}}_{\substack{\text{estimated using first} \\ \text{
        small holdout dataset}}} \;\;\;
  \underbrace{\valueCovOpt}_{\substack{\text{estimated using}
      \\ \text{remaining data}}} \;\;\; \underbrace{(\Id -
    \contractPar \TranMat)^{-\top}}_{\substack{\text{estimated using
        second} \\ \text{small holdout dataset}}}.
\end{align*}
From the given data $\MyData{} = (\MyData{\numobs}, \MyData{2
  \nhold})$, we partition $\MyData{2 \nhold}$ into two subsets
$(\MyDataLeft{\nhold}, \MyDataRight{\nhold})$, as indexed by
$\IndLeft$ and $\IndRight$ respectively.  We use this data to compute
the empirical averages
\begin{align}
\label{eqn:holdout-samples}
  \holdobsmat = \frac{1}{\holdnumobs} \sum_{i \in \holdoutsample}
  \obsmat_i \qquad \text{and} \qquad
  \secholdobsmat = \frac{1}{\holdnumobs} \sum_{i \in \secholdsample}
  \obsmat_i.
\end{align}

Using the dataset $\MyData{\numobs}$, we use the procedure $\AlgoEval$
to compute the value function estimate $\Vhat_\numobs$.  We re-use the
samples $\MyData{\numobs} = \{(\NoisyReward_j, \obsmat_j) \}_{j \in
  \varestsample}$ to compute the $V$-statistic
\begin{align}
\label{eqn:ValueCovEstimate}
\covarhat(\Vhat_\numobs; \MyData{\numobs}) & \defn
\frac{1}{\PEnumobs(\PEnumobs - 1)} \sum_{ \substack{j, k \in
    \varestsample \\ j < k}} (\NoisyReward_j - \NoisyReward_k +
\contractPar(\obsmat_j - \obsmat_k) \Vhat_\numobs)(\NoisyReward_j -
\NoisyReward_k + \contractPar(\obsmat_j - \obsmat_k)
\Vhat_\numobs)^\top.
\end{align}
With these three ingredients, our empirical covariance estimate is
given by
\begin{align}
  \label{eqn:final-estimate}
  \VarEst(\valuehat_\numobs, \MyData{}) & = (\Id -
  \contractPar\holdobsmat)^{-1} \; \covarhat(\valuehat_\numobs,
  \MyData{\numobs}) \; (\Id - \contractPar\secholdobsmat)^{-\top}.
\end{align}
Recalling that $\nhold \defn 2 \lceil \frac{32 \cdot
  \log(8\numstates^2 / \pardelta)}{(1 - \contractPar)^2} \rceil$, we
note that the hold-out sets will typically be very small relative to
the sample size $\numobs$ used to compute the estimate $\Vhat_\numobs$
itself.


\subsection{Early stopping with optimality: ~\ref{EmpIRE} procedure}

In practice, it is often the case that policy evaluation is carried
out repeatedly, as an inner loop within some broader algorithm (e.g.,
policy iteration schemes or actor-critic methods).  Due to repeated
calls to a policy evaluation routine, it is especially desirable in
these settings to minimize the amount of data used.  In practice, it
is adequate to terminate a policy evaluation routine once some target
accuracy $\error$ is achieved; guarantees in the $\ell_\infty$-norm,
such as those analyzed here, are particularly attractive in this
context.

In this section, we leverage the data-dependent bound from
Theorem~\ref{thm:main-thm-poleval} so as to propose \emph{early
stopping procedure} for policy evaluation, applicable for any
$(\specfast, \specslow)$-procedure for policy evaluation.  We refer to
it as the~\ref{EmpIRE} procedure, as a shorthand for
\textbf{Emp}irical \textbf{I}nstance-optimal Markov \textbf{R}eward
\textbf{E}valuation.  Taking as input a desired error level $\error$
and a tolerance probability $\deltaTol$, it is a sequential procedure
that operates over a sequence of epochs, where the amount of data
available increases dyadically as the epochs increase.  Upon
termination, it returns an estimate $\Vhat$ such that $\|\Vhat -
\Vstar\|_\infty \leq \error$ with probability at least $1 -
\deltaTol$.  Moreover, our theory guarantees that for any particular
instance, the number of samples used to do so is within a constant
factor of the minimum number of samples required for that particular
instance.  In this sense, the procedure is \emph{adaptive} to the
easiness or hardness of the instance at hand. \\

\begin{varalgorithm}{EmpIRE}
\caption{\textbf{Emp}irical \textbf{I}nstance-optimal Markov
  \textbf{R}eward \textbf{E}valuation}
\label{EmpIRE}
\begin{algorithmic}[1]
\STATE \texttt{Inputs:} (i) instance-optimal procedure $\AlgoEval$;
(ii) target accuracy $\error$ and (iii) tolerance probability
$\deltaTol$

\vspace{6pt}

\STATE Initialize $\PEbatch_0 = \frac{32}{(1 - \contractPar)^2}$,
$\pardelta_0 = \frac{\pardelta_{\textrm{target}}}{3}$, and
$\Vdata_{\holdoutsample}, \Vdata_{\varestsample}, \text{ and
}\Vdata_{\secholdsample}$ as empty sets.

\FOR {$\epoch = 1, \ldots$} \STATE Set tolerance parameter
$\pardelta_\epoch = \frac{\pardelta_0}{2^\epoch}$, along with
\begin{align*}
\mbox{Holdout size} \quad \doublehold{\epoch} \defn \PEbatch_0 \cdot
\log(4 \numstates^2 / \pardelta_\epoch), \quad \text{and batch size}
\quad \PEbatch_\epoch = 2^{\epoch} \PEbatch_0 \cdot
\specfast^2(\pardelta_\epoch) \cdot \log(4 \numstates /
\pardelta_\epoch).
\end{align*}
\STATE Augment data sets $\Vdata_{\holdoutsample},
\Vdata_{\varestsample}, \text{ and }\Vdata_{\secholdsample}$ with
additional i.i.d. samples such that

$|\holdoutsample| = |\secholdsample| = \doublehold{\epoch}$, and
$|\varestsample| = \PEbatch_\epoch$. Set $\Vdata =
\Vdata_{\holdoutsample} \cup \Vdata_{\varestsample} \cup
\Vdata_{\secholdsample}$.
  \STATE Compute estimate $$\valuehat \leftarrow \AlgoEval
  \big(\MyData{\varestsample} \big). $$
  \STATE Evaluate empirical error estimates
  \begin{align}
    \vfast \defn \frac{2\sqrt{6} \cdot \specfast(\pardelta_\epoch)
    }{\sqrt{\PEbatch_\epoch}} \cdot \VarEst(\valuehat, \MyData{}) ,
    \quad \mbox{and} \quad \vslow \defn \frac{2
      \specslow(\pardelta_\epoch)}{\PEbatch_\epoch} + \frac{6
      \bfun(\valuehat)}{1 - \contractPar} \cdot \frac{\sqrt{\log(8
        \numstates / \pardelta_\epoch)}}{\PEbatch_\epoch - 1}.
  \end{align}
  \IF {$\vfast + \vslow < \error$} \STATE Terminate \ENDIF \ENDFOR
\end{algorithmic}
\end{varalgorithm}

\noindent The following corollary, which is a simplified version of Corollary~\ref{cors:poleval-stopping-guarantee2}, guarantees that the~\ref{EmpIRE}
procedure terminates with high probability, and returns an
$\error$-accurate estimate of the value function while using a total
sample size that is well-controlled.
For ease of presentation, in Corollary~\ref{cors:poleval-stopping-guarantee} we assume that
\begin{align}
\label{eqn:fast-and-slow-factors-relation}
  \tfrac{\specslow(\delta)}{\specfast^2(\delta)} \leq c_0 \quad \text{for all} 
 \quad \delta > 0. 
\end{align}
where, $c_0$ is a universal constant. This assumption, although
satisfied by both the instance-optimal
algorithms~\cite{khamaru2020PE,MKWBJVariance22} discussed in the
paper, is not necessary; we impose this condition only to provide a
simpler upper bound on the number of epochs and number of samples used
by the protocol~\ref{EmpIRE}.  A version of
Corollary~\ref{cors:poleval-stopping-guarantee} without the
assumption~\eqref{eqn:fast-and-slow-factors-relation} can be be found
in Corollary~\ref{cors:poleval-stopping-guarantee2}.

With the assumption~\eqref{eqn:fast-and-slow-factors-relation} at
hand, and given a tolerance probability $\deltaTol \in (0,1)$ we
define
\begin{multline}
\label{eqn:Epoch-max}
\epochs_{\max} \defn \log_2 \max \left\{\tfrac{ (1 - \contractPar)^2
  \, \diagnorm{\optCovMat{\valuestar}}}{\error^2}, \; \tfrac{c_0(1 -
  \contractPar)^2}{4\error} + \tfrac{(1 - \contractPar)
  \bfun(\valuestar)}{\error} \cdot \sqrt{\log(\tfrac{8
    \numstates}{\deltaTol})} \right\}.
\end{multline}
In the next statement, $(c_1, c_2)$ are universal positive constants.
\begin{cors}
\label{cors:poleval-stopping-guarantee}
Given any algorithm~$\AlgoEval$ satisfying
condition~\eqref{eqn:PolEval-Algo-Cond}, suppose that we run
the~\ref{EmpIRE} protocol with target accuracy $\error$ and tolerance
probability $\deltaTol$, and assume that 
condition~\eqref{eqn:fast-and-slow-factors-relation} is in force. Then with probability $1 - \deltaTol$:
\begin{enumerate}
\item[(a)] It terminates after a number of epochs $\epochs$ bounded as
  $\epochs \leq c_1 + \epochs_{\max}$.
\item[(b)] Upon termination, it returns an estimate $\Vhat$ such
  $\|\Vhat - \Vstar\|_\infty \leq \error$.
\item[(c)] The total number of samples used satisfies the bound
\begin{align}
\label{eqn:corollary-CI-eqn3}
\PEnumobs \leq \plaincon_2 \max \left\{\frac{
  \specfast^2(\pardelta_\epochs)}{\error^2} \cdot
\diagnorm{\optCovMat{\valuestar}}, \frac{1}{\error} \left[
  \specslow(2\pardelta_\epochs) + \frac{\bfun(\valuestar)}{1 -
    \contractPar} \sqrt{\log(4\numstates / \pardelta_\epochs)}
  \right] \right\}
\end{align}
where $\pardelta_\epochs = \tfrac{\deltaTol}{2^\epochs}$.
\end{enumerate}
\end{cors}
\noindent See Appendix~\ref{sec:CI-cor-proof} for the the proof.\\

Let us make a few comments about this claim; in this informal
discussion, we omit constants and terms that are logarithmic in the
pair $(\numstates, 1/\delta)$.
Corollary~\ref{cors:poleval-stopping-guarantee} guarantees that, with
a controlled probability, the algorithm terminates and returns an
accurate answer.  Moreover, over all epochs, it uses of the order
$\bigOh\left( \frac{\diagnorm{\optCovMat{\valuestar}}}{\error^2}
\right)$ samples to achieve $\error$ accuracy in the
$\ell_\infty$-norm.  This guarantee should be compared to local
minimax lower bounds from our past work~\cite{khamaru2020PE}: any
procedure with an oracle that has access to the true error
$\inftynorm{\valuehat - \valuestar}$ requires at least
$\frac{\diagnorm{\optCovMat{\valuestar}}}{\error^2}$ samples. Thus,
our procedure utilizes the same number of samples as any
instance-optimal procedure equipped with such an oracle, up to
constant and logarithmic factors.

Next, observe that the bound in the right hand side of the
expression~\eqref{eqn:Epoch-max} depends on $\log(1/\epsilon)$, and is
of higher order. Ignoring it for the moment, the number of epochs
required to achieve a given target accuracy $\error > 0$ is of the
order $\log_2\left(\frac{(1 - \contractPar)^2}{\error^2} \cdot
\diagnorm{\optCovMat{\valuestar}} \right)$ epochs, to an additive
constant. Finally, we refer the reader to
Corollary~\ref{cors:poleval-stopping-guarantee2} in
Appendix~\ref{sec:CI-cor-proof} for an analogue
Corollary~\ref{cors:poleval-stopping-guarantee} without the
assumption~\eqref{eqn:fast-and-slow-factors-relation}.


\subsection{Some numerical simulations}

In this section, we use a simple two stateMarkov reward process (MRP)
$\MRP = (\TranMat, \reward)$ to illustrate the behavior of our
methods; this family of instances was introduced in past work by a
subset of the current authors~\cite{pananjady2021instance,
  khamaru2020PE}. For a parameter $\prob \in [0,1 ]$, consider a
transition matrix $\TranMat \in \RR^{2 \times 2}$ and reward vector
$\reward \in \RR^2$ given by
\begin{align*}
\TranMat = \begin{bmatrix} \prob & 1 - \prob \\ 0 & 1 \end{bmatrix},
\quad \text{and} \quad \reward = \begin{bmatrix} 1
  \\ \rewscale \end{bmatrix}.
\end{align*}
where $\rewscale \in \real$ along with the discount $\contractPar \in
[0, 1)$ are additional parameters of the construction. Fix a scalar
  $\diff \geq 0$, and then set
\begin{align*}
\prob = \frac{4 \contractPar - 1}{3 \contractPar}, \quad \text{and}
\quad \rewscale = 1 - (1 - \contractPar)^{\diff}.
\end{align*}
Note that this is the MRP induced via the MDP defined in
Example~\ref{ExaSimple} given by the policy $\policy(\state) =
\action_1$ for $\state = \state_1, \state_2$.

By calculations similar to those in
Appendix~\ref{sec:ExaSimpleCalulations}, we have
\begin{align*}
\diagnorm{\optCovMat{\valuestar}}^{1/2} \leq c \cdot \left(\frac{1}{1
  - \contractPar}\right)^{1.5 - \diff}.
\end{align*}
Thus for larger choices of $\contractPar$ we see a ``harder'' problem,
and for larger $\diff$ we see an ``easier'' problem, indicated by the
complexity functional. If we knew
$\diagnorm{\optCovMat{\valuestar}}^{1/2} = c \cdot \left(\frac{1}{1 -
  \contractPar}\right)^{1.5 - \diff}$, then we could easily see that
an instance-optimal algorithm requires $\PEnumobs \gtrsim
\frac{1}{\error^2}\left(\frac{1}{1 - \contractPar} \right)^{3 -
  2\diff}$ samples to achieve $\error$ accuracy in the
$\ell_\infty$-norm. Typically we do not have knowledge of
$\diagnorm{\optCovMat{\valuestar}}^{1/2}$ prior to running the
algorithm so we would need to use at least $\PEnumobs \gtrsim
\frac{1}{\error^2(1 - \contractPar)^3}$ samples to guarantee $\error$
accuracy. For $\diff \geq 0.5$, it becomes apparent that an
instance-optimal algorithm requires fewer samples than the worst-case
guarantees indicate, and the differences become drastic for larger
$\diff$ and $\contractPar$. The~\ref{EmpIRE} algorithm allows the user
to exploit the instance-specific difficulty of the problem at hand and
avoid the worst-case scenario of requiring $\PEnumobs \gtrsim
\frac{1}{\error^2(1 - \contractPar)^3}$ samples.

We describe the details of the numerical simulations ran for the
policy evaluation setting here. For every combination of
$(\contractPar, \diff)$ we ran Algorithm~\ref{EmpIRE} with the ROOT-SA
algorithm~\cite{MKWBJVariance22} as our instance-optimal sub-procedure
on the MRP for $1000$ trials, measuring the factor savings, i.e. the
ratio of the number of samples required in the worst case to the
number of samples actually used, as well as for our estimate
$\valuehat$ the final predicted error for our estimated and the true
error. The $\contractPar$'s were chosen to be uniformly spaced between
$0.9$ and $0.99$ in the log-scale, and $\diff$ was chosen to be in the
set $\{1.0, 1.5\}$. The desired tolerance was chosen to be $\error =
0.1$. Our results are presented in Figure~\ref{FigFront}, as
previously described. The initial point $\values_0$ was chosen by
setting aside $\frac{2}{(1 - \contractPar)^2}$ samples to construct a
plug-in estimate of $\valuestar$. As expected, increasing both
$\contractPar$ and $\diff$ increases the savings in the number of
samples used, as compared to the worst-case
guarantees. Figure~\ref{FigFront} highlights the improvement in sample
size requirements of Algorithm~\ref{EmpIRE}, as compared to the using
the worst-case guarantees and illustrates the benefits of exploiting
the local structure of the problem at hand.

Figure~\ref{plot_errors} ensures to verify that our theoretical
guarantees describe the behavior observed in practice. For every
combination of $(\contractPar, \diff)$ we run $1000$ trials of
Algorithm~\ref{EmpIRE} and keeping track of the predicted error given
by equation~\eqref{eqn:general-CI-eqn1} as well as the true error
$\inftynorm{\valuehat - \valuestar}$. Algorithm~\ref{EmpIRE} was run
with chosen tolerance $\error = 0.1$. Our theory ensures that the true
error should be consistently below the predicted error for all
combinations of $\contractPar$ and $\diff$, which is the behavior
illustrated in Figure~\ref{plot_errors}. The plots also illustrate
that the true error is consistently far below the predicted error
(which itself is consistently below the specified tolerance $\error$),
demonstrating that our predictions are relatively conservative, and
that higher-order terms can potentially be dropped in error estimates
while still remaining a viable algorithm. Overall,
Figure~\ref{plot_errors} illustrates that the bounds on
Algorithm~\ref{EmpIRE} are correct and highlights its practical
utility in an idealized setting.
\begin{figure}[ht!]
\begin{center}
\begin{tabular}{ccc}
  \widgraph{0.45\textwidth}{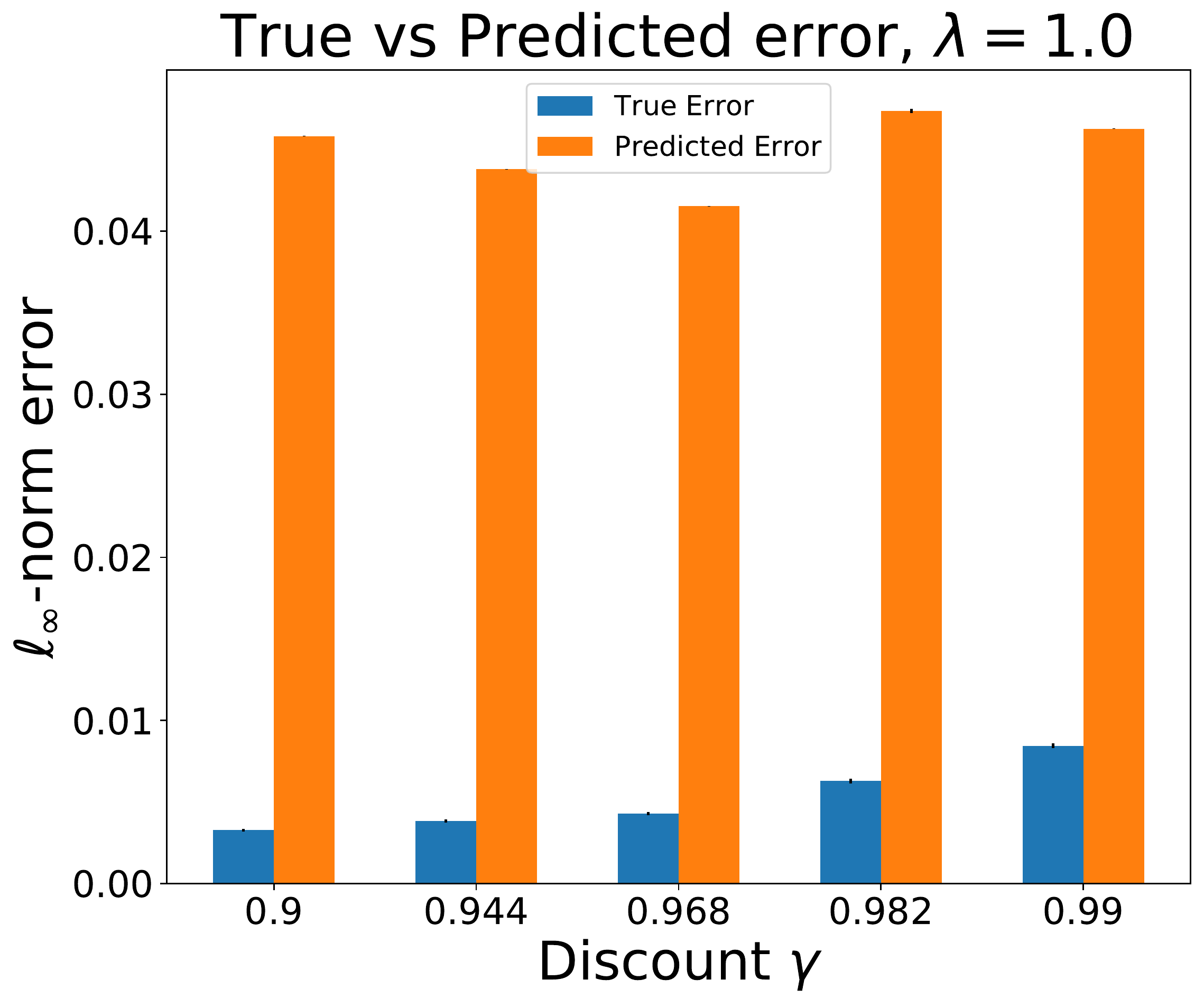} &&
  \widgraph{0.45\textwidth}{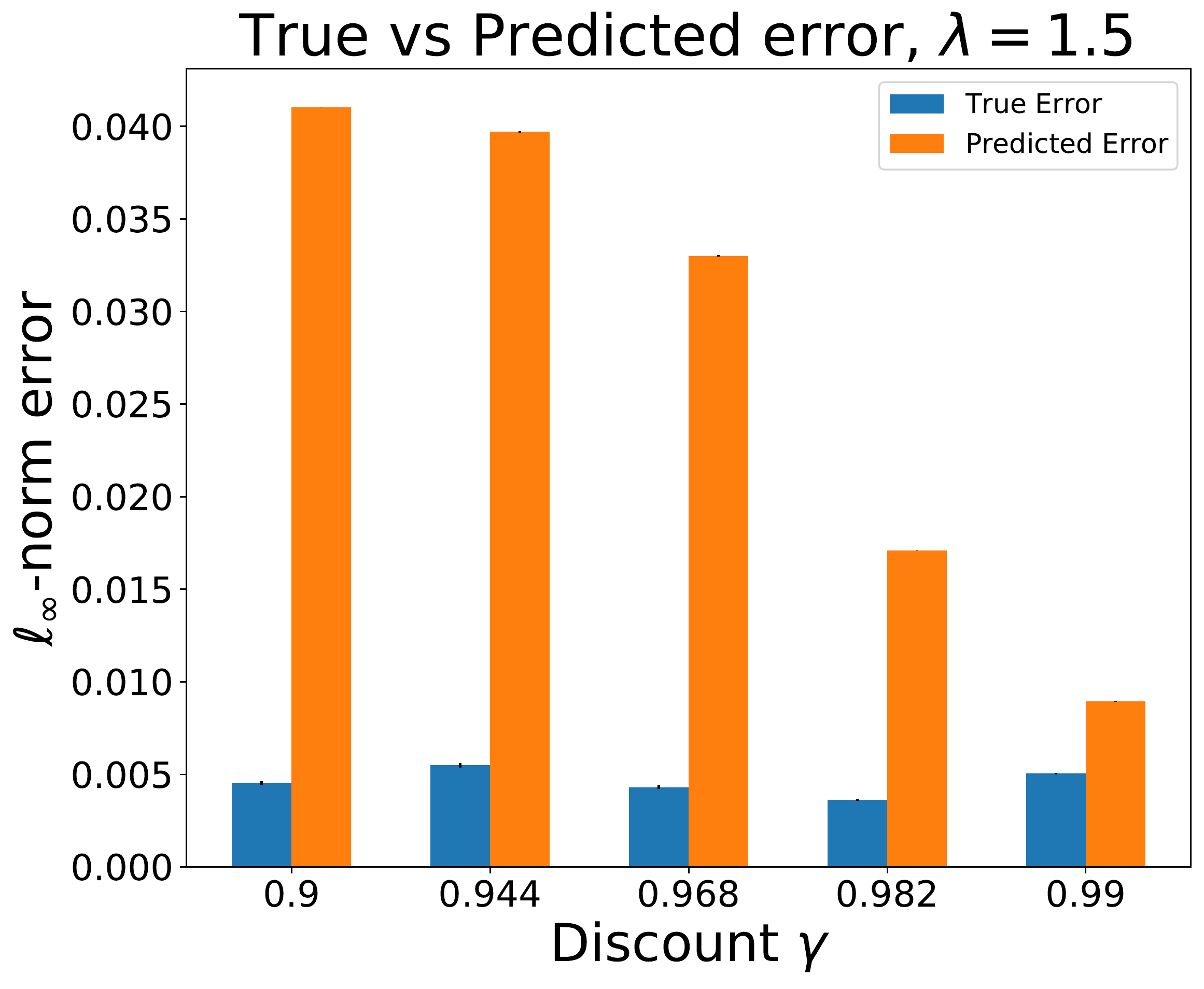}
  \\ (a) && (b)
  \end{tabular}
  \caption{Illustration of the termination behavior of
    Algorithm~\ref{EmpIRE} applied on the MRP. Plots the average of
    the true error (blue) and predicted error (orange) along with
    error bars denoting the standard deviation for different choices
    of $\contractPar$ and for (a) $\diff = 1.0$ and (b) $\diff =
    1.5$.}
\label{plot_errors}
\end{center}
\end{figure}


\section{Confidence regions for optimal $Q$-function estimation}
\label{SecPolOpt}

In this section, we derive confidence regions for optimal $Q$-function
estimation problem.  Let us first describe the functional that
characterizes the difficulty of the optimal $Q$-function estimation
problem.

\subsection{Instance-dependence for optimal $Q$-functions}

Given a sample $(\obsmatQ, \NoisyRewardQ)$ from our observation model
(see equation~\eqref{EqnGenerativeMDP}), we can define the
single-sample empirical Bellman operator as
\begin{align}
\label{EqnEmpiricalBellmanOpt}
\NoisyOptOp(\Qvalues) & \defn \NoisyRewardQ(\state, \action) +
\contractPar \sum_{\state' \in \stateset} \obsmatQ_\action(\state'
\mid \state) \max_{\action' \in \actionset} \Qvalues(\state',
\action'),
\end{align}
where we have introduced $\obsmatQ_\action(\state' \mid \state) \defn
\Indicator_{ \obsmatQ(\state, \action) = \state'}$.  Throughout this
section, we use the shorthand $\Dim = \card{\stateset \times
  \actionset}$.

In a recent paper~\cite{khamaru2021instance}, the authors
show the quantity that determines the difficulty of estimating the
optimal value function $\Qvaluestar$ is\footnote{The
quantity~$\QCovInstOpt(\rewardQ, \TranMatQ, \contractPar)$ is a the
same of the term~$\max_{\policystar \in \OptPolSet}
\inftynorm{\nu(\policystar; \TranMatQ, \rewardQ, \contractPar)}$ from
Theorem~1 from the paper~\cite{khamaru2021instance}.}
\begin{align}
\label{eqn:PolOptLB}
  \QCovInstOpt(\rewardQ, \TranMatQ, \contractPar) \defn \max_{\policy
    \in \OptPolSet} \; \diagnorm{(\Id - \contractPar
    \TranMatQ^\policy)^{-1} \optCovMatQ{\Qvaluestar} (\Id -
    \contractPar \TranMatQ^\policy)^{-\top}}^{\frac{1}{2}},
\end{align}
where the quality of an estimate is being measured by its
$\ell_\infty$ distance from $\Qvaluestar$. Here the set $\OptPolSet$
denotes the set of all optimal policies, the matrix
$\optCovMatQ{\Qvaluestar} \defn \covar(\NoisyOptOp(\Qvaluestar))$, and
$\TranMatQ^{\policy}$ is a right-linear mapping of $\RR^{\Dim}$ to
itself, whose action on a $Q$-function is given by
\begin{align*}
  \TranMatQ^\policy(\Qvalues)(\state, \action) = \sum_{\state' \in
    \stateset} \TranMatQ_{\action}(\state' \mid \state) \cdot
  \Qvalues(\state', \policy(\state')) \qquad \text{for all} \quad
  (\state, \action) \in \stateset \times \actionset.
\end{align*}
Note that by construction, the quantity $\NoisyOptOp(\Qvaluestar)$ is
an unbiased estimate of $\BellOptOp(\Qvaluestar)$, and the covariance
matrix $\covar(\NoisyOptOp(\Qvaluestar))$ in
equation~\eqref{eqn:PolOptLB} captures the noise present in the
empirical Bellman operator~\eqref{EqnEmpiricalBellmanOpt} as an
estimate of the population Bellman
operator~\eqref{EqnPopulationBellmanOpt}, when evaluated at the
optimal value function $\Qvaluestar$. As for the pre-factor $(\Id -
\contractPar \TranMatQ^{\pi})^{-1}$, by a von Neumann expansion we can
write
\begin{align*}
(\Id - \contractPar \TranMatQ^{\pi})^{-1} & = \sum_{k=0}^\infty
  (\contractPar \TranMatQ^\pi)^k.
\end{align*}
The sum of the powers of $\contractPar \TranMatQ^\policy$ accounts for
the compounded effect of an initial perturbation when following the
Markov chain specified by an optimal policy $\policy$. Put simply, the
quantity~\eqref{eqn:PolOptLB} captures the noise accumulated when an
optimal policy $\pi$ is followed.

The authors in the paper~\cite{khamaru2021instance} also show that
under appropriate assumptions, the estimate $\Qvaluehat_\numobs$,
obtained from a variance reduced $Q$-learning algorithm using
$\numobs$ i.i.d. samples, satisfies the bound
\begin{align}
\label{eqn:VR-Q-guarantee}
  \inftynorm{\Qvaluehat_\numobs - \Qvaluestar} \leq
  \specfast(\pardelta) \cdot \frac{\QCovInstOpt(\rewardQ, \TranMatQ,
    \contractPar)}{\sqrt{\numobs}} + \frac{c_2}{\numobs}
\end{align}
with probability at least $1 - \pardelta$. Here the functions
$\specfast$ and $\specslow$ depend on the tolerance parameter
$\pardelta \in (0,1)$, along with logarithmic factors in the
dimension.  Furthermore, this algorithm achieves the non-asymptotic
local minimax lower bound for the optimal $Q$-function estimation
problem (see Theorem 1 in the
paper~\cite{khamaru2021instance}).

\subsection{A conservative yet useful upper bound}
\label{sec:Convervative-UB-PolOpt}
Motivated by the success in Section~\ref{SecPolEval}, it is
interesting to ask if we can prove a data-dependent estimate for the
term $\QCovInstOpt(\rewardQ, \TranMatQ, \contractPar)$. Observe that
the term $\QCovInstOpt(\rewardQ, \TranMatQ, \contractPar)$ from
equation~\eqref{eqn:PolOptLB} depends on the set of \emph{all} optimal
policies $\OptPolSet$, and the authors are not aware of a
data-dependent efficient estimate without imposing restrictive
assumptions on the MDP~$\MDP = (\rewardQ, \TranMatQ,\contractPar)$.

Interestingly, simple algebra reveals that
\begin{align}
  \label{eqn:PolOptLooseUB}
  \QCovInstOpt(\rewardQ, \TranMatQ, \contractPar) \leq
  \frac{\diagnorm{\optCovMatQ{\Qvaluestar}}^{\frac{1}{2}}}{1 -
    \contractPar}.
\end{align}
Observe that all entries of the PSD matrix $\optCovMatQ{\Qvaluestar}$
are upper bounded by the scalar $\diagnorm{\optCovMatQ{\Qvaluestar}}$,
the entries of the matrix $(\Id - \contractPar
\TranMatQ^\policy)^{-1}$ are non-negative, and \mbox{$\|(\Id -
  \contractPar \TranMatQ^\policy)^{-1}\|_{1, \infty} \leq \frac{1}{1 -
    \contractPar}$}. Combining these three observations yield the
claim~\eqref{eqn:PolOptLooseUB}.

It is interesting to compare the upper bound~\eqref{eqn:PolOptLooseUB}
with a worst case upper bound on $\QCovInstOpt(\rewardQ, \TranMatQ,
\contractPar)$. In light of Lemma~7 from the
paper~\cite{Azar2013Minimax}, assuming $\rewardboundQ \leq 1$, and
$\inftynorm{\rewardQ} \leq 1$ for simplicity, we have
\begin{align}
\label{eqn:PlOptLooseUB-2}
  \QCovInstOpt(\rewardQ, \TranMatQ, \contractPar) \leq \frac{1}{(1 -
    \contractPar)^{1.5}}.
\end{align}
Combining the bounds~\eqref{eqn:PolOptLooseUB}
and~\eqref{eqn:PlOptLooseUB-2}, we see that the upper
bound~\eqref{eqn:PolOptLooseUB} is particularly useful when
\begin{align*}
  \diagnorm{\optCovMatQ{\Qvaluestar}}^{\frac{1}{2}} \ll
  \frac{1}{\sqrt{1 - \contractPar}}.
\end{align*}
As an illustration, we now describe an interesting sequence of MDPs
for which the last condition is satisfied.

\begin{example}[A continuum of illustrative examples]
  \label{ExaSimple}
Consider an MDP with two states $\{\state_1, \state_2\}$, two actions
$\{\action_1, \action_2\}$, and with transition functions and reward
functions given by
\begin{align}
\label{eqn:ex-1-transitions-andrewards}
    \TranMatQ_{\action_1} = \begin{bmatrix} p & 1 - p \\ 0 &
      1 \end{bmatrix} \quad \TranMatQ_{\action_2} = \begin{bmatrix} 1
      & 0 \\ 0 & 1 \end{bmatrix}, \quad \mbox{and} \quad \rewardQ
    = \begin{bmatrix} 1 & 0 \\ \tau & 0 \end{bmatrix}.
\end{align}
We assume that there is no randomness in the reward samples. Here the
pair $(p, \taupar)$ along with the discount factor $\contractPar$ are
parameters of the construction, and we consider a sub-family of these
parameters indexed by a scalar $\lampar \geq 0$.  For any such
$\lampar$ and discount factor $\contractPar \in (\tfrac{1}{4}, 1)$,
consider the setting
\begin{align*}
    p = \frac{4 \contractPar - 1}{3 \contractPar}, \quad \text{and}
    \quad \taupar = 1 - (1 - \contractPar)^{\lampar}.
\end{align*}
With these choices of parameters, the optimal $Q$-function $\Qstar$
takes the form
\begin{align*}
\Qstar = \begin{bmatrix} \frac{(1 - \contractPar) + \contractPar
    \taupar (1 - p)}{(1 - \contractPar)(1 - \contractPar p)} &
  \contractPar \cdot \frac{(1 - \contractPar) + \contractPar \taupar
    (1 - p)}{(1 - \contractPar)(1 - \contractPar p)}
  \\ \frac{\taupar}{1 - \contractPar} & \frac{\contractPar \taupar}{1
    - \contractPar} \end{bmatrix},
\end{align*} 
with an unique optimal policy $\policystar(\state_1) =
\policystar(\state_2) = \action_1$.  We can then compute that
\begin{align}
\label{eqn:simple-instance-bound}
  \diagnorm{\optCovMatQ{\Qvaluestar}}^{\frac{1}{2}} = \frac{1 -
    \taupar}{1 - \contractPar p} \cdot \sqrt{p(1 - p)} = c \cdot
  \left(\frac{1}{1-\contractPar}\right)^{\frac{1}{2} - \lampar} <
  \frac{1}{\sqrt{1 - \contractPar}}.
\end{align}
See Appendix~\ref{sec:ExaSimpleCalulations} for the details of this
calculation.

Example~\ref{ExaSimple} shows that for any $\lampar > 0$, the bound
$\frac{\diagnorm{\optCovMatQ{\Qvaluestar}}^{\frac{1}{2}}}{1-
  \contractPar}$ is smaller than the worst case bound of $\frac{1}{(1
  - \contractPar)^{1.5}}$ by a factor of $\frac{1}{(1 -
  \contractPar)^\lampar}$.  We point out that this gap is significant
when the discount factor $\contractPar$ is close to $1$. For instance,
for $\contractPar = 0.99$ and $\lampar = 1$, the upper
bound~\eqref{eqn:PolOptLooseUB} is $10^2$ times better than the worst
case bound. Alternatively, the bound~\eqref{eqn:VR-Q-guarantee} yields
an improvement of a factor of $10^4$, when compared to a worst case
value.
\end{example}

\subsection{Data-dependent bounds for optimal $Q$-functions}
\label{sec:Data-dependent-bound}

In this section, we provide a data-dependent upper bound on
$\inftynorm{\widehat{Q} - \Qstar}$, where the estimate
$\widehat{\Qvalues}$ is obtained from any algorithm $\AlgoPolOpt$
satisfying certain convergence criterion.

\subsubsection{Instance-valid algorithms}
We assume that given access to $\numobs$ generative samples
\mbox{$\Qdata_\numobs \defn \{ \NoisyRewardQ_i, \obsmatQ_i \}_{i =
    1}^\numobs$} from an MDP~$\MDP = (\rewardQ, \TranMatQ,
\contractPar)$, the output $\Qvaluehat_\numobs \defn
\AlgoPolOpt(\Qdata_\numobs)$, obtained using the data set
$\Qdata_\numobs$, satisfies
\begin{align}
\label{eqn:PolOpt-algo-prop}
\AlgoPolOpt \texttt{ condition:} \qquad \inftynorm{\Qvaluehat_\numobs
  - \Qvaluestar} \leq \frac{\specfast(\pardelta)}{\sqrt{\numobs}}
\cdot \frac{\diagnorm{\optCovMatQ{\Qvaluestar}}^{\frac{1}{2}}}{(1 -
  \gamma)} + \frac{\specslow(\pardelta)}{\numobs}.
\end{align}
with probability $1 - \pardelta$.  Here $\specfast$ and $\specslow$
are functions of the tolerance level $\pardelta$; in practical
settings, they may in addition depend on the problem dimension in some
cases, but we suppress this dependence for simplicity.

We point out that any instance-optimal algorithm, i.e. algorithms
satisfying the condition~\eqref{eqn:VR-Q-guarantee}, automatically
satisfies the condition~\eqref{eqn:PolOpt-algo-prop}. One added
benefit of the condition~\eqref{eqn:PolOpt-algo-prop} is that this
condition is much easier to verify. To illustrate this point, in
Proposition~\ref{prop:vrqlearn-bound} in
Appendix~\ref{sec:VRQ-prop-proof} , we show that the variance reduced
$Q$-learning algorithm from the paper~\cite{khamaru2021instance}
satisfies the condition~\eqref{eqn:PolOpt-algo-prop} under a
\emph{milder assumption} (cf. Proposition~\ref{prop:vrqlearn-bound} in
Appendix~\ref{sec:VRQ-prop-proof} and Theorem 2 from the
paper~\cite{khamaru2021instance} ).  In the rest of the section, we
focus on providing a data-dependent estimate for
$\diagnorm{\optCovMatQ{\qvaluesstar}}^{\frac{1}{2}}$.

\subsubsection{Constructing a data-dependent bound}

In order to obtain a data-dependent bound on
$\inftynorm{\Qvaluehat_\numobs - \Qvaluestar}$, it suffices to
estimate the complexity term
$\diagnorm{\optCovMatQ{\Qvaluestar}}^{1/2}$. Our next result
guarantees that there is a data-dependent estimate
$\QVarEst(\Qvaluehat_\numobs, \Qdata)$ which provides an upper bound
that holds with high probability, and is within constant factors of
$\diagnorm{\optCovMatQ{\Qvaluestar}}^{1/2}$. With a recycling of
notation, we define \mbox{$\bfun(\Qvalues) \defn \rewardbound +
  \contractPar \inftynorm{\Qvalues}$,} and construct the
\textit{empirical error estimate}
\begin{align}
\label{EqnQempError}  
\ErrEstQ(\Qvaluehat_\Qnumobs, \Qdata_\Qnumobs, \pardelta) \defn
\frac{2\sqrt{2} \cdot \specfast(\pardelta)}{\sqrt{\numobs}} \cdot
\frac{\diagnorm{\QVarEst(\Qvaluehat_\numobs,
    \Qdata_\numobs)}^{1/2}}{(1 - \contractPar)} +
\frac{2\specslow(\pardelta)}{\Qnumobs} + \frac{8
  \bfun(\Qvaluehat_\Qnumobs)}{1 - \contractPar} \cdot
\frac{\specfast(\pardelta) \sqrt{2 \log(\Dims /\pardelta)}}{\Qnumobs -
  1}.
\end{align}
In this error estimate, the base estimator uses the $\Qnumobs$-sample
dataset $\Qdata_\Qnumobs$ both to compute the $Q$-value function
estimate $\Qvaluehat_\Qnumobs \defn \AlgoPolOpt(\Qdata_\numobs)$.  It
also uses the same data to estimate the covariance
$\QVarEst(\Qvaluehat_\Qnumobs, \Qdata_\Qnumobs)$, according to the
procedure given in Section~\ref{SecProcedure}. \\

\noindent The following result summarizes the guarantees associated
with the empirical error estimate~\eqref{EqnQempError}:
\begin{theorem}
\label{thm:general-CI-Policy-opt}
Let $\AlgoPolOpt$ be any $(\specfast, \specslow)$ instance-dependent
algorithm (cf. relation~\eqref{eqn:PolOpt-algo-prop}). Then for any
$\pardelta \in (0, 1)$, given a dataset $\Qdata_\Qnumobs$ with
$\Qnumobs \geq \specfast^2(\pardelta) \cdot \frac{32 \cdot \log(4
  \Dims / \pardelta)}{(1 - \contractPar)^2}$ samples, the following
holds with probability at least $1 - 2\pardelta$:
\begin{enumerate}
\item[(a)] The $\ell_\infty$-error is upper bounded as
\begin{subequations}
\begin{align}
\label{eqn:general-CI-PolOpt-1}
\inftynorm{\Qvaluehat_\Qnumobs - \Qvaluestar} \leq
\ErrEstQ(\Qvaluehat_\Qnumobs, \Qdata_\Qnumobs, \pardelta).
\end{align}
\item[(b)] Moreover, this guarantee is order-optimal in the sense that
\begin{align}
\label{eqn:general-CI-PolOpt-2}
\ErrEstQ(\Qvaluehat_\Qnumobs, \Qdata_\Qnumobs, \pardelta) \leq \frac{7
  \cdot \specfast(\pardelta)}{\sqrt{\Qnumobs}} \cdot
\frac{\diagnorm{\optCovMatQ{\Qvaluestar}}^{1/2}}{1 - \contractPar} +
\frac{6 \specslow(\pardelta)}{\Qnumobs} + \frac{5
  \bfun(\Qvaluestar)}{1 - \contractPar} \cdot \frac{\sqrt{\log(\Dims /
    \pardelta)}}{\Qnumobs - 1}.
\end{align}
\end{subequations}
\end{enumerate}
\end{theorem}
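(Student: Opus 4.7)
The plan is to reduce the claim to a two-sided comparison between the empirical quantity $\diagnorm{\QVarEst(\Qvaluehat_\Qnumobs, \Qdata_\Qnumobs)}^{1/2}$ and the population functional $\diagnorm{\optCovMatQ{\Qvaluestar}}^{1/2}$, and then invoke the algorithmic guarantee~\eqref{eqn:PolOpt-algo-prop}. Since $\QVarEst$ is evaluated at the random point $\Qvaluehat_\Qnumobs$, I route the comparison through the deterministic point $\Qvaluestar$: first I establish a probabilistic bound relating $\diagnorm{\QVarEst(\Qvaluestar, \Qdata_\Qnumobs)}^{1/2}$ to $\diagnorm{\optCovMatQ{\Qvaluestar}}^{1/2}$, and then a deterministic perturbation bound relating $\diagnorm{\QVarEst(\Qvaluehat_\Qnumobs, \Qdata_\Qnumobs)}^{1/2}$ to $\diagnorm{\QVarEst(\Qvaluestar, \Qdata_\Qnumobs)}^{1/2}$. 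The two $\pardelta$'s in the final failure probability come from these two high-probability events (concentration and algorithm guarantee), combined by a union bound.

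For the probabilistic step, I use that $\Qvaluestar$ is deterministic, so each diagonal entry of $\QVarEst(\Qvaluestar, \Qdata_\Qnumobs)$ is a $V$-statistic variance estimator of the corresponding entry of $\optCovMatQ{\Qvaluestar}$, built from the i.i.d. random variables $\NoisyOptOp_k(\Qvaluestar)(\state, \action)$, whose deviations are bounded by $\order(\bfun(\Qvaluestar))$ with $\bfun(\Qvaluestar) = \rewardboundQ + \contractPar \inftynorm{\Qvaluestar}$. A Bernstein-type inequality for $V$-statistic variance estimators, combined with a union bound over the $\Dims$ coordinates and an AM-GM absorption of the Bernstein cross term, yields with probability at least $1 - \pardelta$ two-sided bounds of the schematic form
\begin{equation*}
\diagnorm{\optCovMatQ{\Qvaluestar}}^{1/2}
\;\leq\; c_1 \diagnorm{\QVarEst(\Qvaluestar, \Qdata_\Qnumobs)}^{1/2}
+ c_2 \bfun(\Qvaluestar)\sqrt{\tfrac{\log(\Dims/\pardelta)}{\Qnumobs}},
\end{equation*}
together with the analogous converse direction.

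For the deterministic perturbation step, non-expansiveness of $\max$ yields $|\NoisyOptOp_k(\Qvaluehat_\Qnumobs)(\state,\action) - \NoisyOptOp_k(\Qvaluestar)(\state,\action)| \leq \contractPar \inftynorm{\Qvaluehat_\Qnumobs - \Qvaluestar}$ for every sample (the reward terms cancel). Expanding the pairwise differences appearing in $\QVarEst(\Qvaluehat_\Qnumobs, \Qdata_\Qnumobs) - \QVarEst(\Qvaluestar, \Qdata_\Qnumobs)$ and bounding the cross terms coordinatewise produces the deterministic bound
\begin{equation*}
\diagnorm{\QVarEst(\Qvaluehat_\Qnumobs, \Qdata_\Qnumobs)}^{1/2}
\;\leq\; \sqrt{2}\,\diagnorm{\QVarEst(\Qvaluestar, \Qdata_\Qnumobs)}^{1/2}
+ c_3 \sqrt{\bfun(\Qvaluestar)\inftynorm{\Qvaluehat_\Qnumobs - \Qvaluestar}},
\end{equation*}
together with the analogous converse. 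For part~(a), I chain these two ingredients with~\eqref{eqn:PolOpt-algo-prop}, substituting on the right-hand side of the algorithm bound first $\diagnorm{\optCovMatQ{\Qvaluestar}}^{1/2}$ by $\diagnorm{\QVarEst(\Qvaluestar, \cdot)}^{1/2}$, and then by $\diagnorm{\QVarEst(\Qvaluehat_\Qnumobs, \cdot)}^{1/2}$. The residual term $\sqrt{\bfun(\Qvaluestar)\inftynorm{\Qvaluehat_\Qnumobs - \Qvaluestar}}$ is absorbed into a fraction of $\inftynorm{\Qvaluehat_\Qnumobs - \Qvaluestar}$ on the left-hand side via Young's inequality $\sqrt{\alpha\beta} \leq \tfrac{1}{2}(\eta\alpha + \eta^{-1}\beta)$ with $\eta$ proportional to $\specfast(\pardelta)/\sqrt{\Qnumobs(1-\contractPar)^2}$; the hypothesis $\Qnumobs \geq 32\specfast^2(\pardelta)\log(4\Dims/\pardelta)/(1-\contractPar)^2$ is precisely what forces this fraction below $1/2$ and delivers the prefactor $2\sqrt{2}$ of $\ErrEstQ$. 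Replacing $\bfun(\Qvaluestar)$ by $\bfun(\Qvaluehat_\Qnumobs)$ via the triangle inequality and the just-proved bound on $\inftynorm{\Qvaluehat_\Qnumobs - \Qvaluestar}$ then produces~\eqref{eqn:general-CI-PolOpt-1}. Part~(b) is obtained by chaining the converse inequalities inside the definition of $\ErrEstQ$ and invoking~\eqref{eqn:PolOpt-algo-prop} once more to control the residual $\sqrt{\bfun\cdot\inftynorm{\Qvaluehat_\Qnumobs - \Qvaluestar}}$; the same Young-type absorption yields~\eqref{eqn:general-CI-PolOpt-2}.

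The principal obstacle is that $\QVarEst$ is evaluated at the data-dependent point $\Qvaluehat_\Qnumobs$, so a direct concentration argument for $V$-statistics does not apply. The deterministic reduction to the fixed point $\Qvaluestar$ sidesteps this, but forces the square-root coupling $\sqrt{\bfun(\Qvaluestar)\inftynorm{\Qvaluehat_\Qnumobs - \Qvaluestar}}$ between covariance perturbation and algorithmic error; balancing this coupling via Young's inequality so that the residual constants match the precise prefactors $2\sqrt{2}$ and $8$ of $\ErrEstQ$ is the most delicate bookkeeping step, and is where the sample-size hypothesis is consumed. A secondary subtlety relative to the policy-evaluation analysis of Theorem~\ref{thm:main-thm-poleval} is that the Bellman optimality operator is nonlinear in $\Qvalues$ through $\max_{\action'}$; only non-expansiveness (not the linearity exploited in the policy-evaluation proof) is available, so the perturbation bound must be derived coordinatewise rather than via matrix manipulations as in the MRP case.
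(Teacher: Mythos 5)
Your overall architecture---an empirical Bernstein bound at the deterministic point $\Qvaluestar$, a deterministic perturbation bound from $\Qvaluestar$ to $\Qvaluehat_\Qnumobs$, and then chaining with the algorithmic guarantee~\eqref{eqn:PolOpt-algo-prop} while absorbing the self-referential error term using the sample-size hypothesis---is exactly the paper's (Lemma~\ref{lem:emp-bern} combined with Lemma~\ref{QCov-Lipschitz-lemma}). The problem is the form of your perturbation bound, and it is not mere bookkeeping. Writing each pairwise difference as $a + \Delta$ with $a$ the difference of $\NoisyOptOp$-values at $\Qvaluestar$ and $|\Delta| \le 2\contractPar \inftynorm{\Qvaluehat_\Qnumobs - \Qvaluestar}$, you retain the cross term $2|a||\Delta|$ and bound $|a| \lesssim \bfun(\Qvaluestar)$ uniformly, which produces the residual $c_3\sqrt{\bfun(\Qvaluestar)\inftynorm{\Qvaluehat_\Qnumobs - \Qvaluestar}}$. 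The paper instead applies $(a+\Delta)^2 \le 2a^2 + 2\Delta^2$ together with the $\contractPar$-non-expansiveness of the noisy Bellman operator, which eliminates the cross term and yields a residual that is \emph{linear} in $\inftynorm{\Qvaluehat_\Qnumobs - \Qvaluestar}$ with the absolute constant $\sqrt{8}$ and no $\bfun$ factor. Downstream this is decisive: the linear residual gives a self-referential term with coefficient $\sqrt{8}\,\specfast(\pardelta)/(\sqrt{\Qnumobs}(1-\contractPar)) \le 1/2$ under the stated sample-size condition, which absorbs into the left-hand side with no leftover, so the $\bfun$ term in $\ErrEstQ$ arises solely from the Bernstein step.

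With your geometric-mean residual the absorption cannot be completed. After multiplying by the prefactor $\specfast(\pardelta)/(\sqrt{\Qnumobs}(1-\contractPar))$, any choice of $\eta$ in $\sqrt{\alpha\beta} \le \tfrac{1}{2}(\eta\alpha + \eta^{-1}\beta)$ that keeps the coefficient of $\inftynorm{\Qvaluehat_\Qnumobs - \Qvaluestar}$ below a fixed fraction of one necessarily leaves behind a term of order at least
\begin{align*}
\frac{\specfast^2(\pardelta)\, \bfun(\Qvaluestar)}{\Qnumobs (1 - \contractPar)^2},
\end{align*}
which exceeds the third term of $\ErrEstQ$ by a factor of order $\specfast(\pardelta)/\bigl((1-\contractPar)\sqrt{\log(\Dims/\pardelta)}\bigr)$. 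The hypothesis $\Qnumobs \ge 32\,\specfast^2(\pardelta)\log(4\Dims/\pardelta)/(1-\contractPar)^2$ is only a lower bound on $\Qnumobs$ and gives no control of this ratio, which diverges as $\contractPar \to 1$; so parts (a) and (b) with the $\ErrEstQ$ of equation~\eqref{EqnQempError} do not follow as written. The fix is local: either discard the cross term via $(a+\Delta)^2 \le 2a^2 + 2\Delta^2$, or bound the averaged cross term by Cauchy--Schwarz against $\QVarEst(\Qvaluestar)$ itself rather than against $\bfun(\Qvaluestar)$; either route yields a linear residual, after which the rest of your chain goes through exactly as in the paper. A minor further point: Lemma~\ref{lem:emp-bern} already controls $\bigl|\sqrt{\EE[\VarEst(Z)]} - \sqrt{\VarEst(Z)}\bigr|$ directly at the level of standard deviations, so no AM--GM absorption of a Bernstein cross term is needed at the concentration step.
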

\noindent See Section~\ref{sec:Proof-of-Policy-Opt-Thm} for the
proof.\\

A few comments regarding Theorem~\ref{thm:general-CI-Policy-opt} are
in order. Like in policy evaluation, the empirical error estimate
$\ErrEstQ$ can be computed based on the data and we obtain a
data-dependent confidence interval for $\Qvaluestar$. In particular,
equation~\eqref{eqn:general-CI-PolOpt-1} guarantees that
\begin{align*}
\left[ \Qvaluehat_\Qnumobs(\state, \action) - \ErrEstQ,
  \Qvaluehat_\Qnumobs(\state, \action) + \ErrEstQ \right] \ni
\Qvaluestar(\state, \action) \qquad \text{uniformly for all } \state
\in \stateset, \action \in \actionset
\end{align*}
with probability at least $1 - 2 \pardelta$.

We point out that the dominating term in the error estimate $\ErrEstQ$
is proportional to $\numobs^{-1/2}
\diagnorm{\QVarEst(\Qvaluehat_\numobs, \Qdata_\numobs)}^{1/2}$, and
this is an estimate of the dominating term $\numobs^{-1/2}
\diagnorm{\optCovMatQ{\Qvaluestar}}^{1/2}$ from the
bound~\eqref{eqn:PolOpt-algo-prop}.  Additionally, the
bound~\eqref{eqn:general-CI-PolOpt-2} ensures that the proposed
data-dependent bound in equation~\eqref{eqn:general-CI-PolOpt-1} is a
sharp approximation of $\numobs^{-1/2}
\diagnorm{\optCovMatQ{\Qvaluestar}}^{1/2}$ from the
bound~\eqref{eqn:PolOpt-algo-prop}.


\subsubsection{Estimation of $\diagnorm{\optCovMatQ{\qvaluesstar}}^{\frac{1}{2}}$}
\label{SecProcedure}

In this section, we provide the details on the construction of the
estimate $\diagnorm{\QVarEst(\Qvaluehat_\numobs,
  \Qdata_\numobs)}^{1/2}$ of
$\diagnorm{\optCovMatQ{\qvaluesstar}}^{\frac{1}{2}}$.  Given a
generative data set $\mbox{$\Qdata_\numobs \defn \{ \NoisyRewardQ_i,
  \obsmatQ_i \}_{i = 1}^\numobs$}$, we first use $\Qvaluehat_\numobs =
\AlgoPolOpt(\Qdata_\numobs)$ to estimate $\Qvaluestar$, and use the
same data set $\Qdata_\numobs$ to provide an empirical estimate of the
quantity $\diagnorm{\optCovMatQ{\Qvaluestar}}^{\frac{1}{2}}$. Observe
that it suffices to estimate the diagonal entries of the matrix
$\optCovMatQ{\Qvaluestar}$. Accordingly, our empirical covariance
matrix $\QVarEst( \Qvaluehat_\numobs, \Qdata_\numobs)$, based on a
data set \mbox{$\Qdata_\numobs$}, is a $\Dim \times \Dim$
\emph{diagonal matrix}. Concretely, let $\NoisyOptOp_i$ denote the
empirical estimate of the Bellman optimality operator based on the
$i^{th}$ generative sample, i.e.,
\begin{align*}
\NoisyOptOp_i(\Qvalues)(\state, \action) = \NoisyRewardQ_i(\state,
\action) + \contractPar \sum_{\state' \in \stateset}
\obsmatQ_i^{\action}(\state' \mid \state) \cdot \max_{\action \in
  \actionset} \Qvalues(\state', \action), \quad \text{for all} \;\;
(\state, \action) \in \stateset \times \actionset.
\end{align*}
Recall that $\obsmatQ_i^{\action}(\state' \mid \state)$ is $1$ if and
only if the sample in $\obsmatQ_i$ from $(\state, \action)$ gives
$\state'$ as the next state, and zero otherwise.  We define the
$(\state, \action)^{th}$ diagonal entry of the diagonal matrix
$\QVarEst( \Qvaluehat_\numobs, \Qdata_\numobs)$ as
\begin{align}
\label{eqn:QVarEst-defn}
\QVarEst( \Qvaluehat_\numobs, \Qdata_\numobs)((\state, \action),
(\state, \action)) = \frac{1}{\numobs(\numobs - 1)} \sum_{1 \leq i < j
  \leq k} \left\{ \NoisyOptOp_i(\Qvaluehat_\numobs)(\state, \action) -
\NoisyOptOp_j(\Qvalues)(\state, \action) \right\}^2.
\end{align}

\subsubsection{Early stopping for optimal $Q$-functions}

Inspired by Algorithm~\ref{EmpIRE}, we can take an instance-dependent
procedure $\AlgoPolOpt$ satisfying the
condition~\eqref{eqn:PolOpt-algo-prop} and construct a similar
stopping procedure that terminates when the data-dependent upper bound
of $\inftynorm{\qvalueshat - \qvaluesstar}$ estimate is below some
user-specified threshold $\error$. Following Algorithm~\ref{EmpIRE},
one would iteratively recompute the estimate $\qvalueshat$ using the
algorithm~$\AlgoPolOpt$, and check the condition
\begin{align*}
 & \frac{2\sqrt{2} \specfast(\pardelta_\numobs)}{\sqrt{\numobs}} \cdot
  \frac{\diagnorm{\QVarEst(\Qvaluehat_\numobs,
      \Qdata_\numobs)}^{1/2}}{(1 - \contractPar)} +\frac{2\sqrt{2}
    \specfast(\pardelta_\numobs) \cdot\sqrt{16\log(D/\pardelta)} }{(1
    - \contractPar)} \cdot
  \frac{\bfun(\Qvaluehat_\numobs)}{\numobs - 1} + \frac{2
    \specslow{\pardelta_\numobs}}{\numobs} \leq \error.
\end{align*}
We terminate the algorithm~$\AlgoPolOpt$ when this criterion is
satisfied.  Again, the correctness of such stopping procedure follows
from Theorem~\ref{thm:general-CI-Policy-opt} and a union bound.

\subsection{Some numerical simulations}

In the setting of $Q$-learning, the worst-case sample complexity for
any MDP to achieve $\error$-accuracy is $\frac{1}{\error^2(1 -
  \contractPar)^3}$. However Example~\ref{ExaSimple} illustrates that
even under our looser instance-dependent guarantee from
equation~\eqref{eqn:PolOptLooseUB}, we only require
$\frac{1}{\error^2(1 - \contractPar)^{3 - 2 \lambda}}$ samples. Like
the example in policy evaluation, there can still be a substantial
difference between the number of samples required for the worst-case
versus the instance-dependent case. We illustrate the gains provided
by Algorithm~\ref{EmpIRE} in the following numerical simulations.

For every combination of $(\contractPar, \diff)$, we ran
Algorithm~\ref{EmpIRE} with the ROOT-SA algorithm~\cite{MKWBJVariance22} as our
base procedure on the MDP described in Example~\ref{ExaSimple} for
$1000$ trials. We used values of $\contractPar$ that were uniformly
spaced between $0.9$ and $0.99$ on the log-scale, and the two choices
$\diff \in \{1.0, 1.5\}$. The desired tolerance was set at $\error =
0.05$. The initialization point $\qvalues_0$ was chosen via setting
aside $\frac{2}{(1 - \contractPar)^2}$ samples and estimating
$\reward$ and $\TranMatQ$ via averaging, and then solving for the
optimal $Q$-function for this MDP. We measured the factor savings by
computing the ratio of the worst-case sample complexity
$\frac{1}{\error^2(1-\contractPar)^3}$ with the number of samples used
by Algorithm~\ref{EmpIRE}. The results are presented in
Figure~\ref{plot_mdp_factorsavings}. In order to verify the
correctness of our guarantee, for each trial we also keep track of the
predicted error given by equation~\eqref{eqn:general-CI-PolOpt-1} and
the true error $\inftynorm{\Qvaluehat - \Qvaluestar}$. We see in
Figure~\ref{mdp_plot_errors} that the true error is consistently below
the predicted error and also consistently below the specified error
threshold of $0.5$. These plots illustrate the practical benefits that
Algorithm~\ref{EmpIRE} brings when taking advantage of
instance-dependent theory.

\begin{figure}[ht!]
  \begin{center}
  \widgraph{0.6\textwidth}{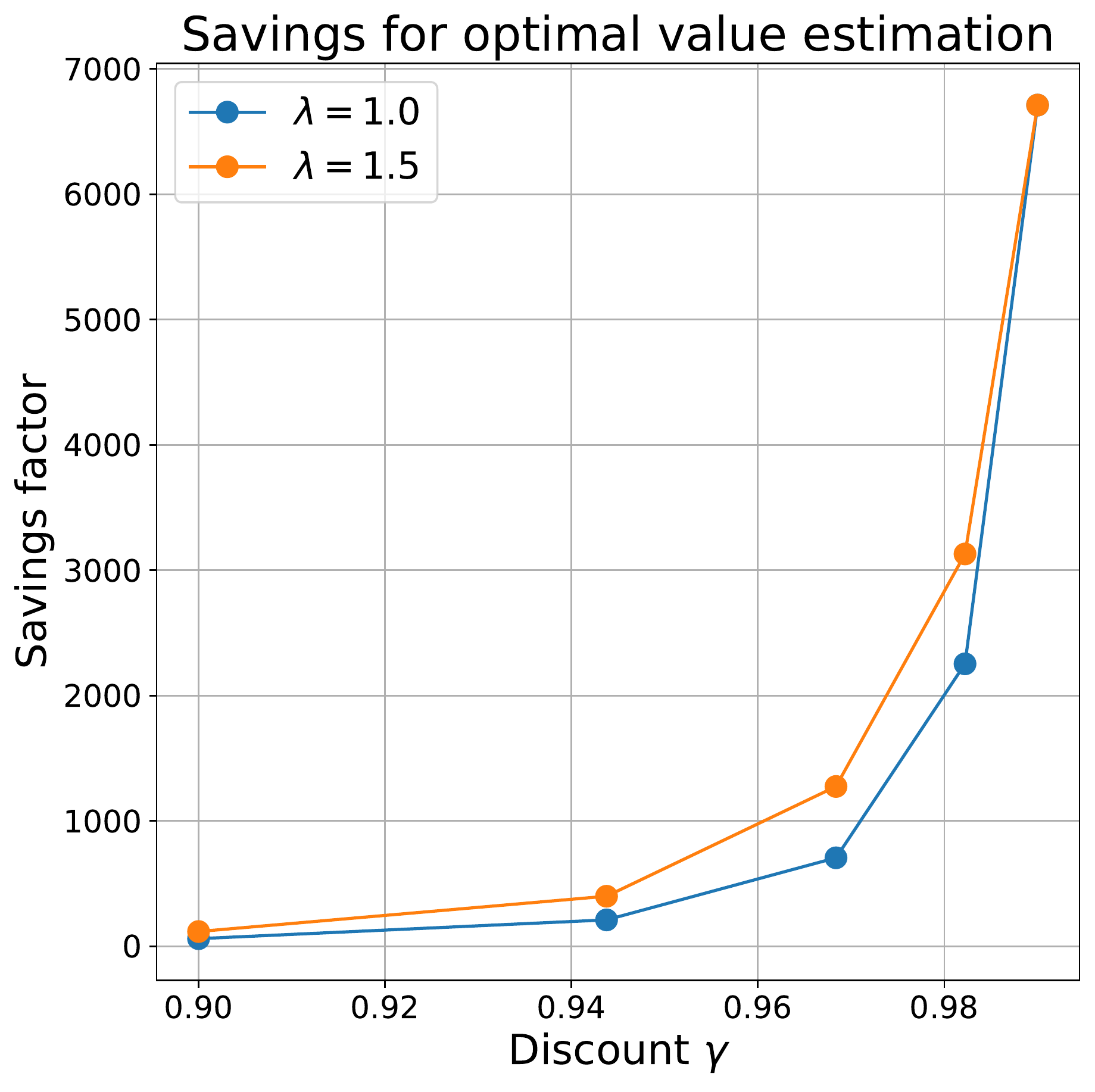}
  \caption{Illustration of the savings in sample size requirements of
    Algorithm~\ref{EmpIRE} on the MDP in Example~\ref{ExaSimple} for
    different choices of $\contractPar$ and $\diff$. The figure plots
    the factor of savings, i.e. the ratio of the number of samples
    required in the worst case to the number of samples actually used
    against the log discount complexity factor for both $\lambda =
    1.0$ (blue) and $\lambda = 1.5$ (orange).}
  \label{plot_mdp_factorsavings}
  \end{center}
\end{figure}

\begin{figure}[ht!]
  \begin{center}
  \begin{tabular}{ccc}
  \widgraph{0.5\textwidth}{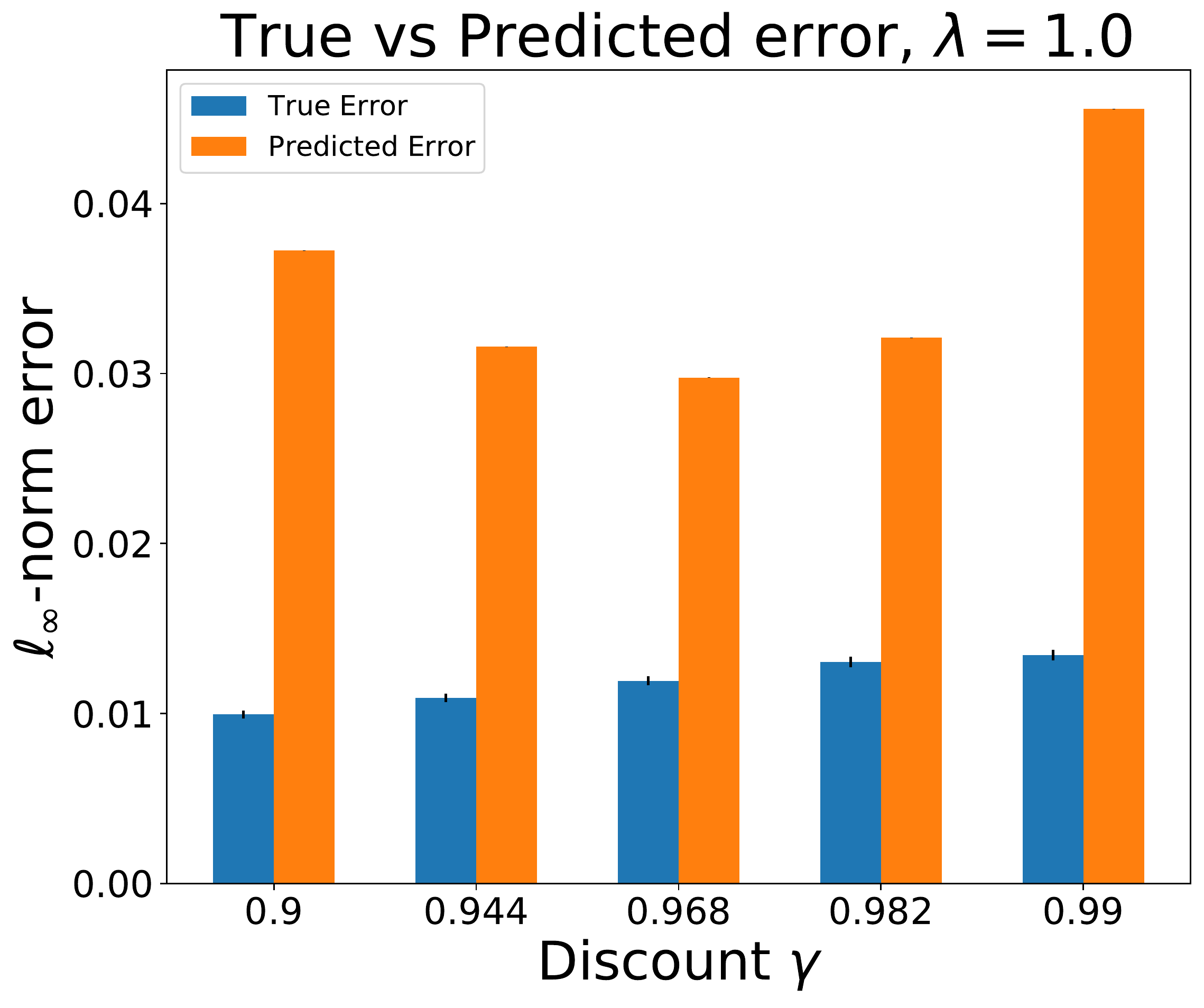}
  &&
  \widgraph{0.5\textwidth}{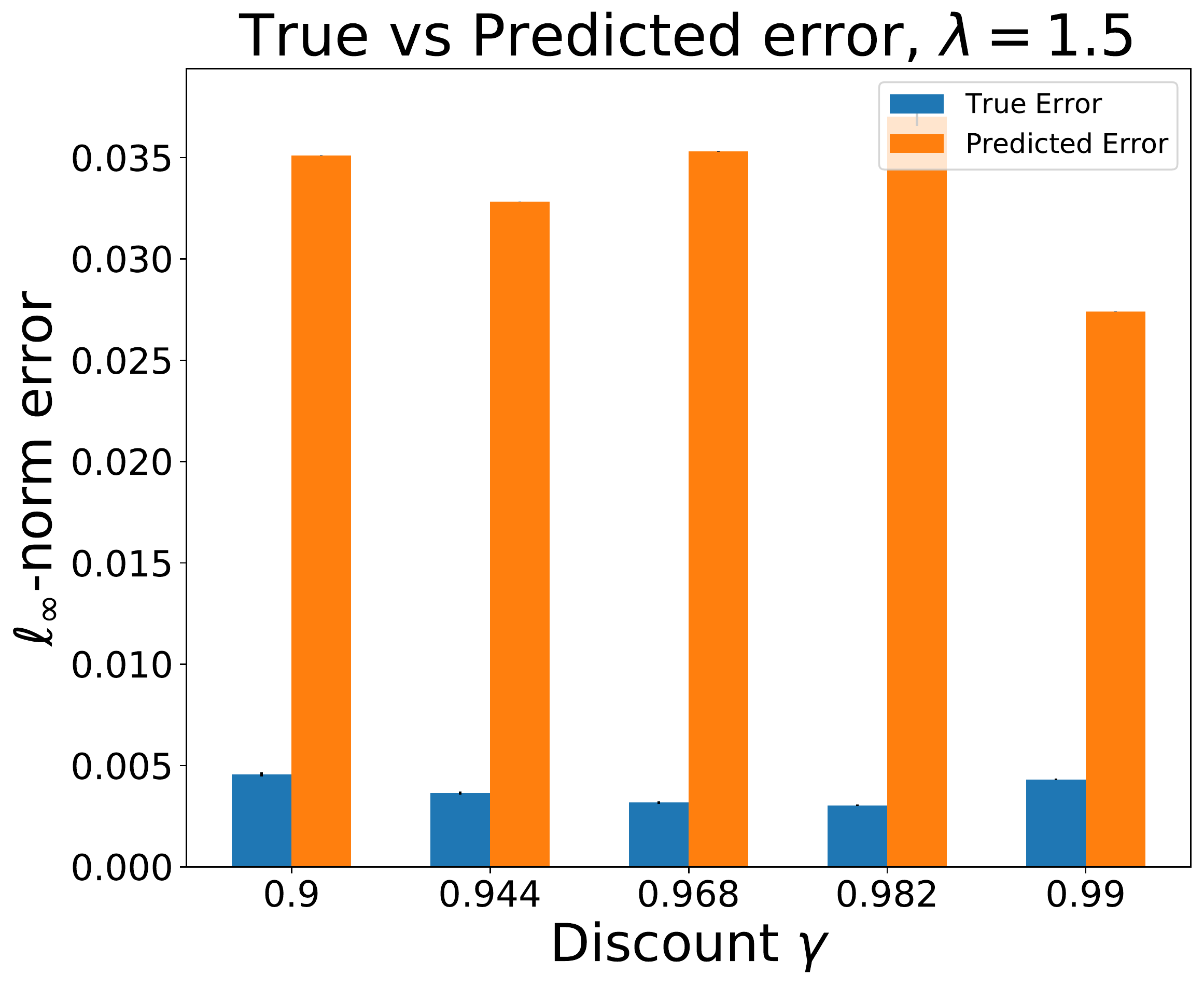}
  \\ (a) && (b)
  \end{tabular}
  \caption{Illustration of the termination behavior of
    Algorithm~\ref{EmpIRE} applied on the MDP in
    Example~\ref{ExaSimple}. Plots the average of the true error
    (blue) and predicted error (orange) along with error bars denoting
    the standard deviation for different choices of $\contractPar$ and
    for (a) $\diff = 1.0$ and (b) $\diff = 1.5$.}
\label{mdp_plot_errors}
\end{center}
\end{figure}


\section{Proofs}
\label{SecProofs}

In this section, we provide proofs of our main results, with
Sections~\ref{sec:proof-of-thm-poleval}
and~\ref{sec:Proof-of-Policy-Opt-Thm} devoted the proofs of
Theorems~\ref{thm:main-thm-poleval}
and~\ref{thm:general-CI-Policy-opt}, respectively.


\subsection{Proof of Theorem~\ref{thm:main-thm-poleval}}
\label{sec:proof-of-thm-poleval}

Let us first introduce an auxiliary result that plays a key role in
the proof:
\begin{lemma}
\label{lems:key-holdout-bound}
For a given tolerance $\pardelta \in (0,1)$, consider a pair of
hold-out estimates $\holdobsmat$ and $\secholdobsmat$, each based on
hold-out size $\holdnumobs \geq 32 \cdot \frac{\log( 4\numstates^2 /
  \pardelta)}{(1 - \contractPar)^2}$. For any positive semi-definite
matrix $\Mmat$, we have
\begin{align}
\diagnorm{(\Id - \contractPar \holdobsmat)^{-1} \Mmat (\Id -
  \contractPar \secholdobsmat)^{-\top}} \leq 3 \diagnorm{(\Id -
  \contractPar \TranMat)^{-1} \Mmat (\Id - \contractPar
  \TranMat)^{-\top}},
\end{align}
with probability at least $1 - \pardelta$.
\end{lemma}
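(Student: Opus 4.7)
The plan is to reduce the asymmetric sandwich on the left-hand side to two symmetric sandwiches via Cauchy--Schwarz, and then control each symmetric sandwich using a resolvent identity together with Bernstein-type concentration. Set $A \defn (\Id - \contractPar \TranMat)^{-1}$, $B \defn (\Id - \contractPar \holdobsmat)^{-1}$, $C \defn (\Id - \contractPar \secholdobsmat)^{-1}$, and write $\Mmat = F F^\top$ using that $\Mmat$ is PSD. Applying Cauchy--Schwarz to the pair $(F^\top B^\top e_i, F^\top C^\top e_i)$ gives
\begin{align*}
e_i^\top B \Mmat C^\top e_i \;=\; \langle F^\top B^\top e_i,\, F^\top C^\top e_i \rangle \;\leq\; \sqrt{(B \Mmat B^\top)_{ii}} \cdot \sqrt{(C \Mmat C^\top)_{ii}},
\end{align*}
so $\diagnorm{B \Mmat C^\top} \leq \sqrt{\diagnorm{B \Mmat B^\top}} \cdot \sqrt{\diagnorm{C \Mmat C^\top}}$. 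Since $\holdobsmat$ and $\secholdobsmat$ are based on independent hold-out sets, a union bound reduces the task to proving $\diagnorm{B \Mmat B^\top} \leq 3 \diagnorm{A \Mmat A^\top}$ (and its analogue for $C$) each with probability at least $1 - \pardelta/2$.

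For the symmetric bound I would use the resolvent identity $B = A + A D B$ with $D \defn \contractPar(\holdobsmat - \TranMat)$, which follows from $B^{-1} - A^{-1} = -D$, and expand
\begin{align*}
B \Mmat B^\top \;=\; A \Mmat A^\top + (B - A) \Mmat A^\top + A \Mmat (B - A)^\top + (B - A) \Mmat (B - A)^\top.
\end{align*}
The PSD Cauchy--Schwarz inequality $|(U \Mmat V^\top)_{ii}| \leq \sqrt{(U \Mmat U^\top)_{ii} (V \Mmat V^\top)_{ii}}$ applied to the cross terms then yields
\begin{align*}
\sqrt{\diagnorm{B \Mmat B^\top}} \;\leq\; \sqrt{\diagnorm{A \Mmat A^\top}} + \sqrt{\diagnorm{(B - A) \Mmat (B - A)^\top}},
\end{align*}
so bounding the quadratic perturbation by $(\sqrt{3} - 1)^2 \diagnorm{A \Mmat A^\top}$ would suffice.

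The probabilistic engine is a Hoeffding bound for empirical transitions against a \emph{deterministic} vector $w \in \RR^{\numstates}$: each coordinate $[(\holdobsmat - \TranMat) w]_\state$ is a centered average of $\holdnumobs$ i.i.d.\ values bounded by $\|w\|_\infty$, so a union bound over states gives $\|(\holdobsmat - \TranMat) w\|_\infty \lesssim \|w\|_\infty \sqrt{\log(\numstates/\pardelta)/\holdnumobs}$. I would instantiate this with $w = y_i \defn A \Mmat A^\top e_i$, which is deterministic and satisfies $\|y_i\|_\infty \leq \diagnorm{A \Mmat A^\top}$ by the PSD bound $|(A \Mmat A^\top)_{ji}| \leq \sqrt{(A \Mmat A^\top)_{ii} (A \Mmat A^\top)_{jj}}$; a further union bound over the $\numstates$ choices of $i$ consumes the $\log(\numstates^2/\pardelta)$ appearing in the hold-out-size lower bound, while the factor $1/(1-\contractPar)^2$ therein keeps the concentration noise at most a small constant times $(1-\contractPar) \|y_i\|_\infty$.

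The main obstacle is controlling the quadratic perturbation $\diagnorm{(B - A) \Mmat (B - A)^\top}$: using the twin identity $B - A = B D A$, its $i$-th diagonal entry equals $u_i^\top D (A \Mmat A^\top) D^\top u_i$ for the random vector $u_i \defn B^\top e_i$, and a single pass of concentration cannot eliminate the dependence on $B$. I would resolve this by a self-improvement (bootstrap) argument: assuming a preliminary bound $\diagnorm{B \Mmat B^\top} \leq K \diagnorm{A \Mmat A^\top}$ on the high-probability event, the same concentration bound (applied to the deterministic targets $y_i$) upgrades to a control of the form $\diagnorm{(B - A) \Mmat (B - A)^\top} \leq \eta \cdot \diagnorm{B \Mmat B^\top}$ for a small random $\eta$, yielding the scalar inequality $\sqrt{K} \leq 1 + \sqrt{\eta K}$. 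The sample-size assumption is calibrated so that $\eta$ is small enough to force $K \leq 1/(1 - \sqrt{\eta})^2 \leq 3$. Combining this symmetric estimate with the Cauchy--Schwarz reduction and taking a final union bound over the two hold-out sets completes the proof.
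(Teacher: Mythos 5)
There is a genuine gap, and it sits exactly where you flagged it: the quadratic perturbation term. Your Cauchy--Schwarz reduction to the symmetric case and the row-wise triangle inequality $\sqrt{\smash[b]{\diagnorm{B \Mmat B^\top}}} \leq \sqrt{\smash[b]{\diagnorm{A \Mmat A^\top}}} + \sqrt{\smash[b]{\diagnorm{(B-A)\Mmat(B-A)^\top}}}$ are both correct, but the stated probabilistic engine cannot deliver $\diagnorm{(B-A)\Mmat(B-A)^\top} \lesssim \diagnorm{A\Mmat A^\top}$, and the bootstrap does not repair this. Write $D = \contractPar(\holdobsmat - \TranMat)$ and $N = A\Mmat A^\top$; then the $i$-th diagonal entry of the quadratic term is $v_i^\top D N D^\top v_i$ with $v_i = B^\top e_i$ entrywise nonnegative and $\|v_i\|_1 = \tfrac{1}{1-\contractPar}$ exactly. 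The scalar Hoeffding bound $\|(\holdobsmat - \TranMat)w\|_\infty \lesssim (1-\contractPar)\|w\|_\infty$ for deterministic $w$ buys exactly \emph{one} factor of $(1-\contractPar)$ per application, but two are needed to offset $\|v_i\|_1^2 = (1-\contractPar)^{-2}$. Chasing the constants: applying the lemma to the deterministic columns $y_k = N e_k$ gives $\max_{j,k}|(DN)_{jk}| \lesssim \contractPar(1-\contractPar)\diagnorm{N}$, and the crude bound $\|D^\top v_i\|_1 \leq 2\contractPar \|v_i\|_1$ on the other side then yields only $v_i^\top D N D^\top v_i \lesssim \tfrac{\contractPar^2}{1-\contractPar}\diagnorm{N}$, which blows up as $\contractPar \to 1$ rather than being at most $(\sqrt{3}-1)^2 \diagnorm{N}$. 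The bootstrap addresses circularity (the appearance of $B$ inside the bound), not this deficit: under the hypothesis $\diagnorm{B\Mmat B^\top} \leq K \diagnorm{A\Mmat A^\top}$, the alternative factorization $(B-A)\Mmat(B-A)^\top = AD(B\Mmat B^\top)D^\top A^\top$ puts $D^\top$ against the deterministic vector $A^\top e_i$, but that is the \emph{transpose} acting on a fixed vector, to which the row-wise Hoeffding lemma does not apply, so no small $\eta$ is obtainable and the recursion $\sqrt{K} \leq 1 + \sqrt{\eta K}$ never closes.

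The paper's proof avoids this trap precisely by \emph{not} symmetrizing. It expands the asymmetric difference $\diagnorm{A\Mmat A^\top - B \Mmat C^\top}$ into three terms via the triangle inequality for $\diagnorm{\cdot}$; the two linear terms are handled as you propose, and the cross term $(A - B)\Mmat(A^\top - C^\top)$ has the two \emph{independent} hold-out resolvents on opposite sides. This lets the linear concentration lemma be applied twice in succession, each time conditioning on the other hold-out set so that its target vector is fixed, yielding the factor $\tfrac{1}{2}$ with no loss of $(1-\contractPar)$. This is exactly why the construction uses two independent hold-out sets of size $\holdnumobs$ each, a point the authors themselves flag in the discussion as likely a proof artifact. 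Your symmetric route could be salvaged --- and would in fact prove their conjecture that one hold-out set suffices --- but only by replacing the scalar Hoeffding engine with a genuinely quadratic one, e.g.\ a Hilbert-space Hoeffding/McDiarmid bound showing $(\hat{p}_j - p_j)^\top N (\hat{p}_j - p_j) \lesssim (1-\contractPar)^2 \diagnorm{N}$ for each row $j$, uniformly over $j$. As written, that ingredient is missing and the argument does not go through.
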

\noindent See Section~\ref{sec:lems:key-holdout-bound-proof} for a
proof of this lemma.\\

Throughout the proof, we suppress the dependence of
$\valuehat_\numobs$ on the sample size $\numobs$. Additionally, we
adopt the shorthand notations
\begin{align*}
\valCov = \valueCovOpt, \quad \quad \covarhat(\valuestar) =
\covarhat(\valuestar; \data_{\varestsample}), \quad \text{and} \quad
\VarEst(\valuehat) = \VarEst(\valuehat, \Vdata_{\PEnumobs,
    \holdnumobs}).
\end{align*}

\vspace{15pt}

\subsubsection{Proof of Theorem~\ref{thm:main-thm-poleval} part (a)}

We begin by proving the bound~\eqref{eqn:general-CI-eqn1}. Using a
known empirical Bernstein bound (see Lemma~\ref{lem:emp-bern} in
Appendix~\ref{AppEmpBern}), we have
\begin{align}
\diagnorm{\optCovMat{\valuestar}}^{\frac{1}{2}} &= \diagnorm{(\Id -
  \contractPar \TranMat)^{-1} \valCov (\Id - \contractPar
  \TranMat)^{-\top}}^{\frac{1}{2}} \notag \\
\label{eqn:firstDecomp}
& \leq \diagnorm{(\Id - \contractPar \TranMat)^{-1}
  \covarhat(\valuestar) (\Id - \contractPar
  \TranMat)^{-\top}}^{\frac{1}{2}} + \frac{\bfun(\valuestar)}{1 -
  \contractPar} \cdot \sqrt{\frac{8 \log(\numstates /
    \pardelta)}{\PEnumobs - 1}}
\end{align}
with probability at at least $1 - \pardelta$.

Now suppose that we can prove that
\begin{multline}
  \label{eqn:LipSchitzBD}   
  \diagnorm{(\Id - \contractPar \TranMat)^{-1} \covarhat(\valuestar)
    (\Id - \contractPar \TranMat)^{-1}}^{\frac{1}{2}} \leq \sqrt{2}
  \diagnorm{(\Id - \contractPar \TranMat)^{-1} \covarhat(\valuehat)
    (\Id - \contractPar \TranMat)^{-1}}^{\frac{1}{2}} \\
  + \tfrac{\sqrt{8}}{1 - \contractPar} \inftynorm{\valuehat -
    \valuestar}
\end{multline}
Then, by combining the last bound with
Lemma~\ref{lems:key-holdout-bound}, we find that
\begin{multline*}
\diagnorm{(\Id - \contractPar \TranMat)^{-1} \covarhat(\valuestar)
  (\Id - \contractPar \TranMat)^{-\top}}^{\frac{1}{2}} \leq \sqrt{6}
\cdot \diagnorm{(\Id - \contractPar \holdobsmat)^{-1}
  \covarhat(\valuehat) (\Id - \contractPar
  \secholdobsmat)^{-\top}}^{\frac{1}{2}} \\ + \frac{\sqrt{8}}{1 -
  \contractPar} \inftynorm{\valuehat - \valuestar}
\end{multline*}
with probability $1 - \pardelta$. Substituting the last bound into
equation~\eqref{eqn:firstDecomp} and using the fact that the function
$\bfun$ is $1$-Lipschitz in the $\ell_\infty$-norm, we deduce that
\begin{multline*}
\diagnorm{\optCovMat{\valuestar}}^{\frac{1}{2}} \leq \sqrt{6} \cdot
\diagnorm{(\Id - \contractPar \holdobsmat)^{-1} \covarhat(\valuehat)
  (\Id - \contractPar \secholdobsmat)^{-\top}}^{\frac{1}{2}} +
\frac{\bfun(\valuehat)}{1 - \contractPar} \cdot \sqrt{\frac{8
    \log(\numstates / \pardelta)}{\PEnumobs - 1}} \\
+ \inftynorm{\valuehat - \valuestar} \cdot \left(\frac{\sqrt{8}}{1 -
  \contractPar} + \frac{1}{1 - \contractPar} \cdot \sqrt{\frac{8
    \log(\numstates / \pardelta)}{\PEnumobs - 1}} \right),
\end{multline*}
with probability at least $1 - 2\pardelta$.  Substituting the last
bound in the convergence rate condition~\eqref{eqn:PolEval-Algo-Cond},
invoking the sample size condition $\numobs \geq
\specfast^2(\pardelta) \cdot \frac{24 \cdot 8 \cdot \log(\numstates /
  \pardelta)}{(1 - \contractPar)^2}$ from
Theorem~\ref{thm:main-thm-poleval} yields the
claim~\eqref{eqn:general-CI-eqn1}. It remains to prove the
bound~\eqref{eqn:LipSchitzBD}.

\paragraph{Proof of the bound~\eqref{eqn:LipSchitzBD}:}

For any index $i \in [\numstates]$, we define the random variable of
interest \mbox{$U_i \defn e_i^T (\Id - \contractPar \TranMat)^{-1}
  \covarhat(\valuestar) (\Id - \contractPar \TranMat)^{-1} e_i$}.
With this definition, substituting the expression for the matrix
$\covarhat(\valuestar)$ we have
\begin{align*}
U_i & = \frac{1}{\PEnumobs(\PEnumobs - 1)} \sum_{1 \leq j < k \leq
  \PEnumobs} \left(e_i^T (\Id - \contractPar \TranMat)^{-1}
(\NoisyReward_j - \NoisyReward_k + \contractPar (\obsmat_j -
\obsmat_k) \valuestar) \right)^2 \\
& \leq \frac{2}{\PEnumobs(\PEnumobs - 1)} \sum_{1 \leq j < k \leq
  \PEnumobs} \left(e_i^T (\Id - \contractPar \TranMat)^{-1}
(\NoisyReward_j - \NoisyReward_k + \contractPar (\obsmat_j -
\obsmat_k) \valuehat) \right)^2 \\
& \qquad \qquad + \frac{2\contractPar^2}{\PEnumobs(\PEnumobs - 1)}
\sum_{1 \leq j < k \leq \PEnumobs} \left(e_i^T (\Id - \contractPar
\TranMat)^{-1} (\obsmat_j - \obsmat_k) (\valuestar - \valuehat)
\right)^2 \\
& \qquad \leq 2 e_i^T (\Id - \contractPar \TranMat)^{-1}
\covarhat(\valuehat) (\Id - \contractPar \TranMat)^{-1} e_i +
\frac{8}{(1 - \contractPar)^2} \cdot \inftynorm{\valuehat -
  \valuestar}^2.
\end{align*}
The second last inequality follows from triangle inequality, and
last inequality follows from the operator norm bound $\|(\Id -
\contractPar \TranMat)^{-1}\|_{1,\infty} \leq \frac{1}{1 -
  \contractPar}$ and the Holder's inequality. This completes the
proof of the bound~\eqref{eqn:LipSchitzBD}.

\subsubsection{Proof of Theorem~\ref{thm:main-thm-poleval} part (b)}

Here we prove the bound~\eqref{eqn:general-CI-eqn2}. Overall, the
argument is similar to that of bound~\eqref{eqn:general-CI-eqn1}. We
have
\begin{align*}
\diagnorm{\VarEst(\valuehat)}^{\frac{1}{2}} & = \diagnorm{(\Id - \contractPar
  \holdobsmat)^{-1} \covarhat(\valuehat) (\Id - \contractPar
  \secholdobsmat)^{-\top}}^{\frac{1}{2}} \\
& \stackrel{(i)}{\leq} \sqrt{3} \cdot \diagnorm{(\Id - \contractPar
  \TranMat)^{-1} \covarhat(\valuehat) (\Id - \contractPar
  \TranMat)^{-\top}}^{\frac{1}{2}} \\
& \stackrel{(ii)}{\leq} \sqrt{6} \cdot \diagnorm{(\Id - \contractPar
  \TranMat)^{-1} \covarhat(\valuestar) (\Id - \contractPar
  \TranMat)^{-\top}}^{\frac{1}{2}} + \frac{\sqrt{24}}{1 - \contractPar}
\cdot \inftynorm{\valuehat - \valuestar} \\
& \stackrel{(iii)}{\leq} \sqrt{6} \cdot \diagnorm{(\Id - \contractPar
  \TranMat)^{-1} \valCov (\Id - \contractPar
  \TranMat)^{-\top}}^{\frac{1}{2}} \\
& \qquad + \frac{\bfun(\valuestar)}{1 - \contractPar} \cdot
\sqrt{\frac{48 \log(\numstates / \pardelta)}{\PEnumobs - 1}} +
\frac{\sqrt{24}}{1 - \contractPar} \cdot \inftynorm{\valuehat -
  \valuestar}
\end{align*}
with probability at least $1 - 2\pardelta$.  Inequality (i) follows
from Lemma~\ref{lems:key-holdout-bound}, inequality (ii) follows from
the proof of bound~\eqref{eqn:LipSchitzBD}, and inequality (iii)
follows from the empirical Bernstein
lemma~\ref{lem:emp-bern}. Combining the last bound with the
bound~\eqref{eqn:PolEval-Algo-Cond} for algorithm~$\AlgoEval$ and
rearranging yields the claim~\eqref{eqn:general-CI-eqn2}.


\subsubsection{Proof of Lemma~\ref{lems:key-holdout-bound}}
\label{sec:lems:key-holdout-bound-proof}

We begin by observing that the $\diagnorm{\cdot}$ operator, despite
not being a norm, satisfies the triangle inequality. Indeed, for
square matrices $\Amat$ and $\Bmat$ with matching dimensions, we have
\begin{align}
\diagnorm{\Amat + \Bmat} &= \max_i |e_i^T (\Amat + \Bmat) e_i | \notag
\\
& \leq \max_i \left( |e_i^T \Amat e_i| + |e_i^T \Bmat e_i| \right)
\notag \\
& \leq \max_i |e_i^T \Amat e_i| + \max_i |e_i^T \Bmat e_i| \notag \\
\label{eqn:diag-triangle-ineq}
& = \diagnorm{\Amat} + \diagnorm{\Bmat}.
\end{align}
We also require the following simple lemma:

\begin{lemma}
\label{lems:holdout-infty-bound}
For holdout sample size satisfying $\holdnumobs \geq \frac{32 \log(4
  \numstates / \pardelta)}{(1 - \contractPar)^2}$ and any vector
$\values \in \real^{\numstates}$, we have
\begin{align*}
\inftynorm{(\holdobsmat - \TranMat) \values} \leq \frac{(1 -
  \contractPar)}{\sqrt{2}} \inftynorm{\values},
\end{align*}
with probability at least $1 - \frac{\pardelta}{4}$.
\end{lemma}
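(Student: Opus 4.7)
}

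The plan is to reduce the $\ell_\infty$ bound to a coordinate-wise concentration inequality and then take a union bound over states. The key observation is that, for each fixed state $\state$, the quantity $((\holdobsmat - \TranMat)\values)(\state)$ is exactly a sample mean of $\holdnumobs$ i.i.d.\ bounded centered random variables.

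More precisely, writing $\holdobsmat = \frac{1}{\holdnumobs}\sum_{i \in \holdoutsample} \obsmat_i$ and recalling the generative model~\eqref{generative-sample}, for each fixed $\state$ and each sample $i$ the row of $\obsmat_i$ associated with $\state$ is an indicator of a single next state $X_i(\state) \sim \TranMat(\cdot\mid\state)$. Hence
\begin{align*}
\big((\holdobsmat - \TranMat)\values\big)(\state) \;=\; \frac{1}{\holdnumobs}\sum_{i \in \holdoutsample} \Big(\values(X_i(\state)) - \Exs\big[\values(X(\state))\big]\Big),
\end{align*}
a centered average of $\holdnumobs$ i.i.d.\ random variables, each lying in the interval $[-\inftynorm{\values}, \inftynorm{\values}]$. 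Hoeffding's inequality then yields, for any $t > 0$,
\begin{align*}
\Pr\Big(\big|\big((\holdobsmat - \TranMat)\values\big)(\state)\big| \geq t\Big) \;\leq\; 2\exp\!\Big(-\frac{\holdnumobs\, t^2}{2\,\inftynorm{\values}^2}\Big).
\end{align*}

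From here I would simply apply a union bound over the $\numstates$ states to control $\inftynorm{(\holdobsmat - \TranMat)\values}$ by \mbox{$\inftynorm{\values}\sqrt{2\log(8\numstates/\pardelta)/\holdnumobs}$} with probability at least $1-\pardelta/4$, and then plug in the assumed lower bound on $\holdnumobs$. Using the crude inequality $\log(8\numstates/\pardelta) \leq 2\log(4\numstates/\pardelta)$ (valid whenever $4\numstates/\pardelta \geq 2$), the hypothesis $\holdnumobs \geq 32\log(4\numstates/\pardelta)/(1-\contractPar)^2$ gives
\begin{align*}
\inftynorm{\values}\sqrt{\frac{2\log(8\numstates/\pardelta)}{\holdnumobs}} \;\leq\; \inftynorm{\values}\sqrt{\frac{(1-\contractPar)^2}{8}} \;\leq\; \frac{1-\contractPar}{\sqrt{2}}\,\inftynorm{\values},
\end{align*}
which matches (and is in fact slightly better than) the claimed bound.

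There is no substantive obstacle here; the main care required is the bookkeeping between the two conventions in the paper (the matrix $\TranMat$ has entry $\TranMat(\state',\state) = \TranMat(\state'\mid\state)$ in row $\state$, column $\state'$), so that $(\TranMat\values)(\state) = \Exs[\values(X(\state))]$ and the rewriting above is literal rather than transposed. Once that is set up, the result is essentially one-step Hoeffding plus a union bound.
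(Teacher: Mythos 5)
Your proposal is correct and takes essentially the same route as the paper's own proof: apply Hoeffding's inequality coordinate-wise to the centered average $\frac{1}{\holdnumobs}\sum_{i}\langle \obsmat_i(\state) - \TranMat(\cdot \mid \state), \values\rangle$, whose summands are bounded by $\inftynorm{\values}$ in magnitude, then union bound over the $\numstates$ states and substitute the assumed lower bound on $\holdnumobs$. Your constant bookkeeping is sound (and in fact yields a slightly sharper factor of $(1-\contractPar)/(2\sqrt{2})$), so there is nothing to add.
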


\begin{proof}
From Hoeffding's inequality, we have
\begin{align*}
[(\holdobsmat - \TranMat) \cdot \values]_i = \frac{1}{\holdnumobs}
\sum_{j=1}^{\holdnumobs} \langle \obsmat_j(i) - p_i, \values \rangle
\leq 4 \sqrt{\frac{\log(4 \numstates / \pardelta)}{\holdnumobs}} \cdot
\inftynorm{\values},
\end{align*}
with probability at least $1 - \frac{\pardelta}{4\numstates}$.  Here,
$\obsmat_j(i)$ and $p_i$, respectively denote the $i^{th}$ row vector
of the matrices $\obsmat_j$ and $\TranMat$. Now, applying a union
bound over $\numstates$ coordinates and using the lower bound on the
holdout sample size $\holdnumobs$ yields the claim of
Lemma~\ref{lems:holdout-infty-bound}.
\end{proof}
\vspace{10pt}

\noindent We are now ready to prove
Lemma~\ref{lems:key-holdout-bound}. For ease of notation, we use the
shorthands
\begin{align*}
\matop = \Id - \contractPar \TranMat, \quad
\holdmatop
= \Id - \contractPar \holdobsmat,
\quad \text{and} \quad
\hatmatop \defn \Id -
\contractPar \secholdobsmat.
\end{align*}
In light of the triangle inequality~\eqref{eqn:diag-triangle-ineq}, it
suffices to prove that
\begin{align*}
  \diagnorm{\matop^{-1} \Cov \matop^{-\top} - \holdmatop^{-1} \Cov
    \hatmatop^{-\top} } \leq 2 \diagnorm{\matop^{-1} \Cov \matop^{-\top}}
\end{align*}
with probability at least $1 - \pardelta$. Simple algebra and another
application of the triangle inequality~\eqref{eqn:diag-triangle-ineq}
yields
\begin{align}
\label{eqn:three-term-decomp}
\diagnorm{\matop^{-1} \Cov \matop^{-\top} - \holdmatop^{-1} \Cov
  \hatmatop^{-\top} } & \leq \diagnorm{(\matop^{-1} - \holdmatop^{-1})
  \Cov \matop^{-\top}} + \diagnorm{\holdmatop^{-1} \Cov
  (\matop^{-\top} - \hatmatop^{-\top})} \notag \\
& \leq \diagnorm{(\matop^{-1} - \holdmatop^{-1}) \Cov \matop^{-\top}} +
\diagnorm{(\matop^{-1} - \hatmatop^{-1}) \Cov \matop^{-\top}} \notag
\\
& \qquad + \diagnorm{(\matop^{-1} - \holdmatop^{-1}) \Cov
  (\matop^{-\top} - \hatmatop^{-\top})} \notag \\
& = \sum_{j=1}^3 T_j,
\end{align}
where we define
\begin{align*}
&T_1 \defn \diagnorm{(\matop^{-1} - \holdmatop^{-1}) \Cov
    \matop^{-\top}}, \qquad T_2 \defn \diagnorm{(\matop^{-1} -
    \hatmatop^{-1}) \Cov \matop^{-\top}}, \qquad \text{and} \\
& \qquad \qquad \qquad T_3 \defn \diagnorm{(\matop^{-1} -
    \holdmatop^{-1}) \Cov (\matop^{-\top} - \hatmatop^{-\top})}.
\end{align*}
We bound these three terms individually.

\paragraph{Bounding $T_1$:} We have
\begin{align*}
T_1 = \diagnorm{(\matop^{-1} - \holdmatop^{-1}) \Cov \matop^{-\top}} & =
\max_i | e_i^T \holdmatop^{-1} (\matop - \holdmatop) \matop^{-1} \Cov
\matop^{-\top} e_i | \\
& \leq \max_i \inftynorm{\holdmatop^{-1} (\matop - \holdmatop)
  \matop^{-1} \Cov \matop^{-\top} e_i} \\ &\leq \frac{1}{\sqrt{2}} \cdot
\max_i \inftynorm{\matop^{-1} \Cov \matop^{-\top} e_i} \\
& = \frac{1}{\sqrt{2}} \cdot \max_{i, j} |e_j^T \matop^{-1} \Cov
\matop^{-\top} e_i | \\
& \leq \frac{1}{\sqrt{2}} \diagnorm{\matop^{-1} \Cov \matop^{-\top}},
\end{align*}
with probability at least $1 - \frac{\pardelta}{4}$. The third line
follows from Lemma~\ref{lems:holdout-infty-bound}, a union bound over
$\Dim$ coordinates and the operator norm bound $\|\matop^{-1}\|_{1,\infty} =  \|(\Id - \contractPar
\TranMat)^{-1}\|_{1,\infty} \leq \frac{1}{1 - \contractPar}$.  The
final inequality follows from the fact that $\matop^{-1} \Cov
\matop^{-\top}$ is a covariance matrix and the maximum entry of a
covariance matrix is same as the maximum diagonal entry. Indeed, for a
zero-mean random vector $X$ with covariance matrix $\matop^{-1} \Cov
\matop^{-\top}$, we have for all indices $i, j$,
\begin{align*}
|e_j^T \matop^{-1} \Cov \matop^{-\top} e_i | = |\text{Cov}(X_j, X_i)| \leq \sqrt{\EE[X_j^2] \cdot \EE[X_i^2]} &\leq \max_k \text{Var}(X_k) = \diagnorm{\matop^{-1} \Cov \matop^{-\top}},
\end{align*}
as claimed.

\paragraph{Bounding $T_2$:} An identical calculations for the term $T_2$
in the bound~\eqref{eqn:three-term-decomp} yields
\begin{align*}
  \diagnorm{(\matop^{-1} - \hatmatop^{-1}) \Cov \matop^{-\top}}
  \leq \frac{1}{\sqrt{2}} \diagnorm{\matop^{-1} \Cov \matop^{-\top}}
\end{align*}
with probability at least $1 - \frac{\pardelta}{4}$. 

\paragraph{Bounding $T_3$:} We have
\begin{align*}
\diagnorm{(\matop^{-1} - \holdmatop^{-1})\Cov(\matop^{-\top} -
  \hatmatop^{-\top})} & = \max_i | e_i^T (\matop^{-1} -
\holdmatop^{-1})\Cov(\matop^{-\top} - \hatmatop^{-\top}) e_i | \\
& \leq \max_i \inftynorm{(\matop^{-1} -
  \holdmatop^{-1})\Cov(\matop^{-\top} - \hatmatop^{-\top}) e_i} \\
& = \max_i \inftynorm{\holdmatop^{-1}(\matop - \holdmatop) \matop^{-1}
  \Cov (\matop^{-\top} - \hatmatop^{-\top}) e_i} \\
& \stackrel{(a)}{\leq} \frac{1}{\sqrt{2}} \cdot \max_i \inftynorm{
  \matop^{-1} \Cov (\matop^{-\top} - \holdmatop^{-\top}) e_i} \\
& = \frac{1}{\sqrt{2}} \cdot \max_{i, j} |e_j^T \matop^{-1}
\Cov(\matop^{-\top} - \hatmatop^{-\top}) e_i | \\
& = \frac{1}{\sqrt{2}} \cdot \max_j \inftynorm{(\matop^{-1} -
  \hatmatop^{-\top}) \Cov \matop^{-\top} e_j} \\
& \stackrel{(b)}{\leq} \frac{1}{2} \cdot \max_j \inftynorm{\matop^{-1}
  \Cov \matop^{-\top} e_j} \\
& \leq \frac{1}{2} \diagnorm{\matop^{-1} \Cov
  \matop^{-\top}},
\end{align*}
Inequality (a) follows from Lemma~\ref{lems:holdout-infty-bound}
conditioned on the randomness on in the matrix $\hatmatop$; recall
that matrices $\hatmatop$ and $\holdmatop$ is independent by
construction.  Inequality (b) follows again by applying the
Lemma~\ref{lems:holdout-infty-bound}. Invoking a union bound, we have
that the last bound holds with probability at least $1 -
\frac{\pardelta}{2}$.  Putting together the pieces yields
\begin{align*}
\diagnorm{\holdmatop^{-1} \Cov \hatmatop^{-\top}} \leq \left(1 + \sqrt{2}
+ \frac{1}{2} \right) \cdot \diagnorm{\matop^{-1} \Cov \matop^{-\top}}
\leq 3 \cdot \diagnorm{\matop^{-1} \Cov \matop^{-\top}}
\end{align*}
with probability at least $1 - \pardelta$, as claimed.

\subsection{Proof of Theorem~\ref{thm:general-CI-Policy-opt}}
\label{sec:Proof-of-Policy-Opt-Thm}

The proof of this theorem is similar to the proof of
Theorem~\ref{thm:main-thm-poleval}, and it is based on a Lipschitz
property of the empirical covariance matrix operator
$\diagnorm{\QVarEst(\cdot)}^{\frac{1}{2}}$ (see
Lemma~\ref{QCov-Lipschitz-lemma}) and an empirical Bernstein lemma
(see Lemma~\ref{lem:emp-bern}).  For notational simplicity, we use
$\Qvaluehat$ as a shorthand for $\Qvaluehat_\numobs$, and similarly,
$\QVarEst(\Qvaluehat)$ in place of $\QVarEst(\Qvaluehat_\numobs,
\Qdata_\numobs)$. \\

It suffices to prove that the following two bounds hold, each with
probability at least $1 - \pardelta$:
\begin{subequations}
\begin{align}
  \label{eqn:PolOpt-main-1}
   \diagnorm{\optCovMatQ{\Qvaluestar}}^{\frac{1}{2}} & \leq
   \inftynorm{\QVarEst(\Qvaluehat)}^{\frac{1}{2}} +
   \bfun(\Qvaluehat) \cdot \sqrt{\frac{8 \log(\Dim /
       \pardelta)}{\PEnumobs - 1}} + \inftynorm{\Qvaluehat -
     \Qvaluestar} \cdot \left( \sqrt{\tfrac{8 \log(\Dim /
       \pardelta)}{\PEnumobs - 1}} + \sqrt{8} \right) \\
\inftynorm{\QVarEst(\Qvaluehat)}^{\frac{1}{2}} &\leq \sqrt{2} \cdot
\diagnorm{\optCovMatQ{\Qvaluestar}}^{\frac{1}{2}} +
\bfun(\Qvaluestar) \cdot \sqrt{\tfrac{16 \log(\Dim /
    \pardelta)}{\PEnumobs - 1}} + \sqrt{8} \inftynorm{\Qvaluehat -
  \Qvaluestar},
\label{eqn:PolOpt-main-2}
\end{align}
\end{subequations}
Indeed, combining the bound~\eqref{eqn:PolOpt-main-1} with the
condition~\eqref{eqn:PolOpt-algo-prop} of the algorithm~$\AlgoPolOpt$
and the sample size lower bound $\numobs \geq c_1(\AlgoEval,
\pardelta)^2 \cdot \frac{32 \cdot \log(\Dim / \pardelta)}{(1 -
  \contractPar)^2}$, we have
\begin{align*}
  \inftynorm{\Qvaluehat - \Qvaluestar} &\leq \frac{2 \sqrt{2} \cdot
    \specfast(\pardelta)}{\sqrt{\PEnumobs}} \cdot
  \frac{\diagnorm{\QVarEst(\Qvaluehat)}^{\frac{1}{2}}}{1 -
    \contractPar} + \frac{2 c_1(\AlgoEval) \bfun(\Qvaluehat)}{1 -
    \contractPar} \cdot \frac{ \sqrt{ 16 \log(\Dim /
      \pardelta)}}{\PEnumobs - 1} \notag + \frac{2
    \specslow(\pardelta)}{\numobs}.
\end{align*}
with probability at least $1 - 2\pardelta$.  Furthermore, since
function $\bfun$ is $\contractPar$-Lipschitz in the
$\ell_\infty$-norm, we have
\begin{align*}
  |\bfun(\Qvalues) - \bfun(\Qvaluestar)| \leq \contractPar \cdot
  \inftynorm{\Qvalues - \Qvaluestar}.
\end{align*}
Combining this inequality with the bound~\eqref{eqn:PolOpt-main-2} and
the condition~\eqref{eqn:PolOpt-algo-prop} yields the claimed
bound~\eqref{eqn:general-CI-PolOpt-2}. \\

It remains to prove the bounds~\eqref{eqn:PolOpt-main-1}
and~\eqref{eqn:PolOpt-main-2}.  In doing so, we make use of the
following auxiliary lemma:
\begin{lemma}
\label{QCov-Lipschitz-lemma}
For any pair $\Qvalues_1, \Qvalues_2$, we have
\begin{align*}
\diagnorm{\QVarEst(\Qvalues_1)}^{\frac{1}{2}} & \leq \sqrt{2}
\diagnorm{\QVarEst(\Qvalues_2)}^{\frac{1}{2}} + \sqrt{8}
\inftynorm{\Qvalues_1 - \Qvalues_2}.
\end{align*}
\end{lemma}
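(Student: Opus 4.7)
The plan is to reduce the lemma to a scalar calculation at each diagonal entry of $\QVarEst$, exploiting the fact that $\QVarEst(\Qvalues)$ is a diagonal matrix by construction. Fix an arbitrary state-action pair $(\state,\action)$, and for brevity write $D_{ij}(\Qvalues) \defn \NoisyOptOp_i(\Qvalues)(\state,\action) - \NoisyOptOp_j(\Qvalues)(\state,\action)$, so that $\QVarEst(\Qvalues)((\state,\action),(\state,\action)) = \frac{1}{\numobs(\numobs-1)} \sum_{i<j} D_{ij}(\Qvalues)^2$. The goal reduces to showing, for every $(\state,\action)$, the pointwise bound $\tfrac{1}{\numobs(\numobs-1)}\sum_{i<j} D_{ij}(\Qvalues_1)^2 \leq 2 \cdot \tfrac{1}{\numobs(\numobs-1)} \sum_{i<j} D_{ij}(\Qvalues_2)^2 + 8 \|\Qvalues_1 - \Qvalues_2\|_\infty^2$; taking square roots (via $\sqrt{a+b} \leq \sqrt{a} + \sqrt{b}$) and then the maximum over $(\state,\action)$ yields the stated conclusion.

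To obtain the pointwise bound, I would decompose $D_{ij}(\Qvalues_1) = D_{ij}(\Qvalues_2) + \bigl[D_{ij}(\Qvalues_1) - D_{ij}(\Qvalues_2)\bigr]$ and apply the elementary inequality $(a+b)^2 \leq 2a^2 + 2b^2$. The key observation, and essentially the only one with content, is that the reward terms $\NoisyRewardQ_i$ and $\NoisyRewardQ_j$ cancel in the difference $D_{ij}(\Qvalues_1) - D_{ij}(\Qvalues_2)$, leaving
\[
D_{ij}(\Qvalues_1) - D_{ij}(\Qvalues_2) = \contractPar \sum_{\state'} \obsmatQ_i^\action(\state' \mid \state)\bigl[M(\Qvalues_1)(\state') - M(\Qvalues_2)(\state')\bigr] - \contractPar \sum_{\state'} \obsmatQ_j^\action(\state' \mid \state)\bigl[M(\Qvalues_1)(\state') - M(\Qvalues_2)(\state')\bigr],
\]
where $M(\Qvalues)(\state') \defn \max_{\action'} \Qvalues(\state', \action')$. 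Since the max operator is $1$-Lipschitz in $\ell_\infty$, we have $\|M(\Qvalues_1) - M(\Qvalues_2)\|_\infty \leq \|\Qvalues_1 - \Qvalues_2\|_\infty$, and since each $\obsmatQ_k^\action(\cdot \mid \state)$ is a probability distribution (in fact a one-hot vector under the generative model), each summand is bounded in absolute value by $\contractPar \|\Qvalues_1 - \Qvalues_2\|_\infty$. By the triangle inequality and $\contractPar \leq 1$, it follows that $|D_{ij}(\Qvalues_1) - D_{ij}(\Qvalues_2)| \leq 2 \|\Qvalues_1 - \Qvalues_2\|_\infty$, hence $(D_{ij}(\Qvalues_1) - D_{ij}(\Qvalues_2))^2 \leq 4 \|\Qvalues_1 - \Qvalues_2\|_\infty^2$.

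Combining gives $D_{ij}(\Qvalues_1)^2 \leq 2 D_{ij}(\Qvalues_2)^2 + 8 \|\Qvalues_1 - \Qvalues_2\|_\infty^2$ for every pair $(i,j)$, and averaging over $(i,j)$ and maximizing over $(\state,\action)$ yields $\diagnorm{\QVarEst(\Qvalues_1)} \leq 2 \diagnorm{\QVarEst(\Qvalues_2)} + 8 \|\Qvalues_1 - \Qvalues_2\|_\infty^2$, which after square-rooting gives the lemma. There is no real obstacle: the cancellation of rewards is what keeps the bound deterministic (no $\rewardbound$ appears), and the rest is the textbook Lipschitz argument for a $V$-statistic of a contraction-like functional. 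The only thing worth being a little careful about is tracking the factor of $2$ coming from the triangle inequality applied to $\obsmatQ_i - \obsmatQ_j$, since this is what produces the $\sqrt{8}$ rather than the naively expected $\sqrt{2}$.
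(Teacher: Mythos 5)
Your proposal is correct and follows essentially the same route as the paper: decompose the pairwise difference $D_{ij}(\Qvalues_1)$ around $D_{ij}(\Qvalues_2)$, apply $(a+b)^2 \leq 2a^2 + 2b^2$, and use the $\contractPar$-Lipschitzness of the empirical Bellman optimality operator (which you justify by the reward cancellation and the $1$-Lipschitzness of the max, where the paper simply invokes the contraction property). The only cosmetic difference is that you bound the perturbation term via the triangle inequality before squaring, while the paper applies the elementary inequality a second time; both yield the same constants.
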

\noindent See Section~\ref{SecProofLemQcovLip} for the proof of this
claim.

\subsubsection{Proof of the bound~\eqref{eqn:PolOpt-main-1}}

We have
\begin{multline*}
\diagnorm{\optCovMatQ{\Qvaluestar}}^{\frac{1}{2}} \stackrel{(i)}{\leq}
\inftynorm{\QVarEst(\Qvaluestar)}^{\frac{1}{2}} +
\tfrac{\bfun(\Qvaluestar)}{1 - \contractPar} \cdot \sqrt{\tfrac{8
    \log(\Dim / \pardelta)}{\numobs - 1}} \\
 \stackrel{(ii)}{\leq} \sqrt{2}
 \diagnorm{\QVarEst(\Qvaluehat)}^{\frac{1}{2}} + \sqrt{8} \cdot
 \inftynorm{\Qvaluehat - \Qvaluestar} + \bfun(\Qvaluehat) \cdot
 \sqrt{\tfrac{8 \log(\Dim / \pardelta)}{\numobs - 1}} \\
 +
 \inftynorm{\Qvaluehat - \Qvaluestar} \cdot \sqrt{\tfrac{8\log(\Dim /
     \pardelta)}{\numobs - 1}},
\end{multline*}
with probability at least $1 - \pardelta$. Inequality (i) follows by
applying an empirical Bernstein bound (cf. Lemma~\ref{lem:emp-bern} in
Appendix~\ref{AppEmpBern}) to each diagonal entry of the matrix
$\QVarEst(\valuestar)$, combined with a union bound on all $\Dim$
diagonal entries.  (See the proof of
Theorem~\ref{thm:main-thm-poleval} for an analogous calculation.)
Inequality (ii) follows Lemma~\ref{QCov-Lipschitz-lemma} on Lipschitz
properties of the empirical variance estimate.

\subsubsection{Proof of the bound~\eqref{eqn:PolOpt-main-2}}

The proof of this claim is similar to that of the
bound~\eqref{eqn:PolOpt-main-1}. We have
\begin{align*}
\diagnorm{\QVarEst(\Qvaluehat)}^{\frac{1}{2}} & \stackrel{(i)}{\leq}
\sqrt{2} \cdot \diagnorm{\QVarEst(\Qvaluestar)}^{\frac{1}{2}} +
\sqrt{8} \inftynorm{\Qvaluehat - \Qvaluestar} \\
& \stackrel{(ii)}{\leq} \sqrt{2} \cdot
\diagnorm{\optCovMatQ{\Qvaluestar}}^{\frac{1}{2}} +
\bfun(\Qvaluestar) \cdot \sqrt{\frac{16 \log(\Dim /
    \pardelta)}{\PEnumobs - 1}} + \sqrt{8} \cdot \inftynorm{\Qvaluehat
  - \Qvaluestar}
\end{align*}
with probability at least $1 - \pardelta$.  Here the inequality (i)
follows from the Lemma~\ref{QCov-Lipschitz-lemma}, and inequality (ii)
follows from an empirical Bernstein bound (see
Lemma~\ref{lem:emp-bern} in Appendix~\ref{AppEmpBern}) combined with
the union bound.

\subsubsection{Proof of Lemma~\ref{QCov-Lipschitz-lemma}}
\label{SecProofLemQcovLip}

Given a square matrix $\Mmat$, the quantity $\| \Mmat \|_{\diag}$ only
depends on the diagonal elements of the matrix $\Mmat$.  For any
state-action pair $\stateActionPair = (\state, \action)$, let
$\QVarEst(\Qvalues)(\stateActionPair)$ denote the diagonal entry of
the covariance matrix $\QVarEst(\Qvalues)$ associated with the state
action pair $\stateActionPair$. Substituting the definition of
$\QVarEst(\Qvalues)$ from equation~\eqref{eqn:QVarEst-defn} yields
\begin{align*}
\QVarEst(\Qvalues_1)(\stateActionPair)& = \tfrac{1}{\numobs(\numobs -
  1)} \sum_{1 \leq i < \ell \leq \numobs}
\left(\NoisyOptOp_i(\Qvalues_1)(\stateActionPair) -
\NoisyOptOp_\ell(\Qvalues_1)(\stateActionPair) \right)^2 \\
&= \tfrac{1}{\numobs(\numobs -
  1)} \sum_{1 \leq i < \ell \leq \numobs}
\left(\NoisyOptOp_i(\Qvalues_2)(\stateActionPair) -
\NoisyOptOp_\ell(\Qvalues_2)(\stateActionPair) + \NoisyOptOp_i(\Qvalues_1)(\stateActionPair) - \NoisyOptOp_i(\Qvalues_2)(\stateActionPair) + \NoisyOptOp_\ell(\Qvalues_2)(\stateActionPair) - \NoisyOptOp_\ell(\Qvalues_1)(\stateActionPair) \right)^2 \\
& \stackrel{(i)}{\leq} 2 \QVarEst(\Qvalues_2)(\stateActionPair) +
\tfrac{4}{\numobs(\numobs -1 )} \sum_{1 \leq i < \ell \leq \numobs}
\left\{ ((\NoisyOptOp_i(\Qvalues_2)-
\NoisyOptOp_i(\Qvalues_1))(\stateActionPair))^2 +
((\NoisyOptOp_\ell(\Qvalues_2) -
\NoisyOptOp_\ell(\Qvalues_1)(\stateActionPair))^2 \right\} \\
& \stackrel{(ii)}{\leq} 2 \QVarEst(\Qvalues_2)(\stateActionPair) + 8
\contractPar^2 \inftynorm{\Qvalues_1 - \Qvalues_2}^2,
\end{align*}
where step (i) follows from the elementary inequality $(a + b)^2 \leq
2a^2 + 2b^2$; and step (ii) uses the fact that the noisy Bellman
operator $\NoisyOptOp_\ell$ is $\contractPar$-Lipschitz in the
$\ell_\infty$-norm. This completes the proof.


\section{Discussion}
\label{SecDiscussion}

Our work addresses the problem of obtaining instance-dependent
confidence regions for the policy evaluation problem and the optimal
value estimation problem of an MDP, given access to an instance
optimal algorithm.  The confidence regions are constructed by
estimating the instance-dependent functionals that control problem
difficulty in a local neighborhood of the given problem instance. For
both problems, the instance-dependent confidence regions are shown to
be significantly shorter for problems with favorable structure.

Our results also leave a few interesting questions. For instance, our
results and proof techniques heavily rely on the tabular nature of the
MDP.  It will be interesting to see if such data-dependent guarantees
can be extended to policy evaluation and optimal value estimation
problems when a function approximation step is involved.  Another
interesting direction could be improving the bound from
Theorem~\ref{thm:general-CI-Policy-opt} for the optimal value
estimation problem.  Additionally in the setting of policy evaluation,
we believe that the need for two independent holdout sets is
unnecessary and more likely a proof defect; we conjecture it suffices
to use one holdout set to estimate $(\Id - \contractPar
\TranMat)^{-1}$ on both the left and right side.


\bibliography{bibliography} \bibliographystyle{amsalpha}



\appendix

\section{Auxiliary Lemmas}
\label{sec:aux-lemmas}

In this section, we state the auxiliary lemmas that are used in the
main part of the paper.

\subsection{Empirical Bernstein}
\label{AppEmpBern}

The following empirical Bernstein bound is a re-statement of Theorem
10 from the paper~\cite{maurerpontil2009}:
\begin{lemma}
\label{lem:emp-bern}
Let $\{Z_i\}_{i=1}^\numobs$ be an i.i.d. sequence of real valued
random variables taking values in the unit interval $[0,1]$, and
define the variance estimate $\VarEst(Z) = \frac{1}{\numobs
  (\numobs-1)} \sum_{1 \leq i < j \leq \numobs} (Z_i - Z_j)^2$.  Then
for any $\pardelta \in (0,1)$, we have
\begin{align}
\Big| \sqrt{\EE[\VarEst(Z)]} - \sqrt{\VarEst(Z)} \Big| & \leq
\sqrt{\frac{2\log(1 / \pardelta)}{n - 1}}
\end{align}
with probability at least $1 - 2 \pardelta$.
\end{lemma}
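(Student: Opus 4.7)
The plan is to exploit the identity
\[
\VarEst(Z) \;=\; \tfrac{1}{n-1}\sum_{i=1}^n (Z_i - \bar Z)^2,
\]
obtained by expanding the pairwise sum, so that $\sqrt{(n-1)\,\VarEst(Z)}$ equals the Euclidean norm $\|Z - \bar Z\mathbf{1}\|_2$ of the centered sample. The map $F\colon Z \mapsto \|Z - \bar Z \mathbf{1}\|_2$ is the composition of the orthogonal projection onto $\mathbf{1}^\perp$ with the $\ell_2$ norm, and is therefore \emph{both convex and $1$-Lipschitz} with respect to $\|\cdot\|_2$ on $[0,1]^n$. A direct verification: if $Z,Z'$ differ only in coordinate $i$, then $\|(Z-\bar Z\mathbf{1}) - (Z'-\bar{Z'}\mathbf{1})\|_2 = |Z_i - Z_i'|\sqrt{(n-1)/n} \le |Z_i - Z_i'|$.

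The first step is to invoke a concentration inequality for convex, Lipschitz functions of independent bounded random variables---e.g., Talagrand's convex-distance inequality, or equivalently an entropy-method argument applied to the self-bounded statistic $F^2$. Applied to $F$, this yields a subgaussian tail of the form $\Pr\bigl(|F(Z) - \mathbb{E}F(Z)| \ge s\bigr) \le 2\exp(-s^2/2)$. Taking $s = \sqrt{2\log(1/\pardelta)}$ and dividing through by $\sqrt{n-1}$ produces the desired rate $\sqrt{2\log(1/\pardelta)/(n-1)}$ for the deviation of $\sqrt{\VarEst(Z)}$ from its own expectation $\mathbb{E}\sqrt{\VarEst(Z)}$.

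The remaining---and most delicate---step is to replace the natural centering $\mathbb{E}\sqrt{\VarEst(Z)}$ produced by the concentration inequality with the target $\sqrt{\mathbb{E}\VarEst(Z)} = \sqrt{\mathrm{Var}(Z_1)}$ appearing in the lemma. Jensen's inequality gives $\mathbb{E}\sqrt{\VarEst} \le \sqrt{\mathbb{E}\VarEst}$, so the upper tail $\sqrt{\VarEst} \le \sqrt{\mathbb{E}\VarEst} + t$ is immediate from the one-sided upper concentration bound. Jensen points the wrong way for the lower tail, and this is the main technical obstacle. I would resolve it as in Maurer--Pontil by establishing the log-Laplace estimate
\[
\log \mathbb{E}\exp\!\bigl(-\lambda\bigl[\sqrt{(n-1)\VarEst} - \sqrt{(n-1)\,\mathbb{E}\VarEst}\,\bigr]\bigr) \;\le\; \tfrac{\lambda^2}{2}\qquad(\lambda \ge 0)
\]
directly, via an entropy/tensorization argument that exploits the self-bounding structure of the sample variance together with the identity $(n-1)\VarEst = \|Z - \bar Z\mathbf{1}\|_2^2$. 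A Chernoff bound then converts this into the matching lower deviation at the target rate, avoiding any loss through the a priori gap between $\mathbb{E}\sqrt{\VarEst}$ and $\sqrt{\mathbb{E}\VarEst}$.

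A final union bound over the upper and lower one-sided events, each of probability at most $\pardelta$, yields the two-sided statement at confidence $1 - 2\pardelta$, as claimed. The hard part is the second-to-last step: obtaining the lower tail centered at $\sqrt{\mathbb{E}\VarEst}$ rather than at $\mathbb{E}\sqrt{\VarEst}$ is precisely the sharp feature of the Maurer--Pontil analysis that distinguishes it from a generic application of Talagrand's inequality.
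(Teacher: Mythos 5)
The paper does not actually prove this lemma: it is imported verbatim as Theorem~10 of Maurer and Pontil, and the citation is the entire argument. Measured against that source rather than against anything in the paper, your sketch is a faithful reconstruction: the identity $\VarEst(Z)=\tfrac{1}{\numobs-1}\sum_i(Z_i-\bar Z)^2$, the observation that Jensen makes the upper tail (deviation of $\sqrt{\VarEst(Z)}$ above $\sqrt{\EE[\VarEst(Z)]}$) essentially free while the lower tail --- centering at $\sqrt{\EE[\VarEst(Z)]}$ rather than at $\EE\sqrt{\VarEst(Z)}$ --- is the real content, and the resolution via a self-bounding/entropy-method bound are exactly the ingredients of the cited proof. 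Two caveats. First, a generic invocation of Talagrand's convex-distance inequality would not give the stated constant (it concentrates around the median and loses a factor in the exponent); to get $\sqrt{2\log(1/\pardelta)/(\numobs-1)}$ for \emph{both} tails the cleaner route is to run the self-bounding argument on $W=(\numobs-1)\VarEst(Z)$ itself and then use $\sqrt{a}-\sqrt{b}=(a-b)/(\sqrt{a}+\sqrt{b})$: the self-bounding lower tail $\Pr(W\le \EE W-t)\le e^{-t^2/(2\EE W)}$ yields $\Pr\bigl(\sqrt{\EE W}-\sqrt{W}>s\bigr)\le \Pr\bigl(\EE W-W>s\sqrt{\EE W}\bigr)\le e^{-s^2/2}$, and the companion upper tail $e^{-t^2/(2\EE W+2t/3)}$ gives the matching $e^{-s^2/2}$ after the same substitution, which also disposes of the Lipschitz-concentration step entirely. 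Second, the one-sided log-Laplace estimate you write down for the lower tail is asserted rather than derived; it is correct and is precisely what the self-bounding entropy bound delivers, but as written your argument delegates its hardest step to the reference --- which, to be fair, is no more than the paper itself does.
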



\subsection{Calculations for Example~\ref{ExaSimple}}
\label{sec:ExaSimpleCalulations}
Here we derive the bound~\eqref{eqn:simple-instance-bound}.  Letting
$\Qvaluestar$ denote the value function of the optimal policy
$\policystar$, we have
\begin{align}
 (\obsmatQ^{\policystar} - \TranMatQ^{\policystar}) \Qvaluestar
  = \begin{bmatrix} | & | \\ (\obsmatQ_{\action_1} -
    \TranMatQ_{\action_1}) \Vstar & 0 \\ | & |\end{bmatrix}.
\end{align}
Letting $\Wmat = (\Id - \contractPar \TranMatQ_{\action_1})^{-1}
(\obsmatQ_{\action_1} - \TranMatQ_{\action_1}) \Qvalues_{\policystar}$
and solving for $(\Id - \contractPar \TranMatQ^{\policystar}) \randmat
= \contractPar (\obsmatQ^{\policystar} - \TranMatQ^{\policystar})
\Qvaluestar$ gives
\begin{align}
    \randmat = \contractPar \cdot \begin{bmatrix} | & | \\ \Wmat &
      \contractPar \Wmat \\ | & |
    \end{bmatrix}.
\end{align}
Finally, a simple calculation yields
\begin{align*}
  \var(\NoisyOptOp(\Qvaluestar)(x_1,u_1)) = p(1 - p) \cdot \frac{(1 -
    \taupar)^2}{(1 - \contractPar p)^2} , \qquad
  &\var(\NoisyOptOp(\Qvaluestar)(x_2,u_1)) =
  0,\\ \var(\NoisyOptOp(\Qvaluestar)(x_1,u_2)) = 0, \quad \text{and}
  \quad &\var(\NoisyOptOp(\Qvaluestar)(x_2,u_2)) = 0.
\end{align*}
Substituting $\taupar = 1 - (1 - \contractPar)^\lampar$ yields the
claimed bound.


\section{Proof of Corollary~\ref{cors:poleval-stopping-guarantee}}
\label{sec:CI-cor-proof}

The claim of corollary~\ref{cors:poleval-stopping-guarantee} follows from Corollary~\ref{cors:poleval-stopping-guarantee2}, the 
condition~\eqref{eqn:fast-and-slow-factors-relation} and the fact that the functions $\specfast$
and $\specslow$ are lower bounded by 1.

\begin{cors}
\label{cors:poleval-stopping-guarantee2}
Given any algorithm~$\AlgoEval$ satisfying
condition~\eqref{eqn:PolEval-Algo-Cond}, target accuracy $\error$, and
tolerance probability $\deltaTol$. Let $\valuehat$ denote the output
of algorithm~\ref{EmpIRE} with input pair $(\error, \deltaTol)$. Then
the following statements hold:
\begin{enumerate}
  \item[(a)] The estimate $\valuehat$ is $\error$-accurate in the
    $\ell_\infty$-norm:
\begin{subequations}
\begin{align}
\label{eqn:corollary-CI-eqn1}
\inftynorm{\valuehat - \valuestar} \leq \error,
\end{align}
with probability exceeding $1 - \deltaTol$.

  \item[(b)] Define the non-negative integer sets
  \begin{align}
  \label{eqn:corollary-CI-eqn2}
  A &= \left\{\epoch : \epoch + \log_2(\log(4\numstates/\pardelta_\epoch)) \geq  \log_2 \left( \frac{(1 - \contractPar)^2}{\error^2} \cdot \diagnorm{\optCovMat{\valuestar}} \right) + 4 \right\}, \qquad \text{and} \notag \\
  B &= \left\{ \epoch:  \epoch + \log_2(\log(4\numstates/\pardelta_\epoch)) \geq \log_2 \left(\frac{(1 - \contractPar)^2}{\error
  \specfast^2(\pardelta_\epoch) } \cdot \left[\frac{\specslow(\pardelta_\epoch)}{4} +
   \frac{64 \bfun(\valuestar)}{1 - \contractPar} \cdot \sqrt{\log(8
    \numstates / \pardelta_\epoch)} \right] \right) \right\}.
  \end{align}
  The algorithm~\ref{EmpIRE} terminates in at most
  \begin{align}
  \label{eqn:epoch-infimum-defn}
    \epochs = \inf \left \{ A \cap B \right \}
  \end{align}
  epochs with probability exceeding $1 - \deltaTol$.

  \item[(c)] For universal constants $(c_1, c_2, c_3)$, the
    algorithm~\ref{EmpIRE} requires at most
\begin{align}
\label{eqn:corollary-CI-eqn3}
\PEnumobs \leq
\max\left\{\frac{c_1\specfast^2(\pardelta_\epochs)}{\error^2} \cdot
\diagnorm{\optCovMat{\valuestar}}, \frac{1}{\error} \left[ c_2
  \specslow(2\pardelta_\epochs) + c_3 \frac{\bfun(\valuestar)}{1 -
    \contractPar} \sqrt{\log(4 \numstates / \pardelta_\epochs)}
  \right] \right\}
\end{align}
samples with probability exceeding $1 - \deltaTol$.

\end{subequations}
\end{enumerate}
\end{cors}

\paragraph{Remarks:} Note that in instance-optimal algorithms
$\specfast(\pardelta)$ and $\specfast(\pardelta)$ are typically
logarithmic functions of $1 / \pardelta$ such as variance-reduced
policy evaluation~\cite{khamaru2020PE} or
ROOT-SA~\cite{MKWBJVariance22}, ensuring that $\epochs$ exists and our
algorithm terminates. In both equations~\eqref{eqn:corollary-CI-eqn1}
and~\eqref{eqn:corollary-CI-eqn2}, typically the first term involving
$\frac{\diagnorm{\optCovMat{\valuestar}}}{\error^2}$ is the dominant
term and establishes a sample complexity of $\bigOh\left(
\frac{\diagnorm{\optCovMat{\valuestar}}}{\error^2} \right)$.

\paragraph{Proof of Corollary~\ref{cors:poleval-stopping-guarantee2}:}

Taking for granted now that the algorithm terminates, observe that
equation~\eqref{eqn:corollary-CI-eqn1} follows from algorithm's
termination in $\epochs$ epochs, applying a union bound over
equation~\eqref{eqn:general-CI-eqn1} from
Theorem~\ref{thm:main-thm-poleval} for epochs $\epoch = 1, \ldots,
\epochs$ and equation~\eqref{eqn:general-CI-eqn2} for the final epoch
$\epochs$ yields the claim with probability exceeding $1 - \deltaTol$.

We now turn to the claim~\eqref{eqn:corollary-CI-eqn2}.  Note that the
algorithm terminates at epoch $\epoch$ only if
\begin{align*}
\vfast + \vslow = \frac{2\sqrt{6} \cdot \specfast(\pardelta_\epoch)
}{\sqrt{\PEbatch_\epoch}} \cdot \VarEst(\valuehat, \MyData{}) +
\frac{2 \specslow(\pardelta_\epoch)}{\PEbatch_\epoch} + \frac{6
  \bfun(\valuehat)}{1 - \contractPar} \cdot \frac{\sqrt{\log(8
    \numstates / \pardelta_\epoch)}}{\PEbatch_\epoch - 1} \leq \error.
\end{align*}
By equation~\eqref{eqn:general-CI-eqn2}, we have
\begin{align*}
\vfast + \vslow \leq \frac{13 \specfast(\pardelta_\epoch)}{\sqrt{\PEbatch_\epoch}} \cdot \diagnorm{\optCovMat{\valuestar}}^{\frac{1}{2}} + \frac{2 \specslow(\pardelta_\epoch)}{\PEbatch_\epoch} + \frac{33 \bfun(\valuestar)}{1 - \contractPar} \cdot \frac{\sqrt{\log(8 \numstates / \pardelta_\epoch)}}{\PEbatch_\epoch}.
\end{align*}
Consequently, the stopping condition $\vfast + \vslow \leq \error$
holds as long as
\begin{subequations}
\begin{align}
\label{eqn:cors-proof-sample-size}
\frac{13 \specfast(\pardelta_\epoch)}{\sqrt{\PEbatch_\epoch}} \cdot
\diagnorm{\optCovMat{\valuestar}}^{\frac{1}{2}} & \leq
\frac{\error}{2}, \qquad \text{and} \\ \frac{2
  \specslow(\pardelta_\epoch)}{\PEbatch_\epoch} + \frac{33
  \bfun(\valuestar)}{1 - \contractPar} \cdot \frac{\sqrt{\log(8
    \numstates / \pardelta_\epoch)}}{\PEbatch_\epoch} & \leq
\frac{\error}{2}.
\end{align}
\end{subequations}
Since $\PEbatch_\epoch = 2^\epoch \specfast^2(\pardelta_\epoch) \cdot
\frac{32 \log(4 \numstates / \pardelta_\epoch)}{(1 - \contractPar)^2}$
by definition, equation~\eqref{eqn:corollary-CI-eqn2} follows from
plugging in $\PEbatch_\epoch$ and solving for when the above
expression is less than $\error$. The termination condition comes from
$\epochs$ being the first $\epoch$ such that both conditions above
hold.

When the algorithm terminates in $\epochs$ epochs, then the total
number of samples used can be bounded as
\begin{align}
\label{eqn:total-samples-used}
\sum_{\epoch = 1}^{\epochs} (\PEbatch_\epoch + 2 \doublehold{\epoch})
&\leq c_1 \cdot \frac{2^{\epochs}}{(1 - \contractPar)^2} \cdot
\specfast^2(\pardelta_\epochs) \cdot \log(4 \numstates /
\pardelta_\epochs) \; = \; c_1 \cdot \frac{2^{\epochs + \log_2(\log(4
    \numstates / \pardelta_\epochs))}}{(1 - \contractPar)^2} \cdot
\specfast^2(\pardelta_\epochs).
\end{align}
for some universal constant $c_1$.
Here, we have used the assumption that 
the maps $\delta \mapsto \specfast(\delta)$ and 
$\delta \mapsto \specslow(\pardelta_\epoch)$ are increasing functions of $\delta$; 
consequently for all $m = 1, 2, \ldots, \epochs$, 
\begin{align*}
  \specfast(\pardelta_\epoch)
  \leq \specfast(\pardelta_\epochs)
  \qquad \text{and} \qquad
  \specslow(\pardelta_\epoch)
  \leq \specslow(\pardelta_\epochs). 
\end{align*}
Recall that of $\epochs$ is the infimum of the two sets $A$ and $B$,
defined in~\eqref{eqn:corollary-CI-eqn2}, 
thus any integer smaller than $\epochs$,
in particular $M - 1$ satisfies the following upper bound 
\begin{align*}
\epochs-1 + \log_2(\log(4 \numstates / \pardelta_{\epochs - 1})) &\leq c_2 + \max\left\{ \log_2 \left( \frac{(1 -
  \contractPar)^2 }{\error^2} \cdot \diagnorm{\optCovMat{\valuestar}} \right)
 , \right . \notag \\ & \left. \qquad \qquad 
\log_2 \left(\frac{(1 - \contractPar)^2}{\error
  \specfast^2(\pardelta_{\epochs - 1})} \cdot \left[c_3 \specslow(\pardelta_{\epochs - 1}) +
  c_4 \frac{ \bfun(\valuestar)}{1 - \contractPar} \cdot \sqrt{\log(8
    \numstates / \pardelta_{\epochs - 1})} \right] \right) 
\right\}
\end{align*}
for some universal constants $(c_2, c_3, c_4)$. Rewriting the last bound in terms 
of $\epochs$ and using the relation $\pardelta_{\epochs - 1} = 2\pardelta_{\epochs}$ we obtain
\begin{align*}
\epochs + \log_2(\log(2 \numstates / \pardelta_{\epochs})) &\leq c_2 + \max\left\{ \log_2 \left( \frac{(1 -
  \contractPar)^2 }{\error^2} \cdot \diagnorm{\optCovMat{\valuestar}} \right)
 , \right . \notag \\ & \left. \qquad \qquad 
\log_2 \left(\frac{(1 - \contractPar)^2}{\error
  \specfast^2(2\pardelta_{\epochs})} \cdot \left[c_3 \specslow(2\pardelta_{\epochs}) +
  c_4 \frac{ \bfun(\valuestar)}{1 - \contractPar} \cdot \sqrt{\log(4
    \numstates / \pardelta_{\epochs})} \right] \right) 
\right\}
\end{align*}
Substituting the last bound on 
$\epochs$ into equation~\eqref{eqn:total-samples-used} and using
the fact that $\delta \mapsto \specfast(\delta)$ is a decreasing function of
$\pardelta$  yields equation~\eqref{eqn:corollary-CI-eqn3}. This 
completes the proof of Corollary~\ref{cors:poleval-stopping-guarantee2}. 

\section{Policy Optimization: Further Details}
\label{sec:VRQ-prop-proof}
In this section, we state and prove a bound for the variance
reduced~$Q$-learning algorithm studied in
papers~\cite{wainwright2019variancereduced,khamaru2021instance}.
The goal is to to show that the variance reduced $Q$-learning
algorithm satisfies the condition in
equation~\eqref{eqn:PolOpt-algo-prop}.  Much of the content of this
section is directly borrowed from the
paper~\cite{khamaru2021instance}.  Throughout this section,
we use the shorthand $\Dim \defn \numstates \cdot |\actionset|$.

\subsection{Variance-reduced $Q$-learning}
We start by restating the variance reduced $Q$-learning algorithm 
from the paper~\cite{khamaru2021instance}. 
See the papers~\cite{khamaru2021instance,wainwright2019variancereduced} for a motivation of the algorithm. 

\paragraph{A single epoch:}
The epochs are indexed with
integers \mbox{$m = 1, 2, \ldots, \epochs$}, where $\epochs$
corresponds to the total number of epochs to be run. Each epoch $m$
requires the following four inputs:
\begin{itemize}
\item an element $\Qbar$, which is chosen to be the output of the
  previous epoch $m - 1$;
\item a positive integer $\epochlength$ denoting the number of steps
  within the given epoch;
\item a positive integer $\recentersize$ denoting the batch size used
  to calculate the Monte Carlo update:
  \begin{align}
    \label{EqnMCRecenter}
\RecenterOp(\Qbar_m) \defn \frac{1}{\recentersize} \sum_{i \in
  \recentersample} \NoisyOptOp_i(\Qbar_m).
\end{align}
\item a set of fresh operators $\{\NoisyOptOp_i\}_{i \in
  \epochsampleset}$, with \mbox{$|\epochsampleset| = \recentersize +
  \epochlength$}. The set $\epochsampleset$ is partitioned into two
  subsets having sizes $\recentersize$ and $\epochlength$,
  respectively. The first subset, of size $\recentersize$, which we
  call $\recentersample$, is used to construct the Monte Carlo
  approximation~\eqref{EqnMCRecenter}. The second subset, of size
  $\epochlength$ is used to run the $\epochlength$ steps within the
  epoch.
\end{itemize}

We summarize a single epoch in pseudocode form in
Algorithm~\ref{AlgRunEpoch}.
\begin{varalgorithm}{SingleEpoch}
\caption{\qquad RunEpoch $(\Qbar; \epochlength, \recentersize,
  \{\NoisyOptOp_i\}_{i\in\epochsampleset})$}
\label{AlgRunEpoch}
\begin{algorithmic}[1]
\STATE Given (a) Epoch length $\epochlength$, (b) Re-centering vector
$\Qbar$, (c) Re-centering batch size $\recentersize$, \mbox{(d)
  Operators} $\{\NoisyOptOp_i\}_{i \in \epochsampleset}$ \STATE Compute
the re-centering quantity
\begin{align*}
\RecenterOp(\Qbar) \defn \frac{1}{\recentersize} \sum \limits_{i \in
  \recentersample} \NoisyOptOp_i(\Qbar)
\end{align*}
\STATE Initialize $\Qvalues_1 =  \Qbar$
\FOR{$k = 1, 2, \ldots, \epochlength$}
\STATE Compute the variance-reduced update:
\begin{align*}
  \Qvalues_{k+1} &=  (1 - \stepsize_k) \Qvalues_k + \stepsize_k
\left\{\NoisyOptOp_k(\Qvalues_k) - \NoisyOptOp_k(\Qbar_m) +
\RecenterOp(\Qbar_m)\right\}, \quad \text{with stepsize } \, \stepsize_k = \frac{1}{1 + (1 - \contractPar)k}.
\end{align*}
\ENDFOR
\STATE \textbf{return} $\ele_{\epochlength + 1}$
\end{algorithmic}
\end{varalgorithm}

\paragraph{Overall algorithm:}
The overall algorithm, denoted by~\ref{AlgVR} for short, has five
inputs: (a) an initialization $\Qbar_1$, (b) an integer $\NumEpoch$,
denoting the number of epochs to be run, (c) an integer
$\epochlength$, denoting the length of each epoch, (d) a sequence of
batch sizes $\{\recentersize\}_{m=1}^\epochlength$, denoting the
number of operators used for re-centering in the $\NumEpoch$ epochs,
and (e) sample batches $\{\{\NoisyOptOp_i\}_{i \in
  \epochsampleset}\}_{m=1}^\NumEpoch$ to be used in the $\NumEpoch$
epochs.  Given these five inputs, the overall procedure can be
summarized as in Algorithm~\ref{AlgVR}.

\begin{varalgorithm}{\VRQL}
\caption{}
\label{AlgVR}
\begin{algorithmic}[1]
\STATE Given (a) Initialization $\Qbar_1$, (b) Number of epochs,
$\epochs$,  (c) Epoch length $\epochlength$,  (d)
Re-centering sample sizes $\{\recentersize\}_{m=1}^M$, (e) Sample
batches $\{\NoisyOptOp_i\}_{i \in \epochsampleset}$ for \mbox{$m = 1,
  \ldots, \epochs$}
\STATE Initialize at $\Qbar_1$ \FOR{$m = 1, 2, \ldots, \epochs$}
\STATE $\Qbar_{m+1} = \text{RunEpoch}(\Qbar_m; \epochlength,
\recentersize, \{\NoisyOptOp_i\}_{i \in \epochsampleset})$ \ENDFOR \STATE
\textbf{return} $\Qbar_{M+1}$ as final estimate
\end{algorithmic}
\end{varalgorithm}


\paragraph{Input parameters} 
Given a tolerance probability $\pardelta \in (0, 1)$ and the number of
available i.i.d.\ samples $\Qnumobs$, we run Algorithm~\ref{AlgVR}
with a total of $\epochs \defn \log
\left(\frac{\Qnumobs(1-\contractPar)^2}{8\log((16\Dims/\pardelta)
  \cdot \log \Qnumobs)}\right)$ epochs, along with the following
parameter choices: \\
\begin{subequations}\label{EqnQParam}
\texttt{Re-centering sizes:}
\begin{align}\label{EqnPERecenterSize}
\begin{split}
\Nm = c_1 \frac{4^\epoch}{(1-\contractPar)^2} \cdot
\log_4(16\epochs\Dims/\pardelta)
\end{split}
\end{align}
\texttt{Sample batches:}
\begin{align}\label{eqn:samplebatches} 
\begin{split}
 &\text{Partition the $\Qnumobs$ samples to obtain $\{\NoisyOptOp_i\}_{i \in \epochsampleset}$ for $m = 1, \ldots, \epochs$}
\end{split}
\end{align}
\texttt{Epoch length:}
\begin{align}\label{eqn:Epoch-length}
\epochlength = \frac{\Qnumobs}{2\epochs}.
\end{align}
\end{subequations}
\begin{props}
\label{prop:vrqlearn-bound}
Suppose the inputs parameters 
of Algorithm~\ref{AlgVR} are chosen according to parameter choices~\eqref{EqnQParam}, and the sample size satisfies the
lower bound $\frac{\numobs}{\NumEpoch} \geq c_1 \frac{\log(8\Dim\NumEpoch/\delta)}{(1 - \contractPar)^3}$. Then, for any initialization $\widebar{\Qvalues}_1$, the output $\widehat{\Qvalues}_{\numobs} \equiv \widebar{\Qvalues}_{\NumEpoch + 1}$
satisfies the 
\begin{align*}
\inftynorm{\qvalueshat_\numobs - \qvaluesstar} &\leq c_1 \cdot \frac{\diagnorm{\optCovMatQ{\qvaluesstar}}^{\frac{1}{2}}}{1 - \contractPar} \cdot \sqrt{\frac{\log(8 \Dims \epochs / \pardelta)}{\Qnumobs}} + c_2 \cdot \frac{\bfun(\qvaluesstar)}{1 - \contractPar} \cdot \frac{\log(8 \Dims \epochs / \pardelta)}{\Qnumobs} \\
&\qquad + c_3 \cdot \inftynorm{\qvalues_1 - \qvaluesstar} \cdot \frac{\log^2((16 \Dims \NumEpoch / \pardelta) \cdot \log \Qnumobs)}{\Qnumobs^2 (1- \contractPar)^4},
\end{align*}
with probability at least $1 - \pardelta$. Here $c_1, c_2, c_3$ are
universal constants.
\end{props}

\subsection{Proof of Proposition~\ref{prop:vrqlearn-bound}}

The proof is this similar to that of the Theorem~2 in the
paper~\cite{khamaru2021instance}; in particular, we use a
modified version of Lemma~7 from the
paper~\cite{khamaru2021instance}.

\subsubsection{Proof set-up}
We start by introducing some notation used in the
paper~\cite{khamaru2021instance}. Recall that the update
within an epoch takes the form (cf.~\ref{AlgRunEpoch})
\begin{align*}
\Qvalues_{k+1} = (1 - \stepsize_k) \Qvalues_k + \stepsize_k \left \lbrace
\NoisyOptOp_k(\Q) - \NoisyOptOp_k(\Qvaluesbar_m) + \RecenterOp(\Qvaluesbar_m) \right
\rbrace,
\end{align*}
where $\Qbar_m$ represents the input into epoch $\epoch$. We define
the shifted operators and noisy shifted operators for epoch $\epoch$
by
\begin{align}
\label{eqn:shifted-bellman}
\shiftedOp(\Qvalues) = \BellOptOp(\Qvalues) - \BellOptOp(\Qbar_m) + \RecenterOp(\Qbar_m) \quad
\text{and} \quad \NoisyShiftedOp_k(\Qvalues) = \NoisyOp_k(\Qvalues) -
\NoisyOptOp_k(\Qbar_m) + \RecenterOp(\Qbar_m).
\end{align}
Since both of the operators $\BellOptOp$ and $\NoisyOptOp_k$ are
$\contractPar$-contractive in the $\ell_\infty$-norm, the operators
$\shiftedOp$ and $\NoisyShiftedOp_k$ are also
$\contractPar$-contractive operators in the same norm.  Let
$\Qhat_\epoch$ denote the unique fixed point of the operator $\shiftedOp$.

With this set-up, it suffices to prove the following modification of
Lemma 7 from the paper~\cite{khamaru2021instance}.
\begin{lemma}
\label{lem:Instance-DependentQ-weak-BD}
Assume that $\Nm$ satisfies the bound \mbox{$ \Nm \geq \frac{c
    \log(8\Dims \epochs / \pardelta)}{(1 - \contractPar)^2}$}. Then we
have
\begin{align*}
\inftynorm{\qvalueshat_\epoch - \qvaluesstar} \leq
\frac{\inftynorm{\qvaluesbar_\epoch - \qvaluesstar}}{33} + c_4 \left\{
\frac{\diagnorm{\optCovMatQ{\qvaluesstar}}^{\frac{1}{2}}}{1 -
  \contractPar} \cdot \sqrt{\frac{\log(8 \Dims \epochs /
    \pardelta)}{\Qrecentersize}} + \frac{\bfun(\qvaluesstar)}{1 -
  \contractPar} \cdot \frac{\log(8 \Dims \epochs /
  \pardelta)}{\Qrecentersize} \right\},
\end{align*}
with probability at least $1 - \frac{\pardelta}{2\epochs}$.
\end{lemma}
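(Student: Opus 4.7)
The plan is to exploit the Banach contraction structure of the shifted Bellman operator to reduce the lemma to a one-step noise bound, and then split that noise into an instance-dependent Bernstein piece and a Lipschitz cross piece that the lower bound on $\Nm$ turns into a contraction factor.

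\textbf{Reduction via contraction.} Since $\shiftedOp$ is $\contractPar$-contractive in $\ell_\infty$ with unique fixed point $\qvalueshat_\epoch$, the standard Banach estimate (iterating $\shiftedOp$ from $\qvaluesstar$ and summing a geometric series) gives $\inftynorm{\qvalueshat_\epoch - \qvaluesstar} \leq (1 - \contractPar)^{-1} \inftynorm{\shiftedOp(\qvaluesstar) - \qvaluesstar}$. Plugging in the definition~\eqref{eqn:shifted-bellman} of $\shiftedOp$ and using $\BellOptOp(\qvaluesstar) = \qvaluesstar$ collapses this residual to $\shiftedOp(\qvaluesstar) - \qvaluesstar = \RecenterOp(\qvaluesbar_\epoch) - \BellOptOp(\qvaluesbar_\epoch)$. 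It therefore suffices to bound this Monte Carlo residual in $\ell_\infty$ by the sum of a Bernstein term and $\tfrac{1 - \contractPar}{33} \inftynorm{\qvaluesbar_\epoch - \qvaluesstar}$.

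\textbf{Decoupling and Bernstein for the main term.} I would add and subtract $\RecenterOp(\qvaluesstar) - \BellOptOp(\qvaluesstar)$ to obtain $\RecenterOp(\qvaluesbar_\epoch) - \BellOptOp(\qvaluesbar_\epoch) = Y + Z$, where $Y \defn \RecenterOp(\qvaluesstar) - \BellOptOp(\qvaluesstar)$ and $Z$ is the corresponding difference of centered increments evaluated at $\qvaluesbar_\epoch$ versus $\qvaluesstar$. At each state-action pair, $Y(\state, \action)$ is an average of $\Nm$ i.i.d.\ centered random variables, bounded by $\bfun(\qvaluesstar)$ in magnitude and with variance equal to the $(\state, \action)$-diagonal of $\optCovMatQ{\qvaluesstar}$. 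Applying Bernstein's inequality per coordinate and taking a union bound over $\Dims$ coordinates yields $\inftynorm{Y} \lesssim \diagnorm{\optCovMatQ{\qvaluesstar}}^{1/2}\sqrt{\log(8\Dims\epochs/\pardelta)/\Nm} + \bfun(\qvaluesstar) \log(8\Dims\epochs/\pardelta)/\Nm$ with probability at least $1 - \pardelta/(4\epochs)$, and dividing through by $1 - \contractPar$ produces exactly the two instance-dependent terms claimed in the lemma.

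\textbf{Cross term absorbed by contraction.} For $Z$, each single-sample increment is deterministically bounded by $2 \contractPar \inftynorm{\qvaluesbar_\epoch - \qvaluesstar}$, since both $\NoisyOptOp_i$ and $\BellOptOp$ are $\contractPar$-Lipschitz in the $\ell_\infty$-norm (each depends on its argument $\Qvalues$ only through $\max_{\action'} \Qvalues(\state', \action')$). Hoeffding's inequality and a union bound over the $\Dims$ coordinates then yield $\inftynorm{Z} \lesssim \contractPar \inftynorm{\qvaluesbar_\epoch - \qvaluesstar} \sqrt{\log(8\Dims\epochs/\pardelta)/\Nm}$ with probability at least $1 - \pardelta/(4\epochs)$. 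The hypothesis $\Nm \geq c \log(8\Dims\epochs/\pardelta)/(1 - \contractPar)^2$, with $c$ chosen at least $33^2$ times the absolute constant appearing in this Hoeffding bound, absorbs the $\sqrt{\log / \Nm}$ factor and forces $\inftynorm{Z}/(1 - \contractPar) \leq \inftynorm{\qvaluesbar_\epoch - \qvaluesstar}/33$. Combining the bounds on $Y$ and $Z$ via a final union bound closes the argument. The main delicate point is this constant tracking: producing the explicit factor $1/33$ forces the constant $c$ in the hypothesis on $\Nm$ to be chosen quite large, whereas all remaining steps are a routine contraction-plus-concentration computation.
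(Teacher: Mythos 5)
Your proposal is correct and follows essentially the same architecture as the paper's proof: reduce to bounding the Monte Carlo residual $\RecenterOp(\qvaluesbar_\epoch) - \BellOptOp(\qvaluesbar_\epoch)$, split it into the centered Bernstein term evaluated at $\qvaluesstar$ plus a Lipschitz cross term, and use the lower bound on $\Nm$ to absorb the latter into the $1/33$ contraction factor. The only difference is cosmetic: you obtain the intermediate inequality $\inftynorm{\qvalueshat_\epoch - \qvaluesstar} \leq (1-\contractPar)^{-1}\inftynorm{\shiftedOp(\qvaluesstar) - \qvaluesstar}$ via the generic Banach fixed-point perturbation bound, whereas the paper derives the identical inequality (with the identical constant) through a two-sided greedy-policy sandwich using the nonnegativity of $(\Id - \contractPar\TranMatQ^{\pi})^{-1}$.
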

Indeed, the proof of Proposition~\ref{prop:vrqlearn-bound} follows 
directly from the proof of Theorem 2 in the paper~\cite{khamaru2021instance} by replacing Lemma 7 by
Lemma~\ref{lem:Instance-DependentQ-weak-BD}. 
\subsubsection{Proof of Lemma~\ref{lem:Instance-DependentQ-weak-BD}}
In order to simplify notation, we drop the epoch number $\epoch$ from
$\qvalueshat_\epoch$ and $\qvaluesbar_\epoch$ throughout the remainder
of the proof.  Let $\pihat$ and $\pistar$ denote the greedy policies
with respect to the $Q$ functions $\qvalueshat$ and $\qvaluesstar$,
respectively. Concretely,
\begin{align}
  \pistar(\state) = \arg\max_{\action \in \actionset} \;\;
  \qvaluesstar(\state, \action) \qquad \pihat(\state) = \arg\max_{\action
    \in \actionset} \;\; \qvalueshat(\state, \action).
\end{align}
Ties in the $\arg\max$ are broken by choosing the
action~$\action$ with smallest index.

By the optimality of the policies $\pihat$ and $\pistar$ for the
$Q$-functions $\qvalueshat$ and $\qvaluesstar$, respectively, we have
\begin{gather}
\label{eqn:Qhat-Qstar-fixed-pt-eqn}
  \Qstar = \rewardQ + \contractPar \TranMatQ^{\pistar} \Qstar \quad
  \text{and} \quad \Qhat = \rewardtilQ + \contractPar
  \TranMatQ^{\pihat} \Qhat, \quad \text{where} \quad
    \rewardtilQ \defn \rewardQ + \RecenterOp(\Qbar) - \BellOptOp(\Qbar).
\end{gather}
Our proof is based on the following intermediate 
inequality which we prove at the end of this section. 
\begin{align}
\label{eqn:fixedpoint-gap-bound}
\inftynorm{\qvalueshat - \qvaluesstar} \leq \frac{1}{1 - \contractPar} \inftynorm{\rewardtilQ - \rewardQ}.
\end{align}
With the last inequality at hand it suffices to prove an upper bound on the term $\inftynorm{\rewardQ - \rewardQ}$.

Recall the definition \mbox{$\rewardtilQ \defn \rewardhatQ +
  \contractPar (\obsmathatQ^{\pibar} - \TranMatQ^{\pibar}) \Qbar$,}
where $\pibar$ a policy greedy with respect to $\Qbar$; that is,
$\pibar(\state) = \arg \max_{\action \in \actionset} \Qbar(\state,
\action)$, where we break ties in the $\arg\max$ by choosing the
action~$\action$ with smallest index. We have
\begin{align*}
\inftynorm{ \rewardtilQ - \rewardQ} &\leq \inftynorm{ (\rewardhatQ -
  \rewardQ) + \contractPar(\obsmathatQ^{\pistar} -
  \TranMatQ^{\pistar}) \Qstar } + \contractPar \inftynorm{
  (\obsmathatQ^{\pibar} \Qbar - \obsmathatQ^{\pistar} \Qstar) -
  (\TranMatQ^{\pibar} \Qbar - \TranMatQ^{\pistar} \Qstar)}.
\end{align*}

Observe that the random variable $\rewardhatQ$ and $\obsmathatQ$ are
averages of $\Nm$ i.i.d.~random variables $\{\NoisyReward_i\}$ and
$\{\obsmatQ_i \}$, respectively. Consequently,
applying Bernstein bound 
along with a union bound we have the following 
bound with probability 
least $1 - \frac{\pardelta}{4
  \epochs}$:
\begin{multline*}
\inftynorm{ (\rewardhatQ - \rewardQ) + \contractPar(\obsmathat^{\pistar}
  - \TranMatQ^{\pistar}) \Qstar } \leq \frac{4}{\sqrt{\Nm}} \cdot
\diagnorm{\optCovMatQ{\qvaluesstar}}^{\frac{1}{2}} \cdot \sqrt{\log(8\Dim\NumEpoch/\delta)}
\\
+ \frac{4 \bfun(\qvaluesstar)}{(1 - \contractPar) \Nm} \cdot
\log(8\Dim\NumEpoch/\delta).
\end{multline*}
Finally, for each state-action pair $(\state, \action)$ the random
variable $(\obsmathat^{\pibar} \Qbar - \obsmathat^{\pistar}
\Qstar)(\state, \action)$ has expectation $(\TranMatQ^{\pibar} \Qbar -
\TranMatQ^{\pistar})(\state, \action)$ with entries uniformly bounded
by $2 \inftynorm{\Qbar - \Qstar}$. Consequently, by a standard
application of Hoeffding's inequality combined with the lower bound
$\Nm \geq c_3 \frac{4^\epoch}{(1 - \contractPar)^2} \log(8\Dim\NumEpoch/\delta)$, we have
\begin{align*}
\frac{\contractPar}{1 - \contractPar} \cdot
\inftynorm{(\obsmathat^{\pibar} \Qbar - \obsmathat^{\pistar} \Qstar) -
  (\TranMatQ^{\pibar} \Qbar - \TranMatQ^{\pistar} \Qstar)} \leq
\frac{\inftynorm{\Qbar - \Qstar}}{33},
\end{align*}
with probability at least $1 - \frac{\pardelta}{4\epochs}$. The
statement of Lemma~\ref{lem:Instance-DependentQ-weak-BD} then follows
from combining these two high-probability statements with a union
bound.  It remains to prove the
claim~\eqref{eqn:fixedpoint-gap-bound}.

\paragraph{Proof of equation~\eqref{eqn:fixedpoint-gap-bound}:}

By optimality of the policies $\pihat$ and $\pistar$ for the
$Q$-functions $\Qhat$ and $\Qstar$, respectively, we have
\begin{align}
  \Qstar = \rewardQ + \contractPar \TranMatQ^{\pistar} \Qstar \geqEle
  \rewardQ + \contractPar \TranMatQ^{\pistar} \Qhat \quad \text{and}
  \quad \Qhat = \rewardtilQ + \contractPar \TranMatQ^{\pihat} \Qhat
  \geqEle \rewardtilQ + \contractPar \TranMatQ^{\pihat} \Qstar.
\end{align}
Thus, we have 
\begin{align}
\label{eqn:q-gap-one}
  \Qstar - \Qhat &= \rewardQ - \rewardtilQ +
  \contractPar(\TranMatQ^{\pistar} \Qstar - \TranMatQ^{\pihat} \Qhat)
  \leqEle \rewardQ - \rewardtilQ + \contractPar\TranMatQ^{\pistar}
  (\Qstar - \Qhat).
\end{align}
Rearranging the last inequality, and using the non-negativity of the
entries of $(\Id - \contractPar\TranMatQ^{\pistar})^{-1}$ we conclude
\begin{align*}
  (\Qstar - \Qhat) \leqEle (\Id -
  \contractPar\TranMatQ^{\pistar})^{-1}(\rewardQ - \rewardtilQ).
\end{align*}
This completes the proof of the bound; a similar argument gives
\begin{align*}
(\Qhat - \Qstar) \leqEle (\Id - \contractPar \TranMatQ^{\pihat})^{-1}(\rewardQ - \rewardtilQ).
\end{align*}
Collecting the two bounds, we have
\begin{align*}
| \Qhat - \Qstar| \leqEle \max\left\{(\Id -
  \contractPar\TranMatQ^{\pistar})^{-1}(\rewardQ - \rewardtilQ), (\Id - \contractPar \TranMatQ^{\pihat})^{-1}(\rewardQ - \rewardtilQ) \right\}, 
\end{align*}
where $\max$ denotes the entry-wise maximum. The desired then follows from the fact
the bound $\| (\Id - \contractPar \TranMatQ^{\pi})^{-1} \|_{1, \infty} \leq \frac{1}{1 - \contractPar}$ for any policy $\pi$. This completes the proof. 
\end{document}